\documentclass{article}

\newcommand{\couplingset}{\Pi}

\newcommand{\E}{\mathbb{E}}

\newcommand{\xbf}{\mathbf{x}}

\newcommand{\ybf}{\mathbf{y}}

\def\msgw{\text{max-$SGW_p$}}



\usepackage{etoolbox}
\makeatletter

\usepackage{ragged2e}







\usepackage{arxiv}

\usepackage[utf8]{inputenc} 
\usepackage[T1]{fontenc}    
\usepackage{hyperref}       
\usepackage{url}            
\usepackage{booktabs}       
\usepackage{amsfonts}       
\usepackage{nicefrac}       
\usepackage{microtype}      
\usepackage{subfigure}
\usepackage{placeins}
\usepackage{tikz}
\usetikzlibrary{shapes.geometric, arrows}%
\usepackage{amsmath, tikz-cd}
\usepackage{amssymb}
\usepackage{graphicx}
\usepackage{xcolor}         
\usepackage{mathrsfs}
\usepackage{commath}
\usepackage{accents}
\usepackage{natbib}
\usepackage{mathtools}
\usepackage[bb=dsserif]{mathalpha}
\usepackage{bm}
\usepackage[shortlabels]{enumitem}
\usepackage{xparse}
\usepackage{dsfont}
\usepackage{bbm}
\usepackage{caption}
\usepackage{subcaption}
\usepackage{float}
\usepackage{threeparttable}
\usepackage{booktabs}
\usepackage{multirow}
\usepackage{pifont}
\usetikzlibrary{calc}
\usepackage{algorithm, algorithmic}
\usepackage{wrapfig}
\usepackage{fontawesome}


\usepackage{amsthm}
\usepackage{subcaption}
\DeclareMathOperator*{\argmin}{argmin}

\usepackage[capitalize,noabbrev]{cleveref}

\theoremstyle{plain}
\newtheorem{theorem}{Theorem}[section]
\newtheorem{proposition}[theorem]{Proposition}
\newtheorem{lemma}[theorem]{Lemma}

\theoremstyle{definition}
\newtheorem{definition}[theorem]{Definition}

\theoremstyle{remark}
\newtheorem{remark}[theorem]{Remark}

\usepackage[textsize=tiny]{todonotes}

\title{Relation-Aware Slicing in Cross-Domain Alignment}

\author{
 Dhruv Sarkar \\
  Indian Institute of Technology, Kharagpur\\
   \And
 Aprameyo Chakrabartty \\
  Indian Institute of Technology, Kharagpur \\
  \And
 Anish Chakrabarty \\
  Theoretical Statistics and Mathematics Unit\\
  Indian Statistical Institute, Kolkata\\
  \And
 Swagatam Das \\
  Electronics and Communication Sciences Unit\\
  Indian Statistical Institute, Kolkata\\
}



\begin{document}
\maketitle

\begin{abstract}
The Sliced Gromov-Wasserstein (SGW) distance, aiming to relieve the computational cost of solving a non-convex quadratic program that is the Gromov-Wasserstein distance, utilizes projecting directions sampled uniformly from unit hyperspheres. This slicing mechanism incurs unnecessary computational costs due to uninformative directions, which also affects the representative power of the distance. However, finding a more appropriate distribution over the projecting directions (\textit{slicing distribution}) is often an optimization problem in itself that comes with its own computational cost. In addition, with more intricate distributions, the sampling itself may be expensive. As a remedy, we propose an optimization-free slicing distribution that provides fast sampling for the Monte Carlo approximation. We do so by introducing the Relation-Aware Projecting Direction (RAPD), effectively capturing the pairwise association of each of two pairs of random vectors, each following their ambient law. This enables us to derive the Relation-Aware Slicing Distribution (RASD), a location-scale law corresponding to sampled RAPDs. Finally, we introduce the RASGW distance and its variants, e.g., IWRASGW (Importance Weighted RASGW), which overcome the shortcomings experienced by SGW. We theoretically analyze its properties and substantiate its empirical prowess using extensive experiments on various alignment tasks.
\end{abstract}

\section{Introduction}
\label{submission}

Optimal Transport (OT) theory serves as a cornerstone of contemporary machine learning. A notable outcome of OT theory is the Wasserstein distance (WD), widely used in various learning applications \citep{khamis2024scalable}. Metrizing weak convergence in a compact metric space, WD offers a strong measure of discrepancy between probability distributions of the same dimensionality. However, it is the associated computational complexity that depreciates its appeal. Given $n$ replicates from a $d$-variate distribution, calculating the empirical WD incurs a space complexity of $\mathcal{O}(n^2)$. Moreover, the associated rate of convergence cannot escape the curse of dimensionality ($\mathcal{O}(n^{-\frac{1}{d}})$) if the underlying distributions have arbitrary smoothness. Among the several remedies proposed \citep{cuturi2013sinkhorn, altschuler2017near, guo2020fast}, it is the Sliced Wasserstein Distance (SWD) \citep{bonneel2015sliced} that achieves linear space complexity without sacrificing statistical or topological properties \citep{nadjahi2020statistical}. The underlying trick involves linearly projecting observations onto $\mathbb{R}$, which renders the remaining transportation simply sorting the data points and matching. The projected distributions also incur a sample complexity of $\mathcal{O}(n^{-\frac{1}{2}})$ under WD, making SWD impervious to the dimensionality curse. 

The correspondence problem becomes challenging if the two underlying distributions originate from distinct metric spaces. This is due to the notion of mass transportation failing in the absence of a common medium of comparison between the spaces. Invoking a $L^{p}$-relaxation to the well-known Gromov-Hausdorff metric, the Gromov-Wasserstein distance (GW) \citep{memoli2011gromov} overcomes this issue. Instead of calculating the average cost of transportation, it compares the geometry of the underlying spaces. Precisely, it is the expected departure from isometry between them. While theoretically rich, computing the GW distance runs into roadblocks more demanding than WD. Typically, it requires solving a quadratic assignment problem (QAP) that is in general
NP-hard to compute. To circumvent this, one may approximately calculate a linear lower bound as proposed by \citet{chowdhury2019gromov}. So far, the most reliable solution to the problem remains Entropic regularization \citep{scetbon2022linear, rioux2024entropic}. It follows earlier estimation techniques using regularized versions of entropy, which can be solved more efficiently by repeatedly employing Sinkhorn projections \citep{solomon2016entropic,peyre2016gromov}. However, among the many methods inspired by similar techniques in OT, the most attractive in terms of ease of implementation and theoretical elegance is still \textit{slicing}. The Sliced GW (SGW) \citep{vayer2019sliced} again uses Radon transforms to project the measures to univariate distributions, followed by computing the GW-distance between them. Taking the expectation over all slices, it emerges as a proxy for GW.

Regardless, the slicing has its drawbacks. Firstly, as also observed in WD, the complexity of the projections may increase exponentially with the data dimension (e.g., LIPO for Max-slicing \citep{nietert2022statistical}). Moreover, the outer expectation over the slices typically becomes intractable \citep{vayer2019sliced} and has to be replaced by a Monte-Carlo (MC) approximate. The approximation is done by sampling uniformly from the unit hypersphere ($\mathbb{S}^{d-1}$). However, this can be highly computationally inefficient since randomly sampled directions may include highly \textit{uninformative} ones. As GW encapsulates the similarity between pairwise distances from the underlying spaces, the geometric structure must remain intact relative to one another after projections. This is imperative to a proxy distance reflecting the same properties as GW. Clearly, there exist projecting directions that perturb pairwise distances (see Section \ref{subsec:intuition}) and, as a result, distort the alignment of cross-domain observations. We only focus on the directions $\Theta \subset \mathbb{S}^{d-1}$ that maintain the geometric association between the two spaces.  

We prioritize the concentration of such informative \textit{relation-aware} projecting directions while designing the slicing distribution. They are defined as the directions that capture the intra-pair distance of each pair sampled from the respective input measures. More specifically, we take the unit bisecting vector of the displacement vectors of each pair. Based on what we call the \textit{Relation-aware Slicing Distribution} (RASD), we introduce a novel and efficient slicing technique into cross-domain alignment metrics such as GW.
We show that sampling from the same is also quite simple as it involves no additional optimization and only relies on input samples from both input distributions, which are assumed to be easy to sample from in most machine learning (ML) tasks.

\vspace{-8pt}
\section{Related Works}
To better appreciate our propositions, let us first go through the ever-expanding list of slicing techniques addressing diverse objectives, such as minimizing slicing complexity or projection variability. Notably, GSW \citep{kolouri2019generalized} and ASW \citep{chen2020augmented} extend the approach to non-linear projections to represent complex associations between distributions. In search of the most efficient slice in terms of information, \citet{deshpande2019max} and \citet{nguyen2021distributional} propose Max-SW and DSW distances, respectively. Most of such methods involve precise optimizations for finding a class of suitable directions. For example, the HSW \citep{nguyen2023hierarchical} undergoes a selection of \textit{bottleneck} slices that linearly combine to simulate projections. HHSW \citep{nguyen2024hierarchical} follows a similar combining mechanism, however, non-linearly. At the end of parameter-free techniques, EBSW \citep{nguyen2023energy} relies on slicing distributions whose density is proportional to an energy
function of the projected univariate WD. 

We emphasize that all such methods originate to cater to the OT problem, out of which only the vanilla slicing \citep{bonneel2015sliced} (similarly, Max-SW) is adopted for cross-domain alignment to date. The first obstacle in doing so is the involvement of distributions from disparate spaces, which immediately renders a projection suitable for one measure useless for the other. Secondly, the alignment problem being fundamentally different from transportation makes the notion of \textit{informativeness} of a projecting direction incompatible in the case of GW. Keeping in mind the cross-domain problem, where preserving the geometry is key, we redefine what it means to be `informative' based on relation-awareness. The work that relates to our approach the best is RPSW \citep{nguyen2024slicedwassersteinrandompathprojecting}, which facilitates Monte-Carlo sampling by introducing normalized differences between replicates from the two distributions while choosing slices. 

\textbf{Contributions.} In summary, our contributions are as follows.
\begin{itemize}[leftmargin=*, topsep=0pt]
    \item We introduce a novel optimization-free slicing mechanism based on what we call the \textit{relation aware} projecting directions (RAPD). This involves randomly perturbing the bisecting unit vector of the displacement between sample pairs from different input measures. Based on RAPD, we define the relation-aware slicing distribution (RASD), from which drawing random projecting sample directions is both easy and efficient.
    \item Based on RA slices, we develop two new variants of Gromov-Wasserstein distances that preserve metric topological and statistical properties of GW besides overcoming computational drawbacks. The first is termed \textit{relation aware sliced Gromov-Wasserstein} (RASGW), which replaces uniform sampling in SGW by RASD. The second variant, \textit{importance-weighted relation aware sliced GW} (IWRASGW), is defined as the expectation of a weighted ensemble of several randomly projected distances using RAPDs. We explore the theoretical attributes of both RASGW and IWRASGW, including topological, statistical, and computational properties. More specifically, we assess the metric properties, their relation to other SGW variants, incurred sample complexity, and computational complexities when utilizing Monte Carlo techniques.
    \item Comparing the performance of RASGW and IWRASGW with other slicing mechanisms poses new challenges. This is primarily due to the absence of existing benchmarks in cross-domain alignment tasks. In a first, we adapt several OT projections, such as SW, Max-SW, DSW, RPSW, and EBSW in the GW framework. We empirically show that in generative tasks involving unalike spaces, RASGW and IWRASGW surpass the competition on diverse datasets. In the process, we modify Gromov-Wasserstein GANs \citep{bunne2019learning} and Gromov-Wasserstein Autoencoders \citep{nakagawa2023gromovwasserstein} using our metrics, which reveal that our metric is indeed superior in capturing the relational aspects between datasets at minimal computational overhead.
\end{itemize}

We defer the proofs of key results and additional experimental details to the Appendix. All codes for our experiments, along with execution instructions, are placed in this \href{https://anonymous.4open.science/r/Relation-Aware-Slicing-in-Cross-Domain-Alignment-887A}{repository}\footnote{\faFileCodeO~\scriptsize{\url{https://anonymous.4open.science/r/Relation-Aware-Slicing-in-Cross-Domain-Alignment-887A}}}.

\vspace{-4pt}
\section{Background}
\label{sec:preliminaries}
In this section, we review the preliminary concepts needed to develop our framework, namely the Gromov-Wasserstein distance, the sliced Gromov-Wasserstein distance, and also elucidate the idea of relational discrepancy. 

\textbf{Notations.}  We denote by $\mathcal{P}(\mathcal{X})$ the set of all probability measures supported on a space $\mathcal{X}$. For $p\geq 1$, $\mathcal{P}_p(\mathcal{X})$ denotes the set of all probability that have finite $p$-moments. Given $d \geq 2$, we denote $\mathbb{S}^{d-1}:=\{\theta \in \mathbb{R}^{d}\mid  ||\theta||_2^2 =1\}$ and $\mathcal{U}(\mathbb{S}^{d-1})$ as the unit hyper-sphere and the uniform law supported on it. For any two sequences $a_{n}$ and $b_{n}$, the notation $a_{n} = \mathcal{O}(b_{n})$ means that $a_{n} \leq C b_{n}$ for all $n \geq 1$, where $C$ is some universal constant.

\textbf{Gromov-Wasserstein Distance.}
Let $(\mathcal{X}, c_{X}, \mu)$ and $(\mathcal{Y}, c_{Y}, \nu)$ be two metric measure (mm) spaces, where $c_X$ and $c_Y$ are the endowed metrics. For simplicity, we consider without loss of generality that both $\mathcal{X}$ and $\mathcal{Y}$ are Polish. We sparingly refer to the mm spaces as $X$ and $Y$, respectively. Let $\Pi(\mu, \nu) = \{\pi \in \mathcal{P}(\mathcal{X} \times \mathcal{Y}) : \pi(\cdot \times \mathcal{Y}) = \mu, \pi(\mathcal{X} \times \cdot) = \nu\}$ be the set of all couplings between $\mu$ and $\nu$. Then the $p$-GW distance \citep{memoli2011gromov} between the two mm spaces is defined as follows.
\vspace{2pt}
\begin{align}
\label{gw}
\text{GW}_{p}^{p}(\mu,\nu) =  \underset{\pi \in \couplingset}{\inf} \E_{\pi \otimes \pi}[c_p(\xbf,\xbf',\ybf,\ybf')], 
\end{align}
where $c_p(\xbf,\xbf',\ybf,\ybf') = |c_{X}(\xbf,\xbf')-c_{Y}(\ybf,\ybf')|^{p}$ denotes the realized distortion given samples $(\xbf,\ybf),(\xbf',\ybf') \sim \mu \otimes \nu$.

\textbf{Sliced Gromov-Wasserstein Distance.} It is essentially defined as the average of several GW distances between $\mu$ and $\nu$ once they are projected onto the space of univariate measures \citep{vayer2019sliced}. In the population regime, $\text{SGW}_p^p(\mu,\nu)  =  \mathbb{E}_{ \theta \sim \mathcal{U}(\mathbb{S}^{d-1})} [\text{GW}_p^p (\theta \sharp \mu,\theta \sharp \nu)]$, where $\theta \sharp \mu$ is the push-forward of the measure $\mu$ through the function $f_{\theta}:\mathbb{R}^{d} \to \mathbb{R}$. Specifically, one employs the linear projection $f_{\theta}(x) = \theta^\top x$. However, the expectation in the definition of the SGW distance is intractable to compute. Therefore, the Monte Carlo scheme is employed to approximate the value: $\widehat{\text{SGW}_{p}^p}(\mu,\nu;M) = \frac{1}{M} \sum_{l=1}^M\text{GW}_p^p (\theta_l \sharp \mu,\theta_l \sharp \nu)$, where $\theta_{1},\ldots,\theta_{M} \overset{i.i.d}{\sim}\mathcal{U}(\mathbb{S}^{d-1})$ and are referred to as \textit{projecting directions}. The pushfoward measures $\theta_1\sharp \mu,\ldots, \theta_M \sharp \mu$ are called projections of $\mu$ (similarly for $\nu$). The number of Monte Carlo samples $M$ is often referred to as the number of projections. When $\mu$ and $\nu$ are discrete measures that have at most $n$ supports (sample problem), the best achievable time complexity and memory complexity of the SGW are $\mathcal{O}(Mn\log n+Mdn)$ and $\mathcal{O}(Md+Mn)$ respectively \citep{vayer2019sliced}. We elaborate on the same in the Appendix \ref{sec:algorithm}. Evidently, to get a fair estimate, $M$ should be large enough compared to $d$. Employing instead the accumulation operation $\max$, one may define the max-sliced Gromov Wasserstein (Max-SGW) distance between $\mu, \nu \in \mathcal{P}_p(\mathbb{R}^d)$, given as $\text{Max-SGW}_p(\mu,\nu)  = \max_{\theta \in \mathbb{S}^{d-1}} \text{GW}_p (\theta \sharp \mu,\theta \sharp \nu)$. Since most slices from either space turn out non-informative, obtaining a suitable coupling becomes a challenge in Max-SGW.

\textbf{On Relational Awareness.} In cross-domain alignment, pairwise distances between observations is key to defining any measure of discrepancy. Preserving this information is crucial for ensuring that the intrinsic structure of one population ($\mu$) is maintained during translation, matching or transform sampling \citep{hur2024reversible} from the other ($\nu$). Given samples $\{(\xbf_{i},\ybf_{j})\}^{m,n} \sim \mu \otimes \nu$, we refer to the distribution of pairwise distances $c_{X}(\xbf_{i}, \xbf_{i'})$, $i,i' = 1, \cdots, m$ (similarly for $c_{X}(\ybf_{j}, \ybf_{j'})$) as \textit{relational information}. Collectively, they characterize the relational structure (local geometry) of the ambient space. The extent of departure from perfect alignment (i.e. isometry) $c_{p} = |c_{X}(\xbf_{i}, \xbf_{i'})-c_{Y}(\ybf_{j}, \ybf_{j'})|^{p}$ embodies the association between such structures. Unlike the case $\mathcal{X} = \mathbb{R}^d$ and $\mathcal{Y} = \mathbb{R}^{d'}$, it is quite difficult to characterize the concentration of sample distortions between arbitrary spaces, e.g., networks. In general, given an optimal coupling, we refer to the same as \textit{relational discrepancy}.
 
In this context, we identify a crucial property a projection ($f_{\theta}$) must possess during slicing. Ideally, it should preserve the relational information of one space with respect to the other. As such, any perturbation due to projection must reflect symmetrically on pairwise distances from both spaces. We call this the \textit{relational awareness} of a slice $\theta$. Additionally, in Appendix \ref{more_preliminaries} we provide further background on OT and GW. Appendix \ref{sec:additional_baselines} is dedicated to establishing some baselines extending existing slicing methods in OT to GW, besides SGW and Max-SGW.
\vspace{-8pt}
\section{Relation-Aware Sliced Gromov Wasserstein}
\label{sec:RASGW}
We first introduce RAPD as candidate directions having relational awareness, which in turn invoke RASD in Section~\ref{subsec:rpd}. We then discuss the Relation-Aware Sliced Gromov Wasserstein variants in Section~\ref{subsec:RASGW}.

\begin{wrapfigure}[20]{r}{0.55\textwidth}
    \centering
    \vspace{-1.2\intextsep}
    \includegraphics[width=\linewidth]{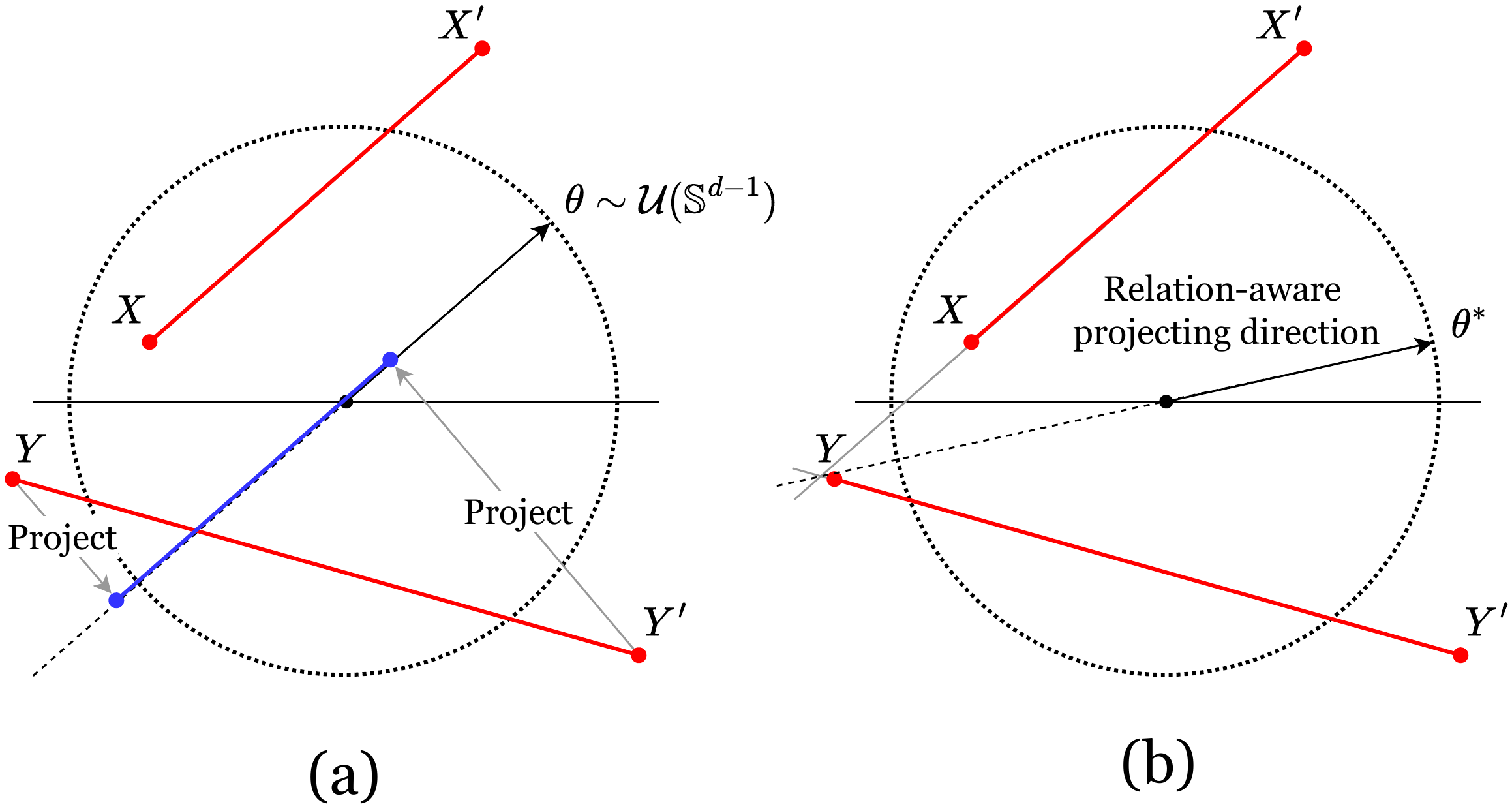}
    \caption{(a) For any $\theta \sim \mathcal{U}(\mathbb{S}^{d-1})$, drawn uniformly randomly, it may so happen that $X-X' // \:\theta$, where $X,X' \sim \mu$. While this preserves the distance $||X-X'||$ post-projection, slicing $Y-Y'$ along the same direction (marked in \textcolor{blue}{blue}) distorts the relational information. (b) In the absence of specific scaling for each pair $(X,X')$ (or $(Y,Y')$), $\theta^{*}$ is the only direction that has the same effect on both pairwise distances during slicing.}
    \label{fig:RAPD}
\end{wrapfigure}

\label{subsec:intuition}
\textbf{Motivation.} Note that, while a dimensionality reduction of a $d$-variate distribution inevitably leads to a loss of information, the relational information can still be recovered. However, given two participating distributions $\mu, \nu$, slicing directions also must ensure that relational discrepancy is kept intact to the greatest extent. On the other hand, for $\theta \sim \mathcal{U}(\mathbb{S}^{d-1})$, we have $\mathbb{P}\big(\theta^{T}(X-X') \leq \epsilon\big) \geq 1- e^{-d\epsilon^{2}}$, given any $X,X' \sim \mu$ \citep{kolouri2019generalized}. As such, slicing directions chosen to preserve relational information in one space may always distort the same corresponding to the other domain. The degradation of relational discrepancy, as a result, occurs exponentially with $d$ if slices are chosen uniformly at random.


    
    
    
    

Searching for remedy, we identify only a handful of effective directions that become relation-aware. To illustrate our construction, let us consider a simple example. Assume both spaces to be $\mathbb{R}^d$, endowed with the Euclidean norm. As depicted in Figure \ref{fig:RAPD}(a), a set of observations $(X,Y), (X',Y') \sim \mu \otimes \nu$ may experience unequal perturbation in intra-pair distances after projection based on an arbitrary $\theta$. Those are rendered uninformative. While rescaling is a remedy for such unwanted amplifications, it immediately makes the slicing mechanism dependent on parameters. A parameter-free approach demands a $\theta$ that leaves a symmetric impression on $X-X'$ and $Y-Y'$.

To that end, let us consider a bisecting direction $\theta^{*}$ at an angle $\alpha_1$ with $\overrightarrow{XX'}$ and $\alpha_2$ with $\overrightarrow{YY'}$. Thus,
\begin{align*}
    c_{p}^{\frac{1}{p}} = \abs{||\theta^\top (X-X')|| - ||\theta^\top (Y-Y')||} = \abs{||X-X'||\cos{\alpha_1} - ||Y-Y'||\cos{\alpha_2}}.
\end{align*}
As such, we observe distorted relational discrepancies given an arbitrary choice of $(\alpha_1,\alpha_2)$, which may lead to a suboptimal coupling $\pi$ different from the intended. Only by selecting $\alpha_1 = \alpha_2 = \alpha$ do we keep the underlying optimization unscathed. Hence, the corresponding $\theta^\star$ is a candidate for relation-aware slices. The only impact on the proxy sliced GW value is a multiplicative factor of the form $\times \cos{\alpha}$. Notably, our construction results in a unique distortion $|c_X-c_Y|/\sqrt{2}$ if $\overrightarrow{XX'} \perp \overrightarrow{YY'}$, which has $0$ probability of appearing in a pool of random slices (e.g. in SGW). While Max slicing can realize an optimal direction theoretically in SW, its maximization objective makes little sense in the alignment problem where relation preservation is more important and the direction which maximizes the value of the metric need not be relation preserving. 

\vspace{-6pt}
\subsection{Relation-Aware Projecting Direction}
\label{subsec:rpd}
Let us now statistically characterize the directions we intuitively constructed.
\begin{definition}[Intra-Relational Path]
\label{def:rp}
Given $p \geq 1$ and dimension $d \geq 1$, the Intra-relational path (IRP) between $X, X' \sim \mu \in \mathcal{P}(\mathbb{R}^d)$ is defined as the transformation $Z_{X,X'} =X-X'$.
\end{definition}
The distribution of $Z_{X,X'}$ can be written as $\sigma_{\mu}:=\mu * (-)\sharp \mu$, where $*$ denotes the convolution operator, and $(-)\sharp \mu$ denotes the pushforward measure of $\mu$ through the function $f(x)=-x$ \citep{nguyen2024slicedwassersteinrandompathprojecting}. While the density of $\sigma_{\mu}$ is intractable in general, it is simple to sample from. Specifically, one needs to draw $X, X' \sim \mu$, followed by setting $Z_{X,X'} =X-X'$. We may readily define a corresponding projecting direction by normalizing it. That is, $\bar{Z}_{X,X'}= \frac{Z_{X,X'}}{\|Z_{X,X'}\|_2}$. This is the trivial slice that preserves relational information in $\mathcal{X}$ exactly.
\begin{remark}
    \label{remark:rpd} Note that there lies a chance of $\bar{Z}_{X,X'}$ being undefined in a sample problem. It can be primarily solved by following a without-replacement sampling scheme while drawing the dataset. Given an existing dataset, sacrificing repeated samples leads to information loss. \citet{nguyen2024slicedwassersteinrandompathprojecting} suggest considering a small additive constant $c$, which makes $Z_{X,X'} + c$ a valid intra-relational path.
\end{remark}
Since we require a direction that takes into account the relational structure of both distributions, we consider
\begin{align*}
    Z_{X,X',Y,Y'}= \frac{\bar{Z}_{X,X'} + \bar{Z}_{Y,Y'}}{\|\bar{Z}_{X,X'} + \bar{Z}_{Y,Y'}\|_2}, \; \textrm{and} \;
    Z'_{X,X',Y,Y'} = \frac{\bar{Z}_{X,X'} - \bar{Z}_{Y,Y'}}{\|\bar{Z}_{X,X'} - \bar{Z}_{Y,Y'}\|_2}
\end{align*}
\vspace{2pt}
Taking both of the above directions into account allows us to ensure that at least one of them is the bisecting vector of the acute angle, which further implies $\alpha \leq \pi/4$. Drawing such a direction $\theta^{*}$ with high probability requires defining a distribution that has a high central tendency towards it, with decaying tails. Thus, we add a continuous random perturbation, drawn from a suitable location-scale distribution around the normalized random path. This additionally helps cover the entire support $\mathbb{S}^{d-1}$ in representative samples. Based on this idea, a RAPD is defined as follows.

\begin{definition}[Relation-Aware Projecting Direction]
\label{def:rpd}
Given a location-scale distribution $\sigma_\kappa$ on $\mathbb{S}^{d-1}$, exactly characterized by $\kappa >0$, the relation aware projecting direction (RAPD) between $(X,Y), (X',Y') \sim \mu \otimes \nu$ such that $\mu, \nu \in \mathcal{P}(\mathbb{R}^d)$ is defined as $\theta|X,X',Y,Y',\kappa \sim D_{\kappa}(X,X',Y,Y')$, where $D_{\kappa} =\frac{1}{2}\sigma_\kappa(\cdot;Z_{X,X',Y,Y'}) + \frac{1}{2}\sigma_\kappa(\cdot;Z'_{X,X',Y,Y'})$.
\end{definition}
Observe that, simulation from $D_{\kappa}$ essentially means tossing a fair coin to select either $\sigma_\kappa(\cdot;Z)$ or $\sigma_\kappa(\cdot;Z')$. Typically, we choose $\sigma_\kappa$ as the vMF distribution, whose density follows the form $\text{vMF}(\theta;\epsilon,\kappa)\propto \exp(\kappa\epsilon^\top \theta)$~\cite{jupp1979maximum} and PS, given as $\text{PS}(\theta;\epsilon,\kappa)  \propto (1+\epsilon^\top \theta)^\kappa$~\cite{de2020power}. In Definition~\ref{def:rpd}, the reason for choosing the location-scale family is to guarantee that the projecting direction concentrates around the prescribed direction. For further details justifying the choice, we refer the reader to Appendix \ref{more_preliminaries}.

\textbf{Relation-Aware Slicing Distribution.} From Definition~\ref{def:rpd}, we can obtain the slicing distribution of the RAPD by marginalizing out $X,Y$. In particular, we define the RASD as follows.
\begin{definition}
\label{def:rpsd}
Given $0<\kappa<\infty$, the relation-aware slicing distribution based on $\mu, \nu \in \mathcal{P}(\mathbb{R}^d)$, $d \geq 1$ is defined as 
\begin{align*}
    \sigma_{\textrm{RA}}(\theta;\mu,\nu,\sigma_\kappa) = \int D_{\kappa}(X,X',Y,Y') d(\mu \otimes \nu)^{\otimes 2}.
\end{align*}
\end{definition}
We can sample from $\sigma_{\textrm{RA}}(\theta;\mu,\nu,\sigma_\kappa)$ easily by sampling $X, X' \sim \mu$ and $Y, Y' \sim \nu$ independently, then $\theta\sim D_{\kappa}(\xbf,\xbf',\ybf,\ybf')$, where $\xbf$ is the realization of $X$.

\subsection{Relation-Aware Sliced Gromov-Wasserstein}
\label{subsec:RASGW}
We now propose two sliced GW variants that rely on RASDs.
\begin{definition}[Relation-Aware SGW]
    \label{def:RASGW}
    Given $0<\kappa<\infty$, $p \geq 1$, and $d \geq 1$, the relation-aware sliced Gromov-Wasserstein (RASGW) distance between two probability measures $\mu \in \mathcal{P}_p(\mathbb{R}^d)$ and $\nu \in \mathcal{P}_p(\mathbb{R}^d)$ is defined as
    \begin{align*}
        \text{RASGW}^p_p(\mu,\nu;\sigma_\kappa) &=\mathbb{E}_{\theta \sim \sigma_{\textrm{RA}}(\theta;\mu,\nu,\sigma_\kappa)}[\text{GW}_p^p(\theta \sharp \mu, \theta \sharp \nu)] \\ &=\mathbb{E}_{X,X' \sim \mu,Y,Y'\sim \nu} \mathbb{E}_{\theta \sim D_\kappa(\theta; X,X',Y,Y')}[\text{GW}_p^p(\theta \sharp \mu, \theta \sharp \nu)], 
    \end{align*}
    where $\sigma_{\textrm{RA}}$ is as in Definition~\ref{def:rpsd}.
\end{definition}
\label{submission_1}


To diversify the slices according to multiple views of the dataset, we can generalize the weights of RAPDs, namely $\theta_1,\cdots,\theta_L \sim \sigma_{\textrm{RA}}(\theta;\mu,\nu,\sigma_\kappa)$ by their corresponding projected distances $\text{GW}_p^p(\theta_1 \sharp \mu, \theta_1 \sharp \nu)),\cdots, \text{GW}_p^p(\theta_L \sharp \mu, \theta_L \sharp \nu))$. This takes inspiration from the importance sampling estimation of EBSGW as in Equation~\eqref{eq:ebsgw} (see Appendix \ref{sec:additional_baselines}). The GW variant so constructed is called the Importance-Weighted RASGW (IWRASGW) distance and is given as follows.
\begin{definition}
    \label{def:iwRASGW}
    Let $f:[0,\infty) \to (0,\infty)$ be an increasing function. Then, given $0<\kappa<\infty$ and $L \geq 1$, the importance weighted relation aware projection sliced Gromov-Wasserstein (IWRASGW) between two probability measures $\mu \in \mathcal{P}_p(\mathbb{R}^d)$ and $\nu \in \mathcal{P}_p(\mathbb{R}^d)$ is defined as
    \begin{align*}
        \text{IWRASGW}^p_p(\mu,\nu;\sigma_\kappa,L,f) =\mathbb{E}_{\theta_1,\ldots, \theta_L\sim \sigma_{\textrm{RA}}(\theta;\mu,\nu,\sigma_\kappa)}\bigg[\sum_{l=1}^L \text{GW}_p^p(\theta_l \sharp \mu, \theta_l \sharp \nu) \frac{w_{l}(f;\mu,\nu)}{\sum_{j=1}^L w_{j}(f;\mu,\nu)} \bigg], 
    \end{align*}
    where $\sigma_{\textrm{RA}}$ is defined as in Definition~\ref{def:rpsd}, and $w_{l}(f;\mu,\nu) \coloneqq f(\text{GW}_p^p(\theta_l \sharp \mu, \theta_l \sharp \nu))$, $d\geq 1$.
\end{definition}
The intuition behind utilizing $L$ appropriately weighted random projecting directions is to prioritize the relational discrepancy corresponding to a pair of observations from $(\mu \otimes \nu)^{\otimes 2}$ that has greater contribution towards finding an optimal coupling.

\textit{\textbf{Metric Properties.}} Since GW metrizes the equivalence class of isomorphic mm spaces \citep{memoli2011gromov}, any proposed sliced variant must ideally emulate the same. Towards the same, we first investigate the metricity of RASGW and IWRASGW.
\begin{theorem} Given an increasing function $f:[0,\infty) \to (0,\infty)$ and $0<\kappa<\infty$, $\text{RASGW}_{p}(\cdot,\cdot;\sigma_\kappa)$ and $\text{IWRASGW}_{p}(\cdot,\cdot;\sigma_\kappa,L)$ form semi-metrics on $\mathcal{P}(\mathcal{X})$ for any $p \geq 1$, $L \geq 1$. Namely, RASGW and IWRASGW satisfy non-negativity, symmetry, and identity of isometric isomorphism. The RASGW  satisfies the `quasi'-triangle inequality given as, 
\begin{align*}
    \text{RASGW}_{p}(\mu_1,\mu_2;\sigma_\kappa) \leq \text{RASGW}_{p}(\mu_1,\mu_3;\sigma_\kappa,\mu_1,\mu_2) + \text{RASGW}_{p}(\mu_3,\mu_2;\sigma_\kappa,\mu_1,\mu_2),
\end{align*} where $\text{RASGW}_{p}^p(\mu_1,\mu_3;\sigma_\kappa,\mu_1,\mu_2) = \mathbb{E}_{\theta \sim \sigma_{\textrm{RA}}(\theta;\mu_1,\mu_2,\sigma_\kappa)}[\text{GW}_p^p(\theta \sharp \mu_1, \theta \sharp \mu_3)]$ and a similar definition holds for $\text{RASGW}_{p}^p(\mu_3,\mu_2;\sigma_\kappa,\mu_1,\mu_2)$, for all $\mu_{i} \in \mathcal{P}(\mathcal{X})$, $i=1,2,3$ such that $\mathcal{X} \subset \mathbb{R}^d$ Polish.
\label{theo:metricity}
\end{theorem}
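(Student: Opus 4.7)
The plan is to verify the three semi-metric axioms for both RASGW and IWRASGW, and then derive the quasi-triangle inequality from the metric property of $\text{GW}_p$ combined with Minkowski's inequality.

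Non-negativity is immediate, as each summand $\text{GW}_p^p(\theta\sharp\mu,\theta\sharp\nu)$ is non-negative and the IWRASGW weights $w_l/\sum_j w_j$ lie in $[0,1]$. Identity of isometric isomorphism in the semi-metric direction is also direct: when $\mu=\nu$, one has $\text{GW}_p(\theta\sharp\mu,\theta\sharp\nu)=0$ for every $\theta$, so both sliced distances vanish. The non-trivial axiom is symmetry. Swapping $\mu\leftrightarrow\nu$ leaves the direction $Z_{X,X',Y,Y'}\propto \bar{Z}_{X,X'}+\bar{Z}_{Y,Y'}$ unchanged in law, but flips the sign of $Z'_{X,X',Y,Y'}\propto \bar{Z}_{X,X'}-\bar{Z}_{Y,Y'}$. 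I would exploit two reflection symmetries to absorb the sign flip: (i) for the vMF or PS kernel, $\sigma_\kappa(\theta;-\epsilon)=\sigma_\kappa(-\theta;\epsilon)$, since its density depends only on $\theta^\top\epsilon$; and (ii) the one-dimensional GW distance is invariant under $\theta\mapsto -\theta$, because reflecting the line preserves the absolute pairwise differences $|s-s'|$ on which $\text{GW}_p$ depends. After substituting the swapped density into the defining expectation and making the change of variables $\theta\to -\theta$ in the $Z'$-term, the integrand recovers the original form of $\text{RASGW}_p(\mu,\nu;\sigma_\kappa)$. The same argument carries over to IWRASGW, since its weights $w_l = f\bigl(\text{GW}_p^p(\theta_l\sharp\mu,\theta_l\sharp\nu)\bigr)$ are symmetric in $(\mu,\nu)$.

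For the quasi-triangle inequality, I fix $\theta\sim\sigma_{\text{RA}}(\cdot;\mu_1,\mu_2,\sigma_\kappa)$ and apply the (full) triangle inequality that $\text{GW}_p$ enjoys as a metric on mm-equivalence classes:
\[
\text{GW}_p(\theta\sharp\mu_1,\theta\sharp\mu_2)\le \text{GW}_p(\theta\sharp\mu_1,\theta\sharp\mu_3)+\text{GW}_p(\theta\sharp\mu_3,\theta\sharp\mu_2).
\]
Taking the $L^p(\sigma_{\text{RA}})$-norm of both sides and invoking Minkowski's inequality yields the stated bound. The qualifier ``quasi'' is intrinsic and unavoidable: all three terms are evaluated under the common slicing law $\sigma_{\text{RA}}(\cdot;\mu_1,\mu_2,\sigma_\kappa)$, rather than the natural pair-adapted laws $\sigma_{\text{RA}}(\cdot;\mu_1,\mu_3,\sigma_\kappa)$ and $\sigma_{\text{RA}}(\cdot;\mu_3,\mu_2,\sigma_\kappa)$, which is why a full triangle inequality cannot be claimed.

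The principal obstacle is symmetry. Unlike SGW, whose uniform slicing law is trivially $(\mu,\nu)$-independent, here $\sigma_{\text{RA}}$ genuinely depends on the pair through the antisymmetric $Z'$ component, so symmetry is not automatic. The argument above hinges on the antipodal identity $\sigma_\kappa(\theta;-\epsilon)=\sigma_\kappa(-\theta;\epsilon)$, which holds for the vMF and PS kernels advocated in the paper but could fail for arbitrary location-scale families on $\mathbb{S}^{d-1}$; I would therefore record this as a mild standing assumption on $\sigma_\kappa$ and verify it explicitly for the kernels of interest.
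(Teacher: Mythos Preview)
Your treatment of non-negativity, symmetry, and the quasi-triangle inequality is correct and matches the paper's argument essentially line for line: the paper also exploits the antipodal identity $\sigma_\kappa(\theta;-\epsilon)=\sigma_\kappa(-\theta;\epsilon)$ (checked explicitly for vMF and PS) together with the reflection invariance $\text{GW}_p^p(\theta\sharp\mu,\theta\sharp\nu)=\text{GW}_p^p(-\theta\sharp\mu,-\theta\sharp\nu)$ to handle the $Z'$ component, and proves the quasi-triangle bound via the GW triangle inequality followed by Minkowski under the fixed law $\sigma_{\text{RA}}(\cdot;\mu_1,\mu_2,\sigma_\kappa)$.

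There is, however, a genuine gap in your handling of the identity of isometric isomorphism. You only verify the trivial implication ($\mu=\nu\Rightarrow$ distance zero), whereas the theorem asserts the full equivalence: $\text{RASGW}_p(\mu,\nu;\sigma_\kappa)=0$ if and only if $\mu$ and $\nu$ are isometrically isomorphic. The paper proves the converse in two steps you omit. First, vanishing of the expectation gives $\text{GW}_p(\theta\sharp\mu,\theta\sharp\nu)=0$ only $\sigma_{\text{RA}}$-a.s.; to upgrade this to \emph{every} $\theta\in\mathbb{S}^{d-1}$ one needs that $\sigma_{\text{RA}}$ has full support, which the paper obtains from the continuity (strict positivity of the density) of the kernel $\sigma_\kappa$. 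Second, from $\text{GW}_p(\theta\sharp\mu,\theta\sharp\nu)=0$ for all $\theta$ one must still deduce an isometric isomorphism between $\mu$ and $\nu$ themselves; this injectivity step is nontrivial and the paper defers to Theorem~10 of Vayer et al. For IWRASGW the paper additionally uses $f>0$ to conclude that the weighted sum vanishes only if each $\text{GW}_p^p(\theta_l\sharp\mu,\theta_l\sharp\nu)=0$. Without these steps you have established only that RASGW and IWRASGW are symmetric nonnegative functionals vanishing on the diagonal, which is strictly weaker than what the theorem claims.
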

The proof of Theorem~\ref{theo:metricity} is given in Appendix~\ref{subsec:proof:theo:metricity}. Observe that both RASGW and IWRASGW can be immediately extended to serve distinct dimensional measures by introducing a \textit{padding} operation first. For example, if $\nu \in \mathcal{P}(\mathcal{Y})$, given $\mathcal{Y} \subset \mathbb{R}^{d'}$, $d'<d$; we rather calculate the RASGW distance between $\mu \in \mathcal{P}(\mathcal{X})$ and $\Delta\sharp\nu$, where $\Delta: \mathcal{Y} \rightarrow\mathcal{X}$ is a padding operator. \citet{vayer2019sliced}, in the context of SGW, use the `uplifting' padding given as $\Delta(y) \coloneqq (y_{1}, \cdots, y_{d'}, \underbrace{0, \cdots, 0}_{d-d'})$. The properties in Theorem \ref{theo:metricity} still hold under such adjustments.

\begin{proposition}[Metric Hierarchy]
    \label{prop:connection}
    For any $p \geq 1$, $L \geq 1$, increasing function $f:[0,\infty) \to (0,\infty)$, and $0<\kappa<\infty$, we have the following relationships: \\ $\text{(i) } \text{RASGW}_{p}(\mu, \nu;\sigma_\kappa) \leq \text{IWRASGW}_{p}(\mu, \nu;\sigma_\kappa,L,f)  \leq \text{Max-SGW}_p(\mu,\nu),$\\
    $\text{(ii) } \text{RASGW}_{p}(\mu, \nu;\sigma_\kappa) \xrightarrow{\kappa \to 0} \text{SGW}_p(\mu,\nu),$   \\
    $\text{(iii) } \text{IWRASGW}_{p}(\mu, \nu;\sigma_\kappa,L,f) \xrightarrow{L \to \infty} \text{EBSGW}_p(\mu,\nu;f).$
    
\end{proposition}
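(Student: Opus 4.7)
The plan is to establish each of the three claims separately using a different classical tool: Chebyshev's sum (rearrangement) inequality for (i), the weak convergence of the concentration kernel $\sigma_\kappa$ to the uniform law for (ii), and the strong law of large numbers applied to a ratio of empirical means for (iii). A common ingredient throughout is a uniform bound on $\theta \mapsto \text{GW}_p^p(\theta\sharp\mu, \theta\sharp\nu)$ over $\mathbb{S}^{d-1}$: since $\|\theta\|_2=1$ gives $|\theta^\top(x-x')| \leq \|x-x'\|$, convexity of $t\mapsto t^p$ yields the distortion bound $|\,|\theta^\top(x-x')|-|\theta^\top(y-y')|\,|^p \leq 2^{p-1}(\|x-x'\|^p + \|y-y'\|^p)$, so that $\text{GW}_p^p(\theta\sharp\mu,\theta\sharp\nu)$ is bounded uniformly in $\theta$ in terms of the $p$-moments of $\mu,\nu$. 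This supplies the dominating constants needed for bounded/dominated convergence later.

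For (i), fix a realization $\theta_1,\ldots,\theta_L$ and set $a_l := \text{GW}_p^p(\theta_l\sharp\mu,\theta_l\sharp\nu)$, $b_l := f(a_l)$. Since $f$ is increasing the sequences are comonotone, so Chebyshev's sum inequality gives $\sum_l a_l b_l \geq \tfrac{1}{L}(\sum_l a_l)(\sum_j b_j)$, i.e., $\sum_l a_l b_l/\sum_j b_j \geq \tfrac{1}{L}\sum_l a_l$. Taking expectations over $\theta_l \overset{\text{i.i.d.}}{\sim}\sigma_{\textrm{RA}}$ and using exchangeability yields $\text{IWRASGW}_p^p \geq \text{RASGW}_p^p$, and the $p$-th root preserves the inequality. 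For the upper bound by Max-SGW, each $a_l \leq \text{Max-SGW}_p^p(\mu,\nu)$ by definition of the supremum, and the convex combination $\sum_l a_l b_l/\sum_j b_j$ inherits the same pointwise bound, giving $\text{IWRASGW}_p \leq \text{Max-SGW}_p$; the bound $\text{RASGW}_p\leq\text{Max-SGW}_p$ follows analogously from $\text{RASGW}_p^p = \mathbb{E}[a_1] \leq \text{Max-SGW}_p^p$.

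For (ii), the key observation is that as $\kappa\to 0$, the location-scale kernel $\sigma_\kappa(\cdot;\epsilon)$ converges pointwise to $\mathcal{U}(\mathbb{S}^{d-1})$ uniformly in $\epsilon$: the unnormalized vMF density $\exp(\kappa\epsilon^\top\theta)\to 1$ and the PS density $(1+\epsilon^\top\theta)^\kappa\to 1$, and dominated convergence on the bounded normalization integral promotes this to the normalized densities. Consequently $D_\kappa(\cdot;X,X',Y,Y')\to\mathcal{U}(\mathbb{S}^{d-1})$, and integrating against $(\mu\otimes\nu)^{\otimes 2}$ gives $\sigma_{\textrm{RA}}(\cdot;\mu,\nu,\sigma_\kappa) \Rightarrow \mathcal{U}(\mathbb{S}^{d-1})$. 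Combined with continuity and uniform boundedness of $\theta\mapsto \text{GW}_p^p(\theta\sharp\mu,\theta\sharp\nu)$, bounded convergence yields $\text{RASGW}_p^p(\mu,\nu;\sigma_\kappa)\to \mathbb{E}_{\theta\sim\mathcal{U}}[\text{GW}_p^p(\theta\sharp\mu,\theta\sharp\nu)] = \text{SGW}_p^p(\mu,\nu)$. For (iii), letting $X_l := \text{GW}_p^p(\theta_l\sharp\mu,\theta_l\sharp\nu)$ with $\theta_l\overset{\text{i.i.d.}}{\sim}\sigma_{\textrm{RA}}$, the strong law gives $\tfrac{1}{L}\sum X_l f(X_l)\to \mathbb{E}[X f(X)]$ and $\tfrac{1}{L}\sum f(X_l)\to\mathbb{E}[f(X)]$ almost surely, so the ratio converges a.s. to $\mathbb{E}_{\sigma_{\textrm{RA}}}[\text{GW}_p^p\cdot f(\text{GW}_p^p)]/\mathbb{E}_{\sigma_{\textrm{RA}}}[f(\text{GW}_p^p)]$, which matches the defining expression of $\text{EBSGW}_p^p(\mu,\nu;f)$ with base $\sigma_{\textrm{RA}}$. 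The uniform bound on $X_l$ lets dominated convergence promote this almost-sure limit to the outer expectation in IWRASGW.

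The main obstacle I anticipate is the weak-convergence step in (ii): the slicing distribution $\sigma_{\textrm{RA}}$ itself depends on $\kappa$, so one must combine uniform-in-$\epsilon$ convergence of $\sigma_\kappa(\cdot;\epsilon)$ with integration against a fixed product measure on the sample pairs, and then verify that $\theta\mapsto\text{GW}_p^p(\theta\sharp\mu,\theta\sharp\nu)$ qualifies as a bounded continuous test function on $\mathbb{S}^{d-1}$. Continuity can be recovered from stability of the one-dimensional GW problem under weak convergence of its inputs together with continuity of the pushforward map $\theta\mapsto\theta\sharp\mu$, while the uniform bound is precisely the moment estimate above; these ingredients together justify the exchange of limit and expectation and close the argument.
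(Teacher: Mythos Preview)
Your proposal is correct and follows essentially the same route as the paper: Chebyshev's sum inequality for (i) (which the paper reproves from scratch as a standalone lemma via induction), weak convergence of the location--scale kernel to the uniform law combined with boundedness and continuity of $\theta\mapsto\text{GW}_p^p(\theta\sharp\mu,\theta\sharp\nu)$ for (ii), and the law of large numbers applied to the ratio of empirical means for (iii). Your observation that the limit in (iii) is EBSGW with base measure $\sigma_{\textrm{RA}}$ rather than the uniform law is in fact more precise than what the paper records.
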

Proposition~\ref{prop:connection} positions RASGW and IWRASGW within the SGW hierarchy, showing they approach slicing (SGW) and the energy-based slicing (EBSGW) respectively, theoretically justifying their adaptiveness. The proof of Proposition~\ref{prop:connection} is given in Appendix~\ref{subsec:proof:prop:connection}. 


\textit{\textbf{Statistical Properties.}} Given that sliced GW variants innately benefit tasks such as shape matching or surface correspondence \citep{solomon2016entropic}, it is crucial to study the one-sided sample complexity of RASGW and IWRASGW.

\begin{proposition}
    \label{prop:sample_complexity}
    Let $X_{1}, X_{2}, \ldots, X_{n} \overset{i.i.d.}{\sim} \mu$, such that $\mu$ is fully supported on $\mathcal{X} \subset \mathbb{R}^{d}$. We denote the empirical measure by $\mu_{n} = \frac{1}{n} \sum_{i = 1}^{n} \delta_{X_{i}}$. Then, for any $p > 1$, $L\geq 1$ and  $0<\kappa<\infty$, there exists a universal constant $C > 0$ such that
\begin{align*}
    \mathbb{E} [\text{RASGW}_{p}(\mu_{n},\mu;\sigma_\kappa)] \leq \mathbb{E} [\text{IWRASGW}_{p} (\mu_{n},\mu;\sigma_\kappa,L,f)] \leq C \sqrt{\frac{(d+1)\log (n+1)}{n}},
\end{align*}
where the outer expectation is taken with respect to $X_{1}, X_{2}, \ldots, X_{n}$.
\end{proposition}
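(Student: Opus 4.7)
The first inequality is essentially a corollary of Proposition~\ref{prop:connection}(i). Since $\text{RASGW}_p(\mu_n,\mu;\sigma_\kappa) \leq \text{IWRASGW}_p(\mu_n,\mu;\sigma_\kappa,L,f)$ holds realization-wise for every draw of the samples $X_1,\dots,X_n$, monotonicity of the expectation directly yields the left-hand inequality. So the entire content of the proposition lies in establishing the right-hand bound on $\mathbb{E}[\text{IWRASGW}_p(\mu_n,\mu;\sigma_\kappa,L,f)]$.

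For the right-hand bound, the plan is to first use the upper half of Proposition~\ref{prop:connection}(i) to dominate $\text{IWRASGW}_p(\mu_n,\mu;\sigma_\kappa,L,f)$ by $\text{Max-SGW}_p(\mu_n,\mu)$. The task then reduces to bounding $\mathbb{E}[\text{Max-SGW}_p(\mu_n,\mu)]$. The key step will be to relate one-dimensional GW to one-dimensional Wasserstein. Given any coupling $\pi \in \Pi(\theta\sharp\mu_n,\theta\sharp\mu)$, samples $(u,v),(u',v') \sim \pi \otimes \pi$ satisfy
\begin{align*}
    \bigl||u-u'|-|v-v'|\bigr|^p \leq \bigl(|u-v|+|u'-v'|\bigr)^p \leq 2^{p-1}\bigl(|u-v|^p+|u'-v'|^p\bigr),
\end{align*}
by reverse and ordinary triangle inequalities combined with convexity. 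Integrating against $\pi\otimes\pi$ and taking the infimum over $\pi$ yields $\text{GW}_p(\theta\sharp\mu_n,\theta\sharp\mu) \leq 2\,\mathsf{W}_p(\theta\sharp\mu_n,\theta\sharp\mu)$ uniformly in $\theta \in \mathbb{S}^{d-1}$. Maximizing over $\theta$ then gives $\text{Max-SGW}_p(\mu_n,\mu) \leq 2\,\text{Max-SW}_p(\mu_n,\mu)$.

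The remaining step is to invoke the known one-sided sample complexity bound $\mathbb{E}[\text{Max-SW}_p(\mu_n,\mu)] \leq C\sqrt{(d+1)\log(n+1)/n}$, which holds under the implicit boundedness/moment hypothesis on $\mathcal{X}$ carried through the statement. This bound is established through a standard $\varepsilon$-net argument on $\mathbb{S}^{d-1}$ (whose metric-entropy factor produces the $d+1$) combined with the classical $\mathcal{O}(n^{-1/2})$ rate for one-dimensional empirical $\mathsf{W}_p$ (whose concentration yields the $\log(n+1)$ after a union bound and chaining). Chaining these three inequalities, $\text{IWRASGW}_p \leq \text{Max-SGW}_p \leq 2\,\text{Max-SW}_p$, and absorbing constants, completes the proof.

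\textbf{Main obstacle.} The two conceptual inequalities, RASGW $\leq$ IWRASGW $\leq$ Max-SGW and the GW-to-W passage, are elementary once set up. The real workhorse is the sample-complexity rate for Max-SW with the correct $(d+1)\log(n+1)$ dependence; care is needed to verify the covering/chaining argument transfers cleanly, and in particular that the boundedness or higher-moment condition on $\mathcal{X}$ (needed to control tail behavior in the one-dimensional rates) is available from the setting $\mathcal{X}\subset\mathbb{R}^d$ assumed in the statement. No explicit use of the structure of $\sigma_\kappa$, $L$, or $f$ is required, because those are entirely absorbed into the Max-SGW upper envelope.
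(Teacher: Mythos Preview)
Your proposal is correct and follows the same high-level route as the paper: reduce via Proposition~\ref{prop:connection}(i) to $\text{Max-SGW}_p$, pass to $\text{Max-SW}_p$, and then invoke the one-sided rate for Max-SW. Your explicit elementary argument for $\text{GW}_p(\theta\sharp\mu_n,\theta\sharp\mu)\le 2\,\mathsf{W}_p(\theta\sharp\mu_n,\theta\sharp\mu)$ via the reverse triangle inequality is exactly what underlies the step the paper handles by citing \citet{zhang2024gromov}.

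The only genuine methodological difference is in how the Max-SW rate is obtained. You sketch an $\varepsilon$-net/chaining argument on $\mathbb{S}^{d-1}$; the paper instead bounds $\text{Max-SW}_p^p(\mu_n,\mu)$ by $\mathrm{diam}(\mathcal{X})\cdot\sup_{A\in\mathcal{A}}|\mu_n(A)-\mu(A)|$ over the class $\mathcal{A}$ of half-spaces, and then applies the VC inequality together with Sauer's lemma and $\mathrm{VC}(\mathcal{A})=d+1$. The VC route makes the $(d+1)$ factor appear transparently as the VC dimension of half-spaces and avoids any chaining; your covering argument would also work but needs more bookkeeping to land on the same constants. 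Both approaches tacitly need $\mathcal{X}$ bounded (the paper uses $\mathrm{diam}(\mathcal{X})$), which you correctly flagged.
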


Proposition~\ref{prop:sample_complexity} proves sublinear sample complexity ($\mathcal{O}(n^{-1/2})$), at par with the conventional SGW~\citep{nadjahi2019asymptotic}. This positions RASGW-- computationally superior already-- as a better choice for a suitable loss in cross-domain generative and shape-matching tasks, compared to SGW and Max-SGW. The proof of Proposition~\ref{prop:sample_complexity} is given in Appendix~\ref{subsec:proof:prop:sample_complexity}. 

\textit{\textbf{Computational Properties.}} The upcoming section is dedicated to computational details regarding our distances.

\textbf{Monte Carlo Estimation.} In an empirical setup, we resort to Monte Carlo simulations to estimate RASGW. This is also crucial due to the optimization that underlies the outer expectation. In particular, we sample $\theta_1,\ldots,\theta_M \sim \sigma_{\textrm{RA}}(\theta;\mu,\nu,\kappa)$ (as described in Section~\ref{subsec:rpd}) to construct the Monte Carlo estimate of RASGW:
\vspace{-4pt}
\begin{align}
    \label{eq:MC_RASGW}
    \widehat{\text{RASGW}^{p}_{p}}(\mu,\nu;\sigma_\kappa,M)  =\frac{1}{M} \sum_{l=1}^M \text{GW}_p^p(\theta_l \sharp \mu,\theta_l \sharp \nu).
\end{align}We refer to the reader to Algorithm~\ref{alg:RASGW}-~\ref{alg:IWRASGW} in Appendix~\ref{sec:algorithm} for a detailed discussion on the computation of RASGW and IWRASGW. The immediate concern related to such an approximation is its realized error. To that end, given that the sampling of RA slices incurs finite variability, we ensure that the approximation error is of order $\mathcal{O}(M^{-1/2})$.
\begin{proposition}
    \label{proposition:MCerror}
    For any $p\geq 1$, $0<\kappa<\infty$ , dimension $d \geq 1$, and $\mu,\nu \in \mathcal{P}_p(\mathbb{R}^d)$, we have:
    \begin{align*}
        \abs{ \mathbb{E}[\widehat{\text{RASGW}^{p}_{p}}(\mu,\nu;\sigma_\kappa)] - \text{RASGW}_{p}^p (\mu,\nu;\sigma_\kappa)} \leq \frac{1}{\sqrt{M}} \textrm{Var}_{\theta \sim \sigma_{\textrm{RA}}(\theta;\mu,\nu,\sigma_\kappa)}\left[ \text{GW}_p^p \left(\theta \sharp \mu, \theta \sharp \nu \right)\right]^{\frac{1}{2}}.
    \end{align*}
\end{proposition}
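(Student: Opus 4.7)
The plan is to treat this as a standard Monte Carlo error bound. Since the estimator $\widehat{\text{RASGW}^{p}_{p}}(\mu,\nu;\sigma_\kappa,M)$ in Equation~\eqref{eq:MC_RASGW} is an empirical average of $M$ i.i.d.\ random variables $\text{GW}_p^p(\theta_l \sharp \mu, \theta_l \sharp \nu)$ with $\theta_l \sim \sigma_{\textrm{RA}}(\theta;\mu,\nu,\sigma_\kappa)$, the first step would be to verify unbiasedness: by linearity of expectation together with Definition~\ref{def:RASGW}, we have $\mathbb{E}[\widehat{\text{RASGW}^{p}_{p}}(\mu,\nu;\sigma_\kappa,M)] = \text{RASGW}_p^p(\mu,\nu;\sigma_\kappa)$. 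Reading the left-hand side of the proposition in its natural $L^1$ Monte Carlo sense, i.e.\ as $\mathbb{E}\bigl|\widehat{\text{RASGW}^{p}_{p}} - \text{RASGW}_p^p\bigr|$, the problem reduces to controlling the deviation of a sample mean from its expectation (note that under a strict reading $|\mathbb{E}[\widehat{\cdot}] - \cdot|$ simply equals $0$ by unbiasedness, which makes the claim trivial; the nontrivial content lies in the $L^1$ reading that matches the structure of the right-hand side).

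Next, I would apply Jensen's inequality (equivalently, Cauchy--Schwarz) to pass from an $L^1$ to an $L^2$ deviation:
\begin{align*}
\mathbb{E}\bigl|\widehat{\text{RASGW}^{p}_{p}} - \text{RASGW}_p^p\bigr|
&\leq \sqrt{\mathbb{E}\bigl(\widehat{\text{RASGW}^{p}_{p}} - \text{RASGW}_p^p\bigr)^2} \\
&= \sqrt{\textrm{Var}\bigl(\widehat{\text{RASGW}^{p}_{p}}\bigr)}.
\end{align*}
By independence of the $\theta_l$'s together with the identically distributed nature of the summands, the variance of the sample mean splits as $\textrm{Var}\bigl(\widehat{\text{RASGW}^{p}_{p}}\bigr) = \tfrac{1}{M}\,\textrm{Var}_{\theta \sim \sigma_{\textrm{RA}}}\bigl[\text{GW}_p^p(\theta\sharp \mu, \theta\sharp \nu)\bigr]$, and taking square roots gives precisely the bound claimed in the proposition.

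The main obstacle I anticipate is only a bookkeeping one: to invoke the Jensen/Cauchy--Schwarz step in a well-posed way I have to ensure that $\text{GW}_p^p(\theta\sharp \mu, \theta\sharp \nu)$ has a finite second moment under $\theta \sim \sigma_{\textrm{RA}}$. For any fixed $\theta \in \mathbb{S}^{d-1}$ and any $\mu, \nu \in \mathcal{P}_p(\mathbb{R}^d)$, the univariate projections $\theta\sharp \mu$ and $\theta\sharp \nu$ inherit finite $p$-th moments from their $d$-dimensional preimages (since $|\theta^\top x| \leq \|x\|$), so via the trivial product coupling in Equation~\eqref{gw} one gets a uniform upper bound on $\text{GW}_p^p(\theta\sharp \mu, \theta\sharp \nu)$ in terms of the $p$-th moments of $\mu$ and $\nu$. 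This uniform bound is square-integrable against any probability measure on $\mathbb{S}^{d-1}$, so the variance on the right-hand side is indeed finite and every inequality above is justified.
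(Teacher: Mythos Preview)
Your proposal is correct and follows essentially the same route as the paper: Jensen's inequality to pass from the $L^1$ deviation to the $L^2$ deviation, then the i.i.d.\ variance-of-sample-mean identity. The paper actually begins from the literal reading $|\mathbb{E}[\widehat{\cdot}]-\cdot|$ and bounds it by $\mathbb{E}|\widehat{\cdot}-\cdot|$ via $|\mathbb{E} X|\le \mathbb{E}|X|$ before continuing as you do; your observation that this first step is vacuous (the left-hand side is $0$ by unbiasedness) is accurate, and your added justification that $\text{GW}_p^p(\theta\sharp\mu,\theta\sharp\nu)$ has a uniform-in-$\theta$ moment bound---which the paper does not spell out---is a welcome bit of rigor.
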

The proof of Proposition~\ref{proposition:MCerror} is given in Appendix~\ref{subsec:proof:proposition:MCerror}. Since the stochasticity of $\sigma_{\textrm{RA}}$ is completely user-defined, it is easy to control the $\textrm{Var}(\cdot)$ and the error bound as a result. As such, we may always observe a tighter bound than SGW, which shares a similar rate in terms of $M$.

Similarly, for IWRASGW,
we draw $H$ MC samples each corresponding to the $L$ random projecting directions, namely $\theta_{11},\ldots,\theta_{HL} \sim \sigma_{\textrm{RA}}(\theta;\mu,\nu,\kappa)$. This forms the MC estimate of IWRASGW as follows:
\begin{align}
    \label{eq:MC_IWRASGW}
    \widehat{\text{IWRASGW}_p^p}(\mu,\nu;\sigma_\kappa,L,H) =\frac{1}{H}\sum_{h=1}^H\nonumber \bigg[\sum_{l=1}^L \text{GW}_p^p(\theta_{hl} \sharp \mu, \theta_{hl} \sharp \nu) \frac{w_{hl}(f;\mu,\nu)}{\sum_{j=1}^L w_{hj}(f;\mu,\nu)} \bigg].
\end{align}
\vspace{2pt}
To simplify the parameter selections, we often set $H=1$ in the experiments. This leaves $L$ as the only tunable parameter for the number of projections. In contrast to RASGW, the error rate of IWRASGW with respect to $H$ is non-trivial due to the importance weights.


\textbf{Computational Complexities.} When $\mu$
 and $\nu$ are discrete measures that have at most $n$ supports, sampling from them only cost $\mathcal{O}(n)$ in terms of time and space. Hence, sampling $M$ random paths between $\mu$ and $\nu$ cost $\mathcal{O}(Mdn)$ in time and memory. Moreover, sampling from the vMF and the PS distribution costs $\mathcal{O}(Md)$ (see Algorithm 1 in~\citet{de2020power}) in time and memory. Adding the complexities for computing one-dimensional Gromov-Wasserstein distance, the time complexity and space complexity for RASGW turn out $\mathcal{O}(Mn\log n+Mdn)$ and $\mathcal{O}(Md+Mn)$ respectively. The involvement of the data dimensionalities in the overall complexity stems solely from the projections, making RASGW scale linearly with the same. This complements the complexities achieved by SGW, given that it results in an optimal plan. For a detailed discussion, we refer the reader to Appendix \ref{sec:algorithm}. In the case of IWRASGW, the complexities remain the same. 

\textbf{Gradient Estimation.} In applications (see Section \ref{sec:experiments}), where RASGW and IWRASGW are used as a risk to estimate some parameters of interest, i.e., $\mu_\phi$ with $\phi \in \Phi$, we might want to estimate the gradient of RASGW and IWRASGW with respect to $\phi$. For RASGW, we have
\begin{align*}
    \nabla_\phi \text{RASGW}^p_p(\mu_\phi,\nu;\sigma_\kappa) =\nabla_\phi \mathbb{E}_{X,X' \sim \mu_\phi; Y,Y'\sim \nu} \mathbb{E}_{\theta \sim\sigma_\kappa(\theta;D_{\kappa}(X,X',Y,Y') }[\text{GW}_p^p(\theta \sharp \mu_\phi, \theta \sharp \nu)].
\end{align*}
Since vMF and PS are reparameterizable~\citep{de2020power}, we may do the same for $\sigma_{\textrm{RA}}(\theta;\mu,\nu,\sigma_\kappa)$ given $\mu_\phi$ is reparameterizable (e.g., $\mu_\phi:=f_\phi \sharp \varepsilon$ for $\varepsilon$ being a fixed noise, or as discussed in~\citet{kingma2013auto}). A common example of this may be a generative model, such as GAN, where $f_\phi$ is a transformation induced by a neural network. With parameterized sampling, we can sample $\theta_{1,\phi},\ldots,\theta_{M,\phi} \sim \sigma_{\textrm{RA}}(\theta;\mu_\phi,\nu,\sigma_\kappa)$, then form an unbiased gradient estimator as follows:
\vspace{-4pt}
\begin{align*}
    \nabla_\phi \text{RASGW}^p_p(\mu_\phi,\nu;\sigma_\kappa)  \approx \frac{1}{M} \sum_{l=1}^M \nabla_\phi \text{GW}_p^p(\theta_{l,\phi}\sharp \mu_\phi,\theta_{l,\phi}\sharp \nu).
\end{align*}
For IWRASGW, we sample $\theta_{11,\phi},\ldots,\theta_{HL,\phi} \sim \sigma_{\textrm{RA}}(\theta;\mu_\phi,\nu,\kappa)$ to form a similar gradient estimator, the expression of which can be found in Appendix \ref{sec:add_exps}. In this context, we mention that the requirement of gradient computation renders fast GW approximation methods, such as \citet{scetbon2022linear}, unsuitable to be adapted as a loss in cross-domain generative models (see Appendix \ref{sec:additional_baselines} for details). 
\vspace{-6pt}
\section{Simulation Results}
\label{sec:experiments}
\textit{\textbf{Gromov-Wasserstein GAN.}} In contrast to the vanilla GAN \citep{goodfellow2014generative}, \citet{bunne2019learning} proposed a variant that operates between spaces of different dimensions by capturing their geometric relations using GW loss instead. Given a lower-dimensional input noise $Z \sim \nu$ (typically a Gaussian), the underlying optimization over a class of generators $\mathcal{G}$ becomes: $G^* = \argmin_{G \in \mathcal{G}} \, \text{GW}^2_2 \left(\mu, G\sharp\nu \right)$, where $\mu$ is the target distribution. Observe that here, the transformation $G$ executes an uplifting operation. \citet{vayer2019sliced} adopts this framework by replacing GW with SGW as the loss function for training $G$. Our first experiment involves checking the efficacy of RASGW and IWRASGW in such cross-domain generations. We consider both cases, i.e., $\mu$ (target) and $\nu$ (source), of similar and dissimilar dimensionalities. 


\begin{wraptable}[12]{r}{0.6\textwidth}
    \centering
    \scriptsize
    \caption{\footnotesize{GW-2 distance and computational times for 2D$\rightarrow$3D and 3D$\rightarrow$2D generations in 4-point experiment based on 10 repetitions.}}
    \begin{tabular}{l|cc|cccccc}
    \toprule
     Method & \multicolumn{2}{c|}{Step 10000 (3D to 2D)} & \multicolumn{2}{c}{Step 10000 (2D to 3D)} \\
     \cmidrule{2-5}
     & $\text{GW}_2$($\downarrow$) & $\text{Time(s)}$($\downarrow$) & $\text{GW}_2$($\downarrow$) & $\text{Time(s)}$($\downarrow$) \\
     \midrule
    SGW & 122.54$\pm$11.94 & \underline{42.30$\pm$0.50} & 2.23$\pm$0.62 & \underline{41.90$\pm$0.43} \\
    Max-SGW & 385.28$\pm$8.98 & 213.45$\pm$2.23 & 2.25$\pm$0.25 & 210.98$\pm$2.76 \\
    DSGW & 102.95$\pm$3.59 & 247.68$\pm$5.65 & 1.64$\pm$0.40 & 242.39$\pm$4.87 \\
    EBSGW & 85.54$\pm$5.51 & \textbf{40.09$\pm$0.88}  & 1.79$\pm$0.70 &\textbf{41.39$\pm$0.63} \\
    RPSGW & 39.73$\pm$3.54 & 52.90$\pm$0.63 & 1.34$\pm$0.35 & 55.65$\pm$0.89 \\
    IWRPSGW & 37.44$\pm$3.23 & 53.12$\pm$1.09 & 1.31$\pm$0.45 & 57.84$\pm$1.21 \\
    RASGW & \textbf{18.80$\pm$2.46 } & 51.89$\pm$4.09 & \underline{1.28$\pm$0.54} & 59.03$\pm$3.54 \\
    IWRASGW &\underline{ 22.45$\pm$3.18} & 54.45$\pm$3.41 &\textbf{1.15$\pm$0.32} & 56.54$\pm$5.49 \\
    \bottomrule
    \end{tabular}
    \label{tab:step10000_results}
\end{wraptable}

\textbf{I. Learning Across Identical Spaces.} First, we consider both mm spaces to be similar, as in the case of a typical GAN. Specifically, we assess the model's ability to recover 2D Gaussian mixtures, a common task for mode recovery \citep{che2016mode, metz2016unrolled}. For this synthetic experiment, we follow the setup of \citet{bunne2019learning}, where both the generator and adversary are feed-forward networks with ReLU activations. The immediate consequence of plugging in RASGW as the loss is a significant improvement in convergence speed without compromising generation quality (see Section \ref{2d_2d}).


    


\textbf{II. Learning Across Dimensionalities.} We consider a mixture of bivariate Gaussians as the source distribution for a truly cross-domain experiment. The target is constructed as a tri-variate Gaussian mixture, each component being drawn around a vertex of a hypercube in $\mathbb{R}^3$ as its center. Here, we also use ReLU-activated generators. For a detailed discussion, we refer the reader to Section \ref{sec:incomparable}. 

\textbf{Results.} Based on our setting (see Section \ref{sec:architecture_gwgan} of Appendix), following \citet{bunne2019learning}, RASGW and IWRASGW bear superior outcomes. For the 2D source mixture distribution of 4 Gaussians (4-point experiment), Table \ref{tab:step10000_results} shows that the average GW-2 distance between the target and the generated 3D distributions is minimally realized using our models. The corresponding computational times are also comparable to SGW and other optimization-free variants, such as EBSGW, RPSGW and IWRPSGW. Figure \ref{fig:2_3} (see Section \ref{4pt}) compares the same for the generation of the base law. The efficient bidirectional generation hints at achieving lower relational discrepancy. Figure \ref{fig:2dto3d} and Figure \ref{fig:3dto2d} (see Appendix) illustrate the propagation of errors across iterations. Furthermore, RASGW and IWRASGW demonstrate significantly lower computational costs than DSGW and Max-SGW.

\textit{\textbf{Gromov-Wasserstein Autoencoder.}} We select autoencoders for the second generative experiment, which inherently considers unalike spaces in the form of the input space and the latent representation. Specifically, we modify the GW Autoencoder (GWAE) \citep{nakagawa2023gromovwasserstein} to reconstruct input image datasets (CIFAR-10 \citep{krizhevsky2009learning} and Omniglot \citep{liu2015faceattributes}). Instead of likelihood maximization, GWAE models learn representations by matching the geometric structure between the latent and data spaces. The training objective of GWAE between the metric measure spaces~$(\mathcal{X},d_{X},\mu)$ (input) and~$(\mathcal{Z},d_{Z},\pi_{\theta})$ (latent) is given as $\min_{\theta} \textrm{GW}_p^p(\mu, \pi_{\theta})$, where $\pi_{\theta}$ is the trainable latent prior. \citet{nakagawa2023gromovwasserstein} recast the GW objective into a main GW component~$\mathcal{L}_{gw}$ with three regularizations: a reconstruction loss~$\mathcal{L}_w$, a joint dual loss $\mathcal{L}_d$, and an entropy regularization~$\mathcal{L}_h$. We replace the objective using RASGW and IWRASGW and perform extensive experiments against competitors SGW and the other sliced variants for image reconstruction.

\begin{figure}[ht]
\centering
\begin{minipage}{.48\textwidth}
  \centering
  \includegraphics[width=\textwidth]{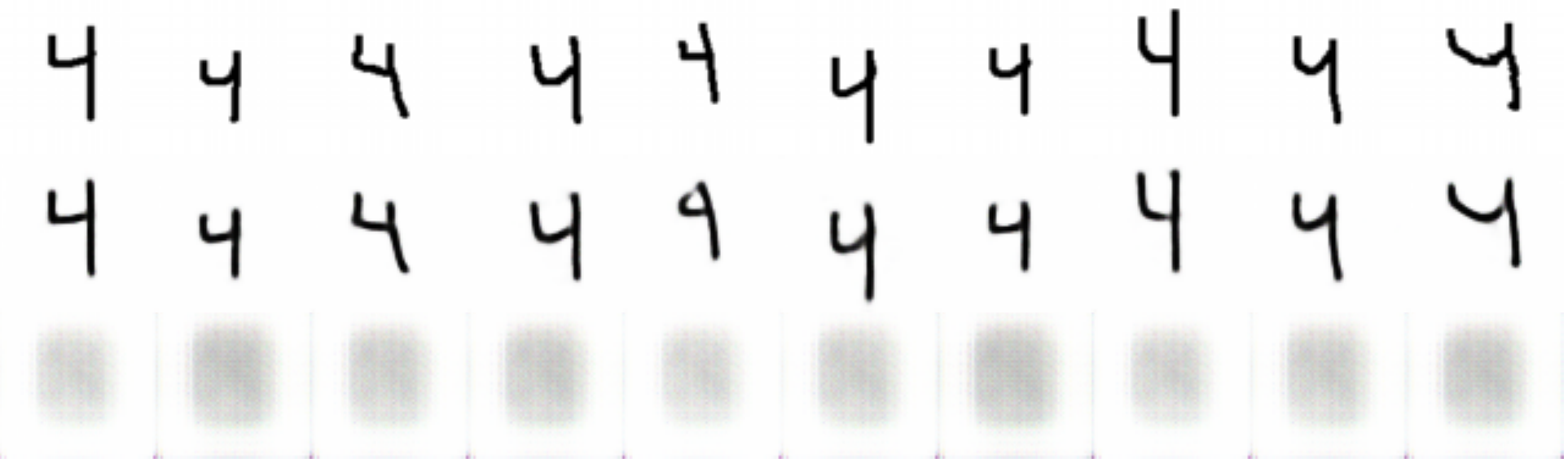}
  \label{fig:one-column-1}
\end{minipage}%
\hspace{6pt}
\begin{minipage}{.48\textwidth}
  \centering
  \includegraphics[width=\textwidth]{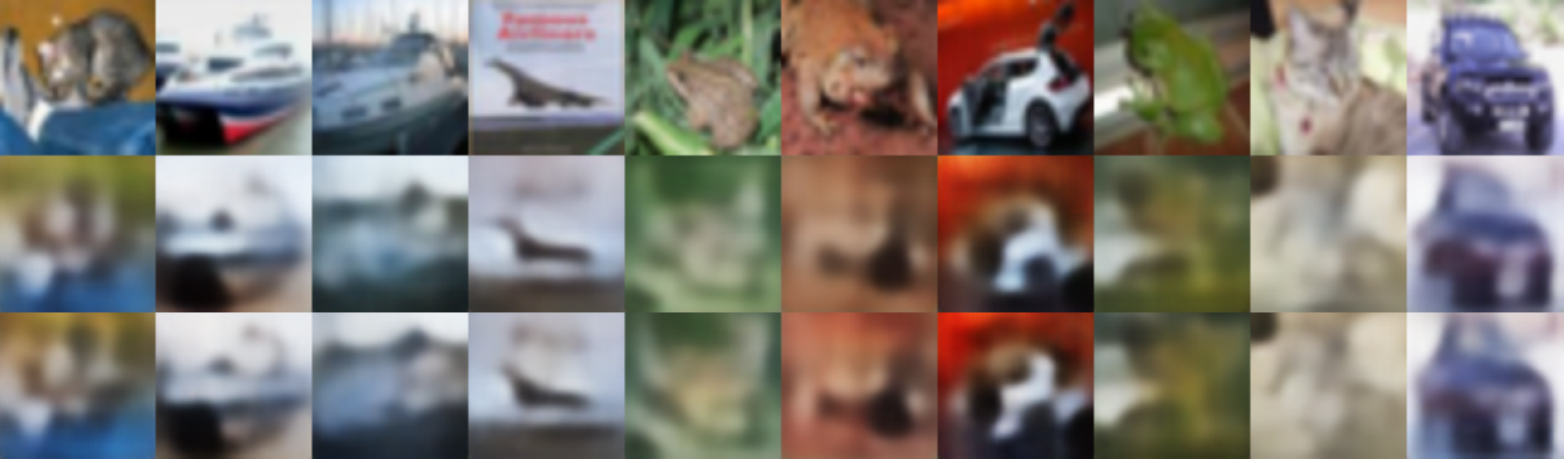}
  \label{fig:one-column-2}
\end{minipage}
\vspace{-8pt}
\caption{\textbf{(Left)} Target (\textit{top}), reconstructed Omniglot images (\textit{middle}) and outputs (\textit{bottom}) at initialization using IWRASGW. \textbf{(Right)} Input (\textit{top}), reconstructed images using RASGW at $40$th epoch (\textit{middle}) and reconstructed images using SGW at $48$th epoch (\textit{bottom}) for CIFAR-10.}
\label{fig:recon}
\end{figure}





\begin{wraptable}[11]{r}{0.5\textwidth}
    \centering
    \vspace{-1.5\intextsep}
    \caption{Results on CIFAR-10 (C) and Omniglot (O) datasets. FID scores ($\downarrow$), PSNR values (in dB, $\uparrow$), total training time (seconds, $\downarrow$), and epochs ($\downarrow$) are reported for each method. Columns C and O refer to CIFAR-10 and Omniglot, respectively.}
    \scalebox{0.6}{
    \begin{tabular}{lcccccccc}
    \toprule
    Method & \multicolumn{2}{c}{FID ($\downarrow$)} & \multicolumn{2}{c}{PSNR ($\uparrow$)} & \multicolumn{2}{c}{Total ($\downarrow$)} & \multicolumn{2}{c}{Epochs($\downarrow$)} \\
           & C & O & C & O & C & O & C & O \\
    \midrule
    SGW & 72.87  & \underline{20.95} & 22.65 & \textbf{22.13} & \underline{22459} & 16932 & 48 & 70 \\
    Max-SGW & 88.97 & 22.27 & 22.38 & 21.94 & 51345 & 42975 & \textbf{27} & 63 \\
    DSGW & 83.45 & 21.66 & 22.41 & \underline{22.10} & 59876 & 47585 & \underline{30} & 67 \\
    EBSGW & 78.50 & 38.99 & 22.48 & 20.53 & 28765 & \underline{15533} & 52 & \underline{55} \\
    RPSGW & 80.80 & 29.88 & 22.55 & 21.33 & 24356 & \textbf{12321} & 43 & \textbf{45} \\
    RASGW & \textbf{70.46} & 21.48 & \underline{22.70} & 20.56 & 23613 & 16731 & 40 & 62 \\
    IWRASGW & \underline{71.99} & \textbf{20.25} & \textbf{22.73} & 21.87 & \textbf{19879} & 18972 & 35 & 76 \\
    \bottomrule
    \end{tabular} \label{tab:final_results_combined_gwae}
    }
\end{wraptable}
\vspace{-4pt}
\textbf{Results.} We follow the architecture of \citet{nakagawa2023gromovwasserstein} as discussed in Section \ref{sec:architecture_gwae} of the Appendix. For all the experiments, we followed early stopping with a patience of 10 epochs, ensuring that training terminated if the validation performance did not improve for 10 consecutive epochs. In CIFAR-10, based on FID scores \citep{heusel2017gans}, RASGW and IWRASGW perform the best and the second best, respectively, whereas for Omniglot, IWRASGW performs the best with RASGW following closely in second. On PSNR scores, IWRASGW performs best, followed by SGW in CIFAR-10 (Table \ref{tab:final_results_combined_gwae}). The corresponding runtime attests to the efficacy of our models. For a discussion on the evaluation metrics, refer to Section \ref{sec:evaluation_metrics} in the Appendix. The qualitative results on reconstruction and plots of change in PSNR and loss values are placed in Section \ref{sec:results_cifar} (CIFAR-10) and \ref{sec:results_omniglot} (Omniglot).

\vspace{-8pt}
\section{Conclusion and Future Works}
\label{sec:conclusion}
We introduce a novel approach for developing informative projecting directions for SGW. Based on this, we derive the concept of relation-aware slicing distributions along with two optimization-free SGW variants, namely RASGW and IWRASGW, that preserve topological (metric) properties and improve computational cost. It improves over uniform and Max slicing based on informative projections that reflect in cross-domain generative tasks over distinct spaces. We demonstrate the competitive performance of RASGW and IWRASGW within the contexts of GW-GANs and GWAE. Future research might involve integrating the Wasserstein distance into our framework to develop a fused metric, better serving attributed network spaces.


\bibliography{example_paper}
\bibliographystyle{plainnat}

\newpage
\appendix

\section*{\Large{Appendix}} \label{appendix}
First, we present skipped proofs in the main text in Appendix~\ref{sec:proof}. We then discuss some related works in~\ref{sec:additional_baselines}. We provide the algorithms in Appendix \ref{sec:algorithm}. After that, we provide some additional experimental details in Appendix~\ref{sec:add_exps}. Finally, we provide details on the computational infrastructure in Appendix \ref{sec:comp_infra}. For codes for our experiments, we refer the reader to the repository \url{https://anonymous.4open.science/r/Relation-Aware-Slicing-in-Cross-Domain-Alignment-887A/README.md}.
\section{Technical Proofs}
\label{sec:proof}

\subsection{Proof of Theorem~\ref{theo:metricity}}
\label{subsec:proof:theo:metricity}

\textbf{Non-negativity.} Since the Gromov-Wasserstein distance is non-negative, we have $\textrm{GW}_p(\theta \sharp \mu,\theta \sharp \nu)\geq 0$ for all $\theta \in \mathbb{S}^{d-1}$. Therefore, $\mathbb{E}_{\theta \sim \sigma_{\textrm{RA}}(\theta;\mu,\nu,\sigma_\kappa)}[\textrm{GW}_p(\theta \sharp \mu,\theta \sharp \nu)] \geq 0$ which leads to $\text{RASGW}_p(\mu,\nu;\sigma_\kappa)\geq 0$. Similarly, we have $\mathbb{E}_{\theta_1,\ldots, \theta_L\sim \sigma_{\textrm{RA}}(\theta;\mu,\nu,\sigma_\kappa)} \left[\sum_{l=1}^L \textrm{GW}_p^p(\theta_l \sharp \mu, \theta_l \sharp \nu) \frac{f(\textrm{GW}_p^p(\theta_l \sharp \mu, \theta_l \sharp \nu))}{\sum_{j=1}^L f(\textrm{GW}_p^p(\theta_j \sharp \mu, \theta_j \sharp \nu))} \right] \geq 0 $ which implies $\text{IWRASGW}_p(\mu,\nu;\sigma_\kappa,L)\geq 0$

\textbf{Symmetry.} From the definition of RASGW and $f_{D_{\kappa}(x,x',y,y')}(\theta)$ as the density function of $D_\kappa(x,x',y,y')$ we have
\begin{align*}
    &\textrm{GW}_p^p(\theta \sharp \mu, \theta \sharp \nu) f_{D_{\kappa}(x,x',y,y')} (\theta) \\
    =& \frac{1}{2}\textrm{GW}_p^p(\theta \sharp \mu, \theta \sharp \nu) f_{\sigma_{\kappa}(Z_{x,x',y,y'})}(\theta) + \frac{1}{2}\textrm{GW}_p^p(\theta \sharp \mu, \theta \sharp \nu) f_{\sigma_{\kappa}(Z'_{x,x',y,y'})} (\theta) \nonumber \\
    =& \frac{1}{2}\textrm{GW}_p^p(\theta \sharp \mu, \theta \sharp \nu) f_{\sigma_{\kappa}(Z_{y,y',x,x'})}(\theta) + \frac{1}{2}\textrm{GW}_p^p(-\theta \sharp \mu, -\theta \sharp \nu) f_{\sigma_{\kappa}(Z'_{y,y',x,x'})} (-\theta).
\end{align*}
Now,
\begin{align*}
     &\int_{\mathbb{S}^{d-1}} \textrm{GW}_p^p(\theta \sharp \mu, \theta \sharp \nu) f_{D_{\kappa}(x,x',y,y')} (\theta) d\theta \nonumber \\
     =& \frac{1}{2} \int_{\mathbb{S}^{d-1}} \textrm{GW}_p^p(\theta \sharp \mu, \theta \sharp \nu) f_{\sigma_{\kappa}(Z_{y,y',x,x'})}(\theta) d\theta + \frac{1}{2} \int_{\mathbb{S}^{d-1}} \textrm{GW}_p^p(-\theta \sharp \mu, -\theta \sharp \nu) f_{\sigma_{\kappa}(Z'_{y,y',x,x'})} (-\theta) d\theta 
     \nonumber \\
     =& \frac{1}{2} \int_{\mathbb{S}^{d-1}} \textrm{GW}_p^p(\theta \sharp \mu, \theta \sharp \nu) f_{\sigma_{\kappa}(Z_{y,y',x,x'})}(\theta) d\theta + \frac{1}{2} \int_{\mathbb{S}^{d-1}} \textrm{GW}_p^p(\theta \sharp \mu, \theta \sharp \nu) f_{\sigma_{\kappa}(Z'_{y,y',x,x'})} (\theta) d\theta
     \nonumber \\
     =& \int_{\mathbb{S}^{d-1}} \textrm{GW}_p^p(\theta \sharp \nu, \theta \sharp \mu) f_{D_{\kappa}(y,y',x,x')} (\theta) d\theta,
\end{align*}
where we use the reflection property of $\mathbb{S}^{d-1}$,  i.e.,
\begin{align*}
    \textrm{GW}_p^p(\theta \sharp \mu,\theta \sharp \nu)&= \inf_{\pi \in \Pi(\mu,\nu)} \int |\theta^\top(x - y)|^p d\pi(x,y) \\
    &=\inf_{\pi \in \Pi(\mu,\nu)} \int |-\theta^\top(x - y)|^p d\pi(x,y)  = \textrm{GW}_p^p(-\theta \sharp \mu,-\theta \sharp \nu),
\end{align*}
and $$f_{\sigma_\kappa(\theta;Z_{x,x',y,y'})}(\theta) = f_{\sigma_\kappa(\theta;Z_{y,y',x,x'})}(\theta)$$ $$f_{\sigma_\kappa(\theta;Z'_{x,x',y,y'})}(\theta) = f_{\sigma_\kappa(\theta;Z'_{y,y',x,x'})}(-\theta)$$ which holds for both the vMF density $f_{\sigma_\kappa(\theta;\epsilon)}(\theta) \propto \exp\left(\kappa \epsilon^\top \theta\right)$ and the PS density $f_{\sigma_\kappa(\theta;\epsilon)}(\theta) \propto \left(1+\epsilon^\top \theta\right)^\kappa $.
After integrating over the measures, we get, 
\[
\text{RASGW}^p_p(\mu,\nu;D_\kappa) = \text{RASGW}^p_p(\nu,\mu;D_\kappa)
\]
 Similarly, we have $
    \text{IWRASGW}^p_p(\mu,\nu;\sigma_\kappa,L,f) =\text{IWRASGW}^p_p(\nu,\mu;\sigma_\kappa,L,f).$
    
\textbf{Existence of Isometric Isomorphism.}  We need to show that $\text{RASGW}_{p}(\mu,\nu;\sigma_k) = 0 $ if and only if $\exists$ an isometric isomorphism between $\mu$ and $\nu$. For the forward direction, since $\textrm{GW}_p(\theta\sharp \mu,\theta \sharp \nu)=0$ when $\mu$ and $\nu$ have an isometric isomorphism between them, we obtain directly $\text{RASGW}_{p}(\mu,\nu;\sigma_k) = 0 $. For the reverse direction, we use the same proof technique in~\cite{vayer2019sliced}. If $\text{RASGW}_{p}(\mu,\nu;\sigma_k) = 0$, we have $\int_{\mathbb{S}^{d-1}}\text{GW}_p\left(\theta {\sharp} \mu, \theta \sharp \nu\right) \mathrm{d} \sigma_{\textrm{RA}}(\theta;\mu,\nu,\sigma_k)=0$. Hence, we have $\textrm{GW}_p(\theta \sharp \mu,\theta \sharp \nu) =0 $ for $\sigma_{\textrm{RA}}(\theta;\mu,\nu,\sigma_k)$-almost surely $\theta \in \mathbb{S}^{d-1}$. Since $\sigma_{\textrm{RA}}(\theta;\mu,\nu,\sigma_k)$ is continuous due to the continuity of $\sigma_k$, we have  $\textrm{GW}_p(\theta \sharp \mu,\theta \sharp \nu) =0 $ for all $\theta \in \mathbb{S}^{d-1}$. Therefore, the measures are isometrically isomorphic. This follows from the proof of Theorem 10 in \citet{vayer2019sliced}.

For IWRASGW,  when $\mu$, $\nu$, are isometrically isomorphic we have $\textrm{GW}_p(\theta_l \sharp \mu,\theta_l \sharp \nu) = 0$ for any $\theta_1,\ldots,\theta_L \in \mathbb{S}^{d-1}$. Therefore, we have $\sum_{l=1}^L \textrm{GW}_p^p(\theta_l \sharp \mu, \theta_l \sharp \nu) \frac{f(\textrm{GW}_p^p(\theta_l \sharp \mu, \theta_l \sharp \nu))}{\sum_{j=1}^L \textrm{GW}_p^p(\theta_j \sharp \mu, \theta_j \sharp \nu)}=0$ for any $\theta_1,\ldots,\theta_L \in \mathbb{S}^{d-1}$ which implies 
\begin{align*}
    \text{IWRASGW}_p(\mu,\nu;\sigma_\kappa,L)= \mathbb{E} \left[\sum_{l=1}^L \textrm{GW}_p^p(\theta_l \sharp \mu, \theta_l \sharp \nu) \frac{f(\textrm{GW}_p^p(\theta_l \sharp \mu, \theta_l \sharp \nu))}{\sum_{j=1}^L \textrm{GW}_p^p(\theta_j \sharp \mu, \theta_j \sharp \nu)}\right]= 0.
\end{align*}
In the reverse direction, when  $\text{IWRASGW}_p(\mu,\nu;\sigma_\kappa,L)=0$, it means that 
we have $\sum_{l=1}^L \textrm{GW}_p^p(\theta_l \sharp \mu, \theta_l \sharp \nu) \frac{f(\textrm{GW}_p^p(\theta_l \sharp \mu, \theta_l \sharp \nu))}{\sum_{j=1}^L \textrm{GW}_p^p(\theta_j \sharp \mu, \theta_j \sharp \nu)}=0$ 
for any $\theta_1,\ldots,\theta_L \in \mathbb{S}^{d-1}$. Since $f(\textrm{GW}_p^p(\theta_l \sharp \mu, \theta_l \sharp \nu))>0$ for any $\theta_j$, it implies that $\textrm{GW}_p^p(\theta_l \sharp \mu, \theta_l \sharp \nu)=0$ for all $\theta_l \in \mathbb{S}^{d-1}$. With similar arguments to the proof of RASGW, we obtain $\mu$, $\nu$ are isometrically isomorphic, which completes the proof.

\textbf{Quasi-Triangle Inequality.}  Given three probability measures $\mu_1,\mu_2,\mu_3 \in \mathcal{P}_p(\mathbb{R}^d)$ we have:
\begin{align*}
    &\text{RASGW}_p(\mu_1,\mu_2;\sigma_\kappa) =\left(\mathbb{E}_{\theta \sim \sigma_{\textrm{RA}}(\theta;\mu_1,\mu_2,\sigma_\kappa)}[\textrm{GW}_p^p(\theta \sharp \mu_1, \theta \sharp \mu_2)]\right)^{\frac{1}{p}} \\
    \leq& \left(\mathbb{E}_{\theta \sim \sigma_{\textrm{RA}}(\theta;\mu_1,\mu_2,\sigma_\kappa)}[(\textrm{GW}_p(\theta \sharp \mu_1, \theta \sharp \mu_3)+ \textrm{GW}_p(\theta \sharp \mu_3, \theta \sharp \mu_2))^p]\right)^{\frac{1}{p}} \\
    \leq& \left(\mathbb{E}_{\theta \sim \sigma_{\textrm{RA}}(\theta;\mu_1,\mu_2,\sigma_\kappa)}[\textrm{GW}_p^p(\theta \sharp \mu_1, \theta \sharp \mu_3)]\right)^{\frac{1}{p}} +\left(\mathbb{E}_{\theta \sim \sigma_{\textrm{RA}}(\theta;\mu_1,\mu_2,\sigma_\kappa)}[\textrm{GW}_p^p(\theta \sharp \mu_3, \theta \sharp \mu_2)]\right)^{\frac{1}{p}} \\
    =& \text{RASGW}_{p}(\mu_1,\mu_3;\sigma_\kappa,\mu_1,\mu_2) + \text{RASGW}_{p}(\mu_3,\mu_2;\sigma_\kappa,\mu_1,\mu_2), 
\end{align*}
where the first inequality is due to the triangle inequality of Wasserstein distance and the second inequality is due to the Minkowski inequality. We complete the proof here.
\subsection{Proof of Proposition~\ref{prop:connection}}
\label{subsec:proof:prop:connection}

\textit{(i)} To prove that $\text{RASGW}_{p}(\mu,\nu;\sigma_\kappa)\leq \text{IWRASGW}_{p}(\mu,\nu;\sigma_\kappa, L)$, we introduce the following lemma which had been proved in~\citet{nguyen2023energy}. Here, we provide the proof for completeness.

\begin{lemma}
    \label{lemma:inequality} For any $L\geq 1$, $0\leq a_{1} \leq a_{2} \leq \ldots \leq a_{L}$ and $0< b_{1} \leq b_{2} \leq \ldots \leq b_{L}$, we have:
    \begin{align}
        \frac{1}{L} (\sum_{i = 1}^{L} a_{i}) (\sum_{i = 1}^{L} b_{i}) \leq \sum_{i = 1}^{L} a_{i} b_{i}. \label{eq:key_inequality_2}
    \end{align}
\end{lemma}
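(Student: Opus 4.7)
\textbf{Proof plan for Lemma~\ref{lemma:inequality}.} This is Chebyshev's sum inequality for two similarly ordered (both non-decreasing) sequences, so the plan is to exploit the fact that the pairs $(a_i - a_j)$ and $(b_i - b_j)$ always share the same sign.

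The first step will be to introduce the symmetric double sum
\begin{equation*}
S := \sum_{i=1}^{L}\sum_{j=1}^{L} (a_i - a_j)(b_i - b_j).
\end{equation*}
Since $a_i \leq a_j$ exactly when $b_i \leq b_j$ (because both sequences are sorted in the same non-decreasing order), each summand $(a_i - a_j)(b_i - b_j)$ is a product of two numbers of matching sign, hence non-negative. Therefore $S \geq 0$; this is the only non-algebraic input and is the conceptual heart of the argument.

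The second step is a straightforward expansion of $S$:
\begin{equation*}
S = \sum_{i,j}\bigl(a_i b_i - a_i b_j - a_j b_i + a_j b_j\bigr) = 2L \sum_{i=1}^{L} a_i b_i - 2\Bigl(\sum_{i=1}^{L} a_i\Bigr)\Bigl(\sum_{j=1}^{L} b_j\Bigr),
\end{equation*}
using that $\sum_{i,j} a_i b_i = L \sum_i a_i b_i$ and similarly for $\sum_{i,j} a_j b_j$, while $\sum_{i,j} a_i b_j = (\sum_i a_i)(\sum_j b_j)$. Combining this identity with $S \geq 0$ yields
\begin{equation*}
L \sum_{i=1}^{L} a_i b_i \;\geq\; \Bigl(\sum_{i=1}^{L} a_i\Bigr)\Bigl(\sum_{i=1}^{L} b_i\Bigr),
\end{equation*}
and dividing through by $L$ gives the stated inequality \eqref{eq:key_inequality_2}. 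The hypotheses $a_i \geq 0$ and $b_i > 0$ are not actually needed for this step; they are only present because the lemma is meant to be applied downstream to quantities like $\mathrm{GW}_p^p(\theta_l\sharp\mu,\theta_l\sharp\nu)$ and $f(\mathrm{GW}_p^p(\theta_l\sharp\mu,\theta_l\sharp\nu))$, which are non-negative and positive respectively.

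There is essentially no obstacle here: once the symmetric double sum is written down, the rest is bookkeeping. The only subtle point is to be explicit about why $(a_i - a_j)$ and $(b_i - b_j)$ share a sign, namely that both sequences are sorted with the same permutation (the identity), which is exactly the \emph{similarly ordered} hypothesis. This is also where the generalization to arbitrary non-monotonic sequences would fail, so it is worth flagging in the write-up.
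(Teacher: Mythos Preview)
Your proof is correct and is in fact the standard proof of Chebyshev's sum inequality. The paper, however, argues differently: it proceeds by induction on $L$, and in the inductive step invokes the rearrangement inequality in the form $a_{L+1}b_{L+1} + a_i b_i \geq a_{L+1} b_i + a_i b_{L+1}$ for each $i \leq L$, summing these to pass from $L$ to $L+1$. Your symmetric double-sum argument is more self-contained and arguably cleaner, since it does not rely on the rearrangement inequality as an external input and avoids induction altogether; the paper's route, on the other hand, makes explicit the two-term rearrangement step that underlies the whole inequality. Both are valid, and your remark that the sign hypotheses on $a_i, b_i$ are not needed for the inequality itself (only for the downstream application) is a nice observation that the paper does not make.
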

\begin{proof}
    We prove Lemma~\ref{lemma:inequality} via an induction argument. For $L=1$, it is clear that $a_ib_i = a_ib_i$. Now, we assume that the inequality holds for $L$ i.e., $\frac{1}{L} (\sum_{i = 1}^{L} a_{i}) (\sum_{i = 1}^{L} b_{i}) \leq \sum_{i = 1}^{L} a_{i} b_{i}$ or  $
    (\sum_{i = 1}^{L} a_{i}) (\sum_{i = 1}^{L} b_{i}) \leq L \sum_{i = 1}^{L} a_{i} b_{i}.$
Now, we want to show that the inequality holds for $L+1$ i.e., $ (\sum_{i = 1}^{L+1} a_{i}) (\sum_{i = 1}^{L} b_{i}) \leq (L+1) \sum_{i = 1}^{L+1} a_{i} b_{i}.$ First, we have:
\begin{align*}
    (\sum_{i = 1}^{L + 1} a_{i}) (\sum_{i = 1}^{L + 1} b_{i}) &= (\sum_{i = 1}^{L } a_{i}) (\sum_{i = 1}^{L} b_{i})+ (\sum_{i = 1}^{L} a_{i}) b_{L + 1} + (\sum_{i = 1}^{L} b_{i}) a_{L + 1} + a_{L + 1} b_{L + 1} \\
    &\leq L \sum_{i = 1}^{L} a_{i} b_{i} + (\sum_{i = 1}^{L} a_{i}) b_{L + 1} + (\sum_{i = 1}^{L} b_{i}) a_{L + 1} + a_{L + 1} b_{L + 1}.
\end{align*}
By rearrangement inequality, we have $a_{L + 1} b_{L + 1} + a_{i} b_{i} \geq a_{L + 1} b_{i} + b_{L + 1} a_{i}$ for all $1 \leq i \leq L$. By taking the sum of these inequalities over $i$ from $1$ to $L$, we obtain:
\begin{align*}
    (\sum_{i = 1}^{L} a_{i}) b_{L + 1} + (\sum_{i = 1}^{L} b_{i}) a_{L + 1} \leq \sum_{i = 1}^{L} a_{i} b_{i} + L a_{L + 1} b_{L + 1}. 
\end{align*}
Therefore, we have
\begin{align*}
    (\sum_{i = 1}^{L + 1} a_{i}) (\sum_{i = 1}^{L + 1} b_{i}) &\leq L \sum_{i = 1}^{L} a_{i} b_{i} + (\sum_{i = 1}^{L} a_{i}) b_{L + 1} + (\sum_{i = 1}^{L} b_{i}) a_{L + 1} + a_{L + 1} b_{L + 1} \\
    &\leq L \sum_{i = 1}^{L} a_{i} b_{i} + \sum_{i = 1}^{L} a_{i} b_{i} + L a_{L + 1} b_{L + 1} + a_{L + 1} b_{L + 1}  \\
    &=(L + 1) (\sum_{i = 1}^{L + 1} a_{i} b_{i}),
\end{align*}
which completes the proof.
\end{proof}
From Lemma~\ref{lemma:inequality}, with $a_i = \text{GW}_p^p(\theta_i\sharp \mu,\theta_j \sharp \nu)$ and $b_i = f(\text{GW}_p^p(\theta_i\sharp \mu,\theta_j \sharp \nu))$, we have:
\begin{align*}
    \frac{1}{L} \sum_{i=1}^l \text{GW}_p^p(\theta_i\sharp \mu,\theta_i \sharp \nu) \leq \sum_{i=1}^L \text{GW}_p^p(\theta_i\sharp \mu,\theta_i \sharp \nu) \frac{f(\text{GW}_p^p(\theta_i\sharp \mu,\theta_i \sharp \nu)) }{\sum_{j=1}^L f(\text{GW}_p^p(\theta_j\sharp \mu,\theta_j \sharp \nu))}.
\end{align*}
Taking the expectation with respect to $\theta_1,\ldots,\theta_L \overset{i.i.d}{\sim} \sigma_{\textrm{RA}}(\theta;\mu,\nu,\sigma_\kappa)$, we obtain $\text{RASGW}_{p}(\mu,\nu;\sigma_\kappa)\leq \text{IWRASGW}_{p}(\mu,\nu;\sigma_\kappa, L)$.

Now to show that $\text{IWRASGW}_{p}(\mu,\nu;\sigma_\kappa, L)\leq \text{Max-SGW}(\mu,\nu)$, we have  $\theta^\star =\text{argmax}_{\theta \in \mathbb{S}^{d-1}} \text{GW}_p(\theta \sharp \mu,\theta \sharp \nu)$ since  $\mathbb{S}^{d-1}$ is compact and the function $\theta \to \text{GW}_p(\theta \sharp \mu,\theta \sharp \nu)$ is continuous. From the definition of the IWRASGW, for any $L\geq 1, \sigma_\kappa \in \mathcal{P}(\mathbb{S}^{d-1})$ we have:
\begin{align*}
    &\text{IWRASGW}_{p}(\mu,\nu;\sigma_\kappa, L) \\ =&  \left(\mathbb{E}_{\theta_1,\ldots, \theta_L\sim \sigma_{\textrm{RA}}(\theta;\mu,\nu,\sigma_\kappa)}  \left[\sum_{l=1}^L \text{GW}_p^p(\theta_l \sharp \mu, \theta_l \sharp \nu) \frac{f(\text{GW}_p^p(\theta_l \sharp \mu, \theta_l \sharp \nu))}{\sum_{j=1}^L f(\text{GW}_p^p(\theta_j \sharp \mu, \theta_j \sharp \nu))} \right]\right)^{\frac{1}{p}} \\
    \leq& \left(\mathbb{E}_{\theta \sim \sigma_{\textrm{RA}}(\theta;\mu,\nu,\sigma_k)} \left[ \text{GW}_p^p \left(\theta^\star \sharp \mu, \theta^\star \sharp \nu \right)\right]\right)^{\frac{1}{p}} = \text{GW}_p^p \left(\theta^\star \sharp \mu, \theta^\star \sharp \nu \right) = \text{Max-SGW}_p(\mu,\nu).
\end{align*}

\textit{(ii)}  We first recall the density of the von Mises-Fisher distribution and the Power spherical distribution. In particular, we have $vMF(\theta;\epsilon,\kappa):= \frac{\kappa^{d/2-1}}{(2\pi)^{d/2} I_{d/2-1}(\kappa)} \exp(\kappa\epsilon^\top \theta)$ with $I_s$  denotes the modified Bessel function of the first kind, and the PS distribution $PS(\theta;\epsilon,\kappa)  =  \left( 2^{d+\kappa -1} \pi^{(d-1)/2} \frac{\Gamma( (d-1)/2+\kappa )}{\Gamma (d+\kappa -1)} \right)^{-1}(1+\epsilon^\top \theta)^\kappa $. We have:
\begin{align*}
    &\lim_{\kappa \to 0}\frac{\kappa^{d/2-1}}{(2\pi)^{d/2} I_{d/2-1}(\kappa)} \exp(\kappa\epsilon^\top \theta) \to C_1,\\
    &\lim_{\kappa \to 0}\left( 2^{d+\kappa -1} \pi^{(d-1)/2} \frac{\Gamma( (d-1)/2+\kappa )}{\Gamma (d+\kappa -1)} \right)^{-1}(1+\epsilon^\top \theta)^\kappa\to C_2
\end{align*}
for some constant $C_1$ and $C_2$ which do not depend on $\theta$, hence, the vMF distribution and the PS distribution converge to the uniform distribution when $\kappa \to 0$. Therefore, we have $vMF(\theta;\epsilon,\kappa) \to \mathcal{U}(\mathbb{S}^{d-1})$ and $PS(\theta;\epsilon,\kappa) \to \mathcal{U}(\mathbb{S}^{d-1})$.
Therefore, we have $\sigma_{\textrm{RA}}(\theta;\mu,\nu,\sigma_\kappa) \to \mathcal{U}(\mathbb{S}^{d-1})$.

Now, we need to show that $\text{GW}_p^p(\theta\sharp \mu,\theta\sharp \nu)$ is bounded and continuous with respective to $\theta$. For boundedness, it is sufficient to note that the GW distance is a bounded distance.

For the continuity, let $(\theta_t)_{t\geq 1}$ be a sequence on $\mathbb S^{d-1}$ which converges to $\theta\in\mathbb S^{d-1}$ i.e., $\|\theta_t - \theta \| \to 0$ as $t \to \infty$, and a arbitrary measure $\mu \in \mathcal{P}_p(\mathbb{R}^d)$. Then we have:
\begin{align*}
    \text{GW}_p (\theta \sharp \mu,\theta_t \sharp \mu) & = \left(\inf_{\pi\in\Pi(\mu,\mu)}\int_{\mathbb R^{d}} \vert | \theta_t^\top x - \theta_t^\top x' |- | \theta^\top y - \theta^\top y' |\vert^p d\pi(x,y)d\pi(x',y')\right)^{1/p}\\
     & \leq \left(\int_{\mathbb R^{d}} \vert | \theta_t^\top x - \theta_t^\top x' |- | \theta^\top x - \theta^\top x' |\vert^p d\pi(x,x)d\pi(x',x')\right)^{1/p} \\
     & \leq \left(\int_{\mathbb R^d} \| x - x'\|^p \mu(dx)\mu(dx')\right)^{1/p} \| \theta_t - \theta\| \to 0 \quad \text{as } t\to\infty,
\end{align*}
where $\left(\int_{\mathbb R^d} \| x \|^p \mu(dx)\right)^{1/p} < \infty$ since $\mu \in \mathcal{P}_p(\mathbb{R}^d)$, and the second inequality is due to the Cauchy-Schwartz inequality. 

Using the triangle inequality, we have:
\begin{align*}
    &\left|\text{GW}_p (\theta_t \sharp \mu,\theta_t \sharp \nu) - \text{GW}_p (\theta \sharp \mu,\theta \sharp \nu)\right| \\ \leq& \left|\text{GW}_p (\theta_t \sharp \mu,\theta_t \sharp \nu) - \text{GW}_p (\theta \sharp \mu,\theta_t \sharp \nu)\right| + \left|\text{GW}_p (\theta \sharp \mu,\theta_t \sharp \nu) - \text{GW}_p (\theta \sharp \mu,\theta \sharp \nu) \right| \\
    \leq& \:\text{GW}_p (\theta \sharp \mu,\theta_t \sharp \mu) + \text{GW}_p (\theta \sharp \nu,\theta_t \sharp \nu) \to 0 \: \text{as}\: t\to \infty,
\end{align*}
hence, $\text{GW}_p (\theta_t \sharp \mu,\theta_t \sharp \nu) \to \text{GW}_p (\theta \sharp \mu,\theta \sharp \nu)$ as $t\to\infty$, which complete the proof of continuity.

From the boundedness, continuity, and  $\sigma_{\textrm{RA}}(\theta;\mu,\nu,\sigma_\kappa) \to \mathcal{U}(\mathbb{S}^{d-1})$, we have
\begin{align*}
    \text{RASGW}_{p}^p(\mu,\nu;\sigma_\kappa) & = \mathbb{E}_{\theta \sim \sigma_{\textrm{RA}}(\theta;\mu,\nu,\sigma_\kappa)}[\text{GW}_p(\theta \sharp \mu, \theta \sharp \nu)] \\ &\to \mathbb{E}_{\theta \sim \mathcal{U}(\mathbb{S}^{d-1})}[\text{GW}_p(\theta \sharp \mu, \theta \sharp \nu)] = \textrm{SGW}_{p}^p(\mu,\nu).
\end{align*}
Applying the continuous mapping theorem for $x\to x^{1/p}$, we obtain $\lim_{\kappa \to 0} \text{RASGW}_{p}(\mu,\nu;\sigma_\kappa) \to \textrm{SGW}_p(\mu,\nu)$.

\textit{(iii)} Since we have proved that $\text{GW}_p^p(\theta\sharp \mu,\theta\sharp \nu)$ is bounded and continuous with respective to $\theta$, we can show that  $\sum_{i=1}^L\textrm{GW}_p^p(\theta_i\sharp \mu,\theta_i \sharp \nu) \frac{f(\text{GW}_p^p(\theta_i\sharp \mu,\theta_i \sharp \nu)) }{\sum_{j=1}^L f(\text{GW}_p^p(\theta_j\sharp \mu,\theta_j \sharp \nu))}$ are bounded and continuous with respect to $\theta_1,\ldots,\theta_L$. As $L \to \infty$, we have 
\begin{align*}
    \sum_{i=1}^L\textrm{GW}_p^p(\theta_i\sharp \mu,\theta_i \sharp \nu) \frac{f(\text{GW}_p^p(\theta_i\sharp \mu,\theta_i \sharp \nu)) }{\sum_{j=1}^L f(\text{GW}_p^p(\theta_j\sharp \mu,\theta_j \sharp \nu))} \to \mathbb{E}_{\gamma \sim \sigma_{\mu,\nu,f}(\gamma)}[\gamma \sharp \mu,\gamma \sharp \nu] = \text{EBSGW}_p^p(\mu,\nu;f).
\end{align*}
Applying the continuous mapping theorem for $x\to x^{1/p}$, we obtain $\lim_{L \to \infty} \text{IWRASGW}_{p}(\mu,\nu;\sigma_\kappa,L,f) \to \text{EBSGW}_p(\mu,\nu;f)$. 

\subsection{Proof of Proposition~\ref{prop:sample_complexity}}
\label{subsec:proof:prop:sample_complexity}
The proof of this result follows from the proof of Proposition 3 in~\citet{nguyen2023energy}. We assume that $\mu$ has a compact support $\mathcal{X} \subset \mathbb{R}^d$. 

From Proposition~\ref{prop:connection}, we have
\begin{align*}
    \mathbb{E} [\text{RASGW}_{p} (\mu_{n},\mu;\sigma_\kappa)] &\leq  \mathbb{E} [\text{IWRASGW}_{p} (\mu_{n},\mu;\sigma_\kappa,L)] \\ &\leq \mathbb{E} \left[\text{Max-SGW}_p (\mu_{n},\mu) \right] \leq c\mathbb{E} \left[\text{Max-SW}_p (\mu_{n},\mu) \right],
\end{align*}
for any $\sigma_\kappa \in \mathcal{P}(\mathbb{S}^{d-1})$ 
 and $L\geq 1$. Further, the last inequality follows from \citet{zhang2024gromov}.
 
 Therefore, the proposition follows as long as we can demonstrate that $$\mathbb{E} [\text{Max-SW}_p (\mu_{n},\mu)] \leq C'\sqrt{(d+1) \log n/n}$$ where $\mu_n = \frac{1}{n}\sum_{i=1}^n \delta_{X_i}$ with $X_1,\ldots,X_n \overset{i.i.d}{\sim} \mu$, and $C' > 0$ is some universal constant which satisfies $C = cC'$ and the outer expectation is taken with respect to $X_1,\ldots,X_n$. 

Using the closed-form of one-dimensional Wasserstein distance, we have:
\begin{align*}
    \text{Max-SW}_p (\mu_{n},\mu) & = \max_{\theta \in \mathbb{S}^{d-1}} \int_{0}^{1} |F_{n, \theta}^{-1}(z) - F_{\theta}^{-1}(z)|^{p} d z,
\end{align*}
where $F_{n,\theta}$ and $F_{\theta}$ as the cumulative distributions of $\theta \sharp \mu_{n}$ and $\theta\sharp \mu$. Since $\text{GW}_p( (t \theta) \sharp \mu,(t \theta) \sharp \nu) = t\text{GW}_p( \theta \sharp \mu, \theta \sharp \nu)$ for $t > 0$. We can rewrite Max-SW as:
\begin{align*}
    \text{Max-SW}_p^{p} (\mu_{n},\mu) &=
      \max_{\theta \in \mathbb{R}^{d}: \|\theta\| = 1} \int_{0}^{1} |F_{n, \theta}^{-1}(z) - F_{\theta}^{-1}(z)|^{p} d z \\
    & \leq \text{diam}(\mathcal{X}) \max_{x \in \mathbb{R}, \theta \in \mathbb{R}^{d}: \|\theta\| \leq 1} |F_{n, \theta}(x) - F_{\theta}(x)|^{p}\\
    &=\text{diam}(\mathcal{X}) \sup_{A \in \mathcal{A}} |\mu_{n}(A) - \mu(A)|,
\end{align*}
where $\mathcal{A}$ is the set of half-spaces $\{z \in \mathbb{R}^{d}: \theta^{\top} z \leq x\}$ for all $\theta \in \mathbb{R}^{d}$ such that $\|\theta\| \leq 1$. 
From VC inequality (Theorem 12.5 in~\citet{devroye2013probabilistic}), we have
$$
    \mathbb{P}\left(\sup_{A \in \mathcal{A}} |\mu_{n}(B) - \mu(A)| >t \right) \leq 8 S(\mathcal{A},n) e^{-nt^2 /32}.
$$ with $S(\mathcal{A},n)$ is the growth function. From the Sauer Lemma (Proposition 4.18 in \citet{wainwrighthigh}), the  growth function is upper bounded by $(n+1)^{VC(\mathcal{A})}$. Moreover, we can get $VC(\mathcal{A})= d+1$ from Example 4.21 in~\citet{wainwrighthigh}.

\vspace{0.5em}
\noindent
Let $8 S(\mathcal{A},n) e^{-nt^2 /32} \leq \delta$, we have $t^2 \geq \frac{32}{n} \log \left( \frac{8S(\mathcal{A},n)}{\delta}\right)$. Therefore, we obtain
\begin{align*}
    \mathbb{P}\left(\sup_{A \in \mathcal{B}} |\mu_{n}(A) - \mu(A)| \leq \sqrt{\frac{32}{n} \log \left( \frac{8S(\mathcal{A},n)}{\delta}\right)} \right) \geq 1-\delta,
\end{align*}
Using the Jensen inequality and the tail sum expectation for non-negative random variable, we have:
\begin{align*}
    &\mathbb{E}\left[\sup_{A \in \mathcal{A}} |\mu_{n}(A) - \mu(A)|\right] \\&\leq \sqrt{\mathbb{E}\left[\sup_{A \in \mathcal{A}} |\mu_{n}(A) - \mu(A)|\right]^2} = \sqrt{\int_{0}^\infty \mathbb{P}\left(\left(\sup_{A \in \mathcal{A}} |\mu_{n}(A) - \mu(A)| \right)^2>t \right)dt} \\
    &=\sqrt{\int_{0}^u \mathbb{P}\left(\left(\sup_{A \in \mathcal{A}} |\mu_{n}(A) - \mu(A)| \right)^2>t \right)dt + \int_{u}^\infty \mathbb{P}\left(\left(\sup_{A \in \mathcal{A}} |\mu_{n}(A) - \mu(A)| \right)^2>t \right)dt} \\
    &\leq \sqrt{\int_{0}^u 1dt + \int_{u}^\infty8 S(\mathcal{A},n) e^{-nt /32} dt} = \sqrt{u + 256 S(\mathcal{A},n)  \frac{e^{-nu/32}}{n}}.
\end{align*}
Since the inequality holds for any $u$, we search for the best $u$ that makes the inequality tight. Let $f(u) = u + 256 S(\mathcal{A},n)  \frac{e^{-nu/32}}{n}$, we have $f'(u) = 1+ 8S(\mathcal{A},n) e^{-nu/32}$. Setting $f'(u)=0$, we obtain the minima $u^\star = \frac{32 \log (8S(\mathcal{A},n))}{n}$. Plugging $u^\star$ in the inequality, we obtain:
\begin{align*}
    \mathbb{E}\left[\sup_{A \in \mathcal{A}} |\mu_{n}(A) - \mu(A)|\right] & \leq \sqrt{\frac{32 \log (8S(\mathcal{A},n))}{n} + 32 }\leq C' \sqrt{\frac{(d+1)\log (n+1)}{n}},
\end{align*}
by using Sauer Lemma i.e., $S(\mathcal{A},n) \leq (n+1)^{VC(\mathcal{A})} \leq (n+1)^{d+1}$. Putting the above results together leads to
\begin{align*}
    \mathbb{E} [\text{Max-SW}_p (\mu_{n},\mu)] \leq C'\sqrt{(d+1) \log n/n},
\end{align*}
where $C' > 0$ is some universal constant. As a consequence, we obtain the conclusion of the proof.

\subsection{Proof of Proposition~\ref{proposition:MCerror}}
\label{subsec:proof:proposition:MCerror}

For any $p\geq 1$, $d \geq 1$, $\sigma_\kappa \in \mathcal{P}(\mathbb{S}^{d-1})$, and $\mu,\nu \in \mathcal{P}_p(\mathbb{R}^d)$, using Jensen's inequality, we have:
\begin{align*}
    &| \mathbb{E}[\widehat{\text{RASGW}_{p}^p}(\mu,\nu;\sigma_\kappa)] - \text{RASGW}_{p}^p (\mu,\nu;\sigma_\kappa)| \\
    &\leq  \mathbb{E} | \widehat{\text{RASGW}_{p}^p}(\mu,\nu;\sigma_\kappa) - \text{RASGW}_{p}^p (\mu,\nu;\sigma_\kappa)| \\
    &\leq \left(\mathbb{E} | \widehat{\text{RASGW}_{p}^p}(\mu,\nu;\sigma_\kappa ) - \text{RASGW}_{p}^p (\mu,\nu;\sigma_\kappa)|^2 \right)^{\frac{1}{2}} \\
    &= \left(\mathbb{E} \left( \frac{1}{M}\sum_{l=1}^M  \text{GW}_p^p (\theta_{l} \sharp \mu,\theta_{l} \sharp \nu)  - \mathbb{E}_{\theta\sim \sigma_{\textrm{RA}}(\theta;\mu,\nu,\sigma_\kappa)}\left[  \text{GW}_p^p \left(\theta \sharp \mu, \theta \sharp \nu \right)\right]\right)^2 \right)^{\frac{1}{2}}.
\end{align*}

Since 
\begin{align*}
    \mathbb{E}[\frac{1}{M}\sum_{l=1}^M  \text{GW}_p^p (\theta_{l} \sharp \mu,\theta_{l} \sharp \nu)] &= \frac{1}{M}\sum_{l=1}^M \mathbb{E}[\text{GW}_p^p (\theta_{l} \sharp \mu,\theta_{l} \sharp \nu)] \\ &= \frac{1}{M}\sum_{l=1}^M \widehat{\text{RASGW}_{p}^p}(\mu,\nu;\sigma_\kappa)  = \widehat{\text{RASGW}_{p}^p}(\mu,\nu;\sigma_\kappa),
\end{align*}
we have
\begin{align*}
    \mathbb{E} | \widehat{\text{RASGW}}_{p}^p(\mu,\nu;\sigma_\kappa) - \text{RASGW}_{p}^p (\mu,\nu;\sigma_\kappa)|&\leq  \left( Var_{\theta \sim \sigma_{\textrm{RA}}(\theta;\mu,\nu,\sigma_\kappa )}\left[ \frac{1}{M}  \sum_{l=1}^M  \text{GW}_p^p \left(\theta_l \sharp \mu, \theta_l \sharp \nu \right)\right]\right)^{\frac{1}{2}} \\
    &= \frac{1}{\sqrt{M}} Var\left[ \text{GW}_p^p \left(\theta \sharp \mu, \theta \sharp \nu \right)\right]^{\frac{1}{2}}.
\end{align*}
Now, since $\theta_1,\ldots,\theta_M \overset{i.i.d}{\sim} \sigma_{\textrm{RA}}(\theta;\mu,\nu,\sigma_\kappa )$, this completes the proof.
\section{Additional Baselines}
\label{sec:additional_baselines}
In this section, we present two baselines constructed based on similar concepts in the Sliced-Wasserstein literature. We adapt them to the Sliced Gromov-Wasserstein setup to provide a clearer exposition of the advantages of our distance. These baselines act in addition to SGW and Max-SGW, baselines that already exist. The first baseline is based on a similar premise of finding a good slicing distribution, but uses a computationally expensive optimization procedure to find the distribution. On the other hand, the second baseline is optimization-free, but its main computational bottleneck is sampling from the proposed distribution. 

\textbf{Distributional Sliced Gromov-Wasserstein.} To choose an improved slicing distribution, \citet{nguyen2021distributional} introduced the distributional sliced Wasserstein (DSW) distance. Adapting this to our context, we define the distributional sliced Gromov-Wasserstein (DSGW) distance, which is defined between two probability measures $\mu \in \mathcal{P}_p(\mathbb{R}^d)$ and $\nu \in \mathcal{P}_p(\mathbb{R}^d)$ as:
\begin{align}
\label{eq:DSGW}
    \text{DSGW}_p^p(\mu,\nu) = \max_{\psi \in \Psi} \mathbb{E}_{\theta \sim \sigma_\psi(\theta)} [\text{GW}_p^p(\theta \sharp \mu, \theta \sharp \nu)],
\end{align}
where $\sigma_\psi(\theta) \in \mathcal{P}(\mathbb{S}^{d-1})$, for example, an implicit distribution~\citep{nguyen2021distributional}, the von Mises-Fisher~\citep{jupp1979maximum} (vMF) distribution, and the Power Spherical (PS)~\citep{cao2018partial} distribution with unknown location parameter $\sigma_\psi(\theta):=(\text{PS}) \text{vMF}(\theta|\epsilon,\kappa)$, with $\psi = \epsilon$. By performing $T \geq 1$ iterations of (projected) stochastic (sub)-gradient ascent to estimate the parameter $\hat{\psi}_T$, Monte Carlo samples $\theta_1, \ldots, \theta_L \overset{i.i.d}{\sim} \sigma_{\hat{\psi}_T}(\theta)$ are used to approximate DSGW. The time and space complexities of DSGW are at best $\mathcal{O}(LTn \log n + LTdn)$ and $\mathcal{O}(Ld + Ln)$, respectively, excluding the complexities of sampling from $\sigma_{\hat{\psi}_T}(\theta)$. As the concentration parameter $\kappa \to \infty$, both the vMF and PS distributions converge to the Dirac distribution, giving the max sliced Gromov-Wasserstein distance~\citep{deshpande2019max}.
The Max-SGW involves performing $T \geq 1$ iterations of (projected) (sub)-gradient ascent to find the approximate ``max" projection direction $\hat{\theta}_T$. The estimated value of Max-SGW is then set as $\text{GW}_p(\hat{\theta}_T \sharp \mu, \hat{\theta}_T \sharp \nu)$. Notably, the optimization problem is non-convex~\citep{nietert2022statistical}, implying that the global optimum $\theta^\star$ cannot be achieved. Therefore, Max-SGW approximations do not constitute a metric, even as $T \to \infty$. The time and space complexities of Max-SGW are again at best $\mathcal{O}(Tn \log n + Tdn)$ and $\mathcal{O}(d + n)$, respectively.

\textbf{Energy-driven sliced Gromov-Wasserstein.} In practical applications, the computational effort of optimization often surpasses that of sampling from a fixed slicing distribution. To circumvent costly optimization processes, recent work~\citep{nguyen2023energy} introduces a novel approach that leverages an energy-driven slicing distribution, bypassing optimization altogether. While this was originally developed within the sliced Wasserstein framework, we have adapted it for our context. The energy-driven sliced Gromov-Wasserstein (EBSGW) distance between two probability measures $\mu \in \mathcal{P}_p(\mathbb{R}^d)$ and $\nu\in \mathcal{P}_p(\mathbb{R}^d)$ is defined as follows:
    \begin{align}
    \label{eq:ebsgw}
        &\text{EBSGW}_p^p(\mu,\nu;f) = \mathbb{E}_{\theta \sim \sigma_{\mu,\nu}(\theta;f,p)}\left[ \text{GW}_p^p (\theta\sharp \mu,\theta \sharp \nu)\right],
    \end{align}
where $f:[0,\infty) \to (0,\infty)$ is an increasing energy function such as $f(x)=e^x$, and $\sigma_{\mu,\nu}(\theta;f,p) \propto f(\text{GW}^p_p(\theta \sharp \mu,\theta \sharp \nu))$. EBSGW can be estimated using importance sampling with the uniform proposal distribution $\sigma_0 = \mathcal{U}(\mathbb{S}^{d-1})$. For $\theta_1,\ldots,\theta_L \overset{i.i.d}{\sim} \sigma_0(\theta)$, the estimate is given by: 
\begin{align}
\label{eq:emp_ISEBSGW}
   \widehat{\text{EBSGW}_p^p}(\mu,\nu;f,L)= \sum_{l=1}^L   \text{GW}_p^p (\theta_l\sharp \mu,\theta_l \sharp \nu)\hat{w}_{\mu,\nu,\sigma_0,f,p} (\theta_l) ,
\end{align}
where $\hat{w}_{\mu,\nu,\sigma_0,f,p} (\theta_l) = \frac{w_{l}(f;\mu,\nu) / \sigma_0(\theta)}{\sum_{l'=1}^L w_{l'}(f;\mu,\nu) / \sigma_0(\theta)}$ are the normalized importance weights, and $\frac{w_{l}(f;\mu,\nu)}{\sigma_0(\theta)}$ represents the importance weight function. The computational complexity in time and space for calculating EBSGW is at best $\mathcal{O}(Ln\log n+Ldn)$ and $\mathcal{O}(Ld+Ln)$, respectively. For a detailed discussion regarding the complexity and its relation to optimality, see Section \ref{sec:algorithm}. However, this estimation introduces bias, meaning $\mathbb{E}[\widehat{\text{EBSGW}_p^p}(\mu,\nu;f,L)]\neq \text{EBSGW}_p^p(\mu,\nu;f)$. Beyond importance sampling, Markov Chain Monte Carlo (MCMC) methods can also approximate EBSGW, although they carry a significant computational burden~\citep{nguyen2023energy}.

\textbf{Random Path Sliced Gromov-Wasserstein.} In \citet{nguyen2024slicedwassersteinrandompathprojecting}, the authors proposed a variant of Sliced-Wasserstein known as Random Path Sliced Gromov-Wasserstein (RPSW). It introduces the random-path projecting direction (RPD), constructed as a normalized difference between two random vectors sampled from the input measures, with a random perturbation to ensure continuity. This creates a new slicing distribution, the random-path slicing distribution (RPSD), which is optimization-free and efficient to sample. It is not immediately clear how to adapt RPSW to the SGW setting, as RPSW constructs its projecting directions based on just two points (one taken from each distribution), whereas in GW, we sample a pair of points belonging to each distribution. We can extend it to the GW setting by taking the quartet $(X,X',Y,Y')$ followed by constructing a direction by taking a cross pair. However, this is quite arbitrary, and it is not exactly clear how this serves any purpose in our alignment problem. Experimental studies show that our metric outperforms this by quite a margin.

In the context of competing methods, we mention the approximately linear algorithm to estimate GW (between measures in their ambient spaces) proposed by \citet{scetbon2022linear}. Given that it employs low rank approximations to replace complexity-heavy computations underlying GW, e.g., Sinkhorn scaling, it is inherently hostile to gradient computations (non-smooth, not differentiable), and hence unsuitable to be used as an objective function in generative models (e.g., GWGAN). In the absence of a dedicated nested optimization scheme, it also becomes numerically unstable (similar to the vanilla GW \citep{bunne2019learning})during training when used as a loss.

\section{More Preliminaries}
\label{more_preliminaries}
We briefly revisit the concept of the von Mises-Fisher distribution.\begin{definition} \label{def:vMF} The von Mises–Fisher distribution (vMF) represents a probability distribution that spans the unit sphere $\mathbb{S}^{d-1}$, characterized by the following density function \citep{jupp1979maximum}: \begin{align} f(x| \epsilon, \kappa ) : = C_d (\kappa) \exp(\kappa \epsilon^\top x), \end{align} where $\kappa \geq 0$ serves as the concentration parameter, $\epsilon\in \mathbb{S}^{d-1}$ is designated as the location vector, and $C_d(\kappa) : = \frac{\kappa^{d/2 -1}}{(2 \pi)^{d/2} I_{d/2 -1 }(\kappa) }$ functions as the normalization constant. The term $I_v$ refers to the modified Bessel function of the first kind with order $v$ \citep{temme2011special}. \end{definition} The vMF distribution centers around the mode $\epsilon$, and its density diminishes as $x$ moves further from $\epsilon$. As $\kappa \to 0$, vMF transitions to a uniform distribution, while for $\kappa \to \infty$, it converges to a Dirac distribution located at $\epsilon$~\citep{Suvrit_directional}. Details of the sampling procedure with a mixture of vMF distributions are placed in Algorithm~\ref{Alg:MovMF_sampling}.

\begin{algorithm}[t]
  \caption{Sampling from a mixture of vMF distributions}
  \label{Alg:MovMF_sampling}
\begin{algorithmic}[1]
  \REQUIRE The number of vMF components $k$, location $\{\epsilon_i\}_{i=1}^k$, concentration $\{\kappa_i\}_{i=1}^k$, mixture weights $\{\alpha_i\}_{i=1}^k$ , dimension $d$, unit vector $e_1= (1,0,..,0)$.
  \STATE Sample index $i \sim \text{Categorical}(\alpha_1,...,\alpha_k)$
  \STATE Sample $v \sim \mathcal{U}(\mathbb{S}^{d-2})$ 
\STATE $b \leftarrow \frac{-2 \kappa_i+\sqrt{4 \kappa_i^{2}+(d-1)^{2}}}{d-1}$, $a \leftarrow \frac{(d-1)+2 \kappa_i+\sqrt{4 \kappa_i^{2}+(d-1)^{2}}}{4}$, $m \leftarrow \frac{4 a b}{(1+b)}-(d-1) \ln (d-1)$
  \REPEAT 
    \STATE Sample $\psi \sim \operatorname{Beta}\left(\frac{1}{2}(d-1), \frac{1}{2}(d-1)\right)$
    \STATE $\omega \leftarrow h(\psi, \kappa_i)=\frac{1-(1+b) \psi}{1-(1-b) \psi}$
    \STATE $t \leftarrow \frac{2 a b}{1-(1-b) \psi}$
    \STATE Sample $u \sim \mathcal{U}(0,1)$
  \UNTIL{{$(d-1) \ln (t)-t+m \geq \ln (u)$}}
 \STATE $h_1 \leftarrow (\omega, \sqrt{1-\omega^2} v^\top)^\top$
\STATE $\epsilon^\prime \leftarrow e_1 - \epsilon_i$
\STATE $u = \frac{\epsilon^\prime}{\norm{\epsilon^\prime}}$
\STATE $U = \mathbb{I} - 2uu^\top$
  \STATE {\bfseries return} $Uh_1$
\end{algorithmic}
\end{algorithm}

We proceed to introduce the power spherical distribution~\citep{de2020power}:\begin{definition}
\label{def:power_spherical}
The power spherical distribution (PS) describes a probability distribution on the unit sphere $\mathbb{S}^{d-1}$ with the following density function:
\begin{align}
    f(x| \epsilon, \kappa ) = C( \kappa,d) (1 + \epsilon^\top x)^{\kappa},
\end{align}
here, $\kappa \geq 0$ serves as the concentration parameter, $\epsilon\in \mathbb{S}^{d-1}$ is the location vector, and the normalization constant $C_d(\kappa)$ is defined as $ \left\{ 2^{d - 1 + \kappa} \pi^{\frac{d - 1}{2}} \frac{\Gamma(\frac{d - 1}{2} + \kappa)}{\Gamma(d - 1 + \kappa)}\right\}^{-1}$. 
\end{definition}When $\kappa \to 0$, the power spherical distribution tends to the uniform distribution because, for $\kappa \in (0, 1]$, $f(x|\epsilon,\kappa)$ remains uniformly bounded. Moreover, for all $x$ not equal to $- \epsilon$
\begin{align*}
\lim_{\kappa \rightarrow 0} f(x|\epsilon,\kappa) = C(0,d),
\end{align*}
indicating it matches the uniform distribution's density on $\mathbb{S}^{d-1}$. Thus, by invoking the Lebesgue dominated convergence theorem, for any bounded function $g$ on $\mathbb{S}^{d-1}$, it holds that
\begin{align*}
    \lim_{\kappa \rightarrow 0}\int_{\mathbb{S}^{d-1}} g(x) f(x|\epsilon,\kappa) dx = \int_{\mathbb{S}^{d-1}} g(x) \mathcal{U}(\mathbb{S}^{d-1})  dx, 
\end{align*}
where $\mathcal{U}(\mathbb{S}^{d-1})$ denotes the uniform distribution on $\mathbb{S}^{d-1}$. This indicates that the density $f$ of the power spherical distribution converges to that of the uniform distribution as $\kappa \to 0$. Conversely, as $\kappa \rightarrow \infty$, the function $f$ at its mode $\epsilon$ increases unboundedly, and the following conclusion holds for any uniformly bounded function $g$ on $\mathbb{S}^{d-1}$.
\begin{align*}
   \lim_{\kappa \rightarrow \infty} \int_{\mathbb{S}^{d-1}} g(x) f(x|\epsilon,\kappa) dx =  \int_{\mathbb{S}^{d-1}} g(x) \delta_{\epsilon}(x) dx.
   \end{align*}
  It is true since for $x = \epsilon$, and $\kappa \rightarrow \infty$ we have
  \begin{align*}
      f(\epsilon|\epsilon,\kappa) = 2^{\kappa} \frac{\Gamma(d-1 + \kappa)}{\Gamma(\frac{d-1}{2} + \kappa)} \pi^{\frac{1-d}{2}} 2^{1-d-\kappa} = \pi^{\frac{1-d}{2}} 2^{1-d} \frac{\Gamma(d-1 + \kappa)}{\Gamma(\frac{d-1}{2} + \kappa)} \rightarrow \infty.
      \end{align*}
Consequently, as $\kappa \to \infty$, the density of the power spherical distribution with a location vector $\epsilon$ approaches the Dirac delta measure at $\epsilon$. 

\textbf{Power Spherical Distribution Sampling Procedure.} We revisit the sampling technique for the power spherical (PS) distribution, as outlined in Algorithm \ref{Alg:PS_sampling} in \citet{de2020power}. A key distinction between sampling from the PS and the von Mises-Fisher (vMF) distribution is that PS does not require rejection sampling, unlike the vMF's approach as detailed in Algorithm \ref{Alg:MovMF_sampling}. Thus, PS sampling is more efficient than vMF sampling. Moreover, parameter gradient estimation for PS density is simpler compared to vMF since the entire PS sampling algorithm is differentiable, including sampling from the Beta distribution using the implicit reparameterization trick~\citep{figurnov2018implicit}.
\begin{algorithm}[t]
  \caption{Sampling from power spherical distribution}
  \label{Alg:PS_sampling}
\begin{algorithmic}[1]
  \REQUIRE location parameter $\epsilon$, concentration $\kappa$, dimension $d$, unit vector $e_1= (1,0,..,0)$.
  \STATE Sample $z \sim \text{Beta}(\frac{(d-1)}{2}+\kappa,\frac{(d-1)}{2})$
  \STATE Sample $v \sim \mathcal{U}(\mathbb{S}^{d-2})$ 
\STATE  $w \leftarrow 2z - 1$
 \STATE $h_1\leftarrow (\omega, \sqrt{1-\omega^2} v^\top)^\top$
\STATE $\epsilon^\prime \leftarrow e_1 - \epsilon$
\STATE $u = \frac{\epsilon^\prime}{\norm{\epsilon^\prime}}$
\STATE $U = \mathbb{I} - 2uu^\top$
  \STATE {\bfseries return} $Uh_1$
\end{algorithmic}
\end{algorithm}

\textbf{Radon Transform.}
The classical Radon transform, symbolized by $\mathcal{R}$, converts a function $I \in L^1(\mathbb{R}^d)$— where $$L^1(\mathbb{R}^d) = \{ I:\mathbb{R}^d \rightarrow \mathbb{R}\ |\ \int_{\mathbb{R}^d} |I(x)|dx < \infty\}$$— into an infinite collection of its integrals over the hyperplanes of $\mathbb{R}^d$, as defined by
\begin{eqnarray}
    \mathcal{R} I(t,\theta) = \int_{\mathbb{R}^d} I(x)\delta(t-\langle x, \theta \rangle)dx,
\label{eq:radon}
\end{eqnarray}
for $(t, \theta) \in \mathbb{R} \times \mathbb{S}^{d-1}$. Here, $\mathbb{S}^{d-1} \subset \mathbb{R}^d$ represents the unit sphere in $d$ dimensions, $\delta(\cdot)$ is the Dirac delta function, and $\langle \cdot, \cdot \rangle$ denotes the Euclidean inner-product. The mapping $\mathcal{R}: L^1(\mathbb{R}^d)\rightarrow L^1(\mathbb{R}\times \mathbb{S}^{d-1})$ is established. A typical hyperplane is described as:
\begin{equation}
    H(t, \theta) = \{x\in \mathbb{R}^d \ |\ \langle x, \theta \rangle = t\},
    \label{eq:hyperplanes}
\end{equation}
which can be viewed as a level set for the function $g: \mathbb{R}^d\times\mathbb{S}^{d-1}\rightarrow \mathbb{R}$ defined by $g(x, \theta) = \langle x, \theta \rangle$. For any fixed $\theta$, the integrals over hyperplanes perpendicular to $\theta$ yield a continuous function $\mathcal{R}I(\cdot,\theta) : \mathbb{R} \rightarrow \mathbb{R}$, acting as a projection (or slice) of $I$. The Radon transform is a linear bijective mapping \citep{helgason2011radon}, with its inverse $\mathcal{R}^{-1}$ given by:
\begin{eqnarray}
    I(x) &=& \mathcal{R}^{-1}\big(\mathcal{R}I(t,\theta)\big) \nonumber \\
    &=& \int_{\mathbb{S}^{d-1}} (\mathcal{R}I(\langle x, \theta \rangle,\theta)*\eta(\langle x, \theta \rangle)) d\theta
\end{eqnarray}
where $\eta(\cdot)$ stands for a one-dimensional high-pass filter whose Fourier transform is $\mathcal{F}\eta(\omega) =  c|\omega|^{d-1}$, as justified by the Fourier slice theorem \citep{helgason2011radon} (see supplementary materials), and `$*$' signifies the convolution operation. This formulation of the inverse Radon transform is known as the filtered back-projection technique, extensively used in the reconstruction of images in the field of biomedical imaging. Conceptually, each one-dimensional projection (or slice) $\mathcal{R}I(\cdot,\theta)$ is first filtered using a high-pass filter and then back-projected into $\mathbb{R}^{d}$ along $H(\cdot,\theta)$ to reconstruct an approximation of $I$. The final image $I$ is rebuilt by combining all such back-projected approximations. In practical scenarios, obtaining an infinite number of projections is impossible, thus, the filtered back-projection method replaces integral calculations with a finite sum of projections, akin to a Monte Carlo estimation.

\paragraph{Sliced Wasserstein Distance.} The sliced $p$-Wasserstein distance approach computes a series of one-dimensional perspectives of a multidimensional probability distribution using linear projections, facilitated by the Radon transform. The distance between two distributions is then determined by evaluating the $p$-Wasserstein distance of these one-dimensional representations, i.e., their marginal distributions. Formally, the sliced $p$-Wasserstein distance between $I_\mu$ and $I_\nu$ is expressed as:
\begin{equation}
SW_p(I_\mu,I_\nu)=\left( \int_{\mathbb{S}^{d-1}} W^p_p\big( \mathcal{R} I_\mu(.,\theta), \mathcal{R} I_\nu(.,\theta) \big) \, d\theta \right)^{\frac{1}{p}}.
\label{eq:SW}
\end{equation}
This serves as a true distance function, complying with positive-definiteness, symmetry, and the triangle inequality conditions \citep{bonnotte2013unidimensional,kolouri2016sliced}. The SW distance is computed through integration over the unit sphere in $\mathbb{R}^{d}$. In practice, this is approximated using a Monte Carlo method that samples $\{ \theta_l \}$ uniformly from $\mathbb{S}^{d-1}$ and then replaces the integral with an average over these samples:
\begin{equation}
SW_p(I_\mu,I_\nu) \approx \left(\frac{1}{L}\sum_{l=1}^L W^p_p\big(\mathcal{R}I_\mu(\cdot, \theta_l), \mathcal{R}I_\nu(\cdot, \theta_l)\big)\right)^{\frac{1}{p}}
\label{eq:empiricalSWD}
\end{equation}
The sliced $p$-Wasserstein distance is beneficial in practice: if $\mathcal{R}I_\mu(\cdot, \theta_l)$ and $\mathcal{R}I_\nu(\cdot, \theta_l)$ can be determined for each sampled $\theta_l$, then the SW distance is derived from solving several one-dimensional optimal transport problems with closed-form solutions. This is particularly advantageous when estimating a high-dimensional PDF $I$, since one-dimensional kernel density estimation, necessary for PDF slices, simplifies the process. However, as dimensionality increases, more projections are needed to accurately estimate $I$ from $\mathcal{R}I(\cdot, \theta)$. Essentially, a two-dimensional distribution requiring $L$ projections demands approximately $\mathcal{O}(L^{d-1})$ projections for a similarly smooth $d$-dimensional distribution for $d \geq 2$. For clarification, consider $I_\mu=\mathcal{N}(0, I_d)$ and $I_\nu=\mathcal{N}(x_0, I_d)$, $x_0 \in \mathbb{R}^d$, as two multivariate Gaussian densities with identity covariance matrices. Their projections result in one-dimensional Gaussian distributions $\mathcal{R}I_\mu(\cdot,\theta)=\mathcal{N}(0,1)$ and $\mathcal{R}I_\mu(\cdot,\theta)=\mathcal{N}(\langle \theta, x_0 \rangle, 1)$. Here, $W_2(\mathcal{R}I_\mu(\cdot,\theta),\mathcal{R}I_\nu(\cdot,\theta))$ achieves its peak when $\theta=\frac{x_0}{\|x_0\|_2}$ and measures zero for $\theta$ orthogonal to $x_0$. Randomly chosen vectors from the unit sphere tend to be nearly orthogonal in high dimensions. More precisely, the inequality $Pr(| \langle \theta, \frac{x_0}{\|x_0\|_2} \rangle | < \epsilon) > 1-e^{(-d\epsilon^2)}$ indicates that in high dimensions, most sampled $\theta$s will be nearly orthogonal to $x_0$, leading to $W_2(\mathcal{R}I_\mu(\cdot,\theta), \mathcal{R}I_\nu(\cdot,\theta))\approx 0$ with high probability. To counteract this, one can skip uniform sampling of the unit sphere, selecting $\theta$s that contain significant distinguishing information between $I_\mu$ and $I_\nu$. This approach is used by \citet{deshpande2018generative}, who first computes a linear discriminant subspace and then assesses the empirical SW distance by choosing the $\theta$s as the subspace’s discriminant components. A similar but less empirical method involves the maximum sliced $p$-Wasserstein (max-SW) distance, an alternative OT metric, defined as:
\begin{equation}
\text{max-}SW (I_\mu,I_\nu) = \max_{\theta \in \mathbb{S}^{d-1}} W_p\big(\mathcal{R} I_\mu(\cdot,\theta),\mathcal{R} I_\nu(\cdot,\theta) \big).
\label{eq:msw}
\end{equation}

\section{Algorithms}
\label{sec:algorithm}

\textbf{Computational algorithms of RASGW and IWRASGW.} We present the pseudo-codes for computing RASGW and IWRASGW with Monte Carlo estimation in Algorithm~\ref{alg:RASGW} and Algorithm~\ref{alg:IWRASGW}.

\begin{algorithm}[!th]
\caption{Computational algorithm of RASGW}
\begin{algorithmic}[1]
\label{alg:RASGW}
\REQUIRE Probability measures $\mu$ and $\nu$, $p\geq 1$, the number of projections $L$, $0<\kappa<\infty$
  
  \FOR{$l=1$ to $L$}
  \STATE Sample $X, X' \sim \mu$, $Y, Y' \sim \nu$
  \STATE Sample $\theta_l \sim D_{\kappa}(X,X',Y,Y')$
  \STATE Compute $v_l = \text{GW}_p(\theta_l \sharp \mu,\theta_l \sharp  \nu)$
  \ENDFOR
  \STATE Compute $\widehat{\text{RASGW}}_p(\mu,\nu;L,\sigma_\kappa) = \left(\frac{1}{L}\sum_{l=1}^L v_l \right)^{\frac{1}{p}}$
  
 \STATE \textbf{return} $\widehat{\text{RASGW}}_p(\mu,\nu;L,\sigma_\kappa)$
\end{algorithmic}
\end{algorithm}

\begin{algorithm}[!th]
\caption{Computational algorithm of the IWRASGW}
\begin{algorithmic}[1]
\label{alg:IWRASGW}
\REQUIRE Probability measures $\mu$ and $\nu$, $p\geq 1$, the number of projections $L$,  $0<\kappa<\infty$, and the energy function $f$.
  
  \FOR{$l=1$ to $L$}
  \STATE Sample $X, X' \sim \mu$, $Y, Y' \sim \nu$
  \STATE Sample $\theta_l \sim D_{\kappa}(X,X',Y,Y')$
  \STATE Compute $v_l = \text{GW}_p(\theta_l \sharp \mu,\theta_l \sharp  \nu)$
  \STATE Compute $w_l = f(\text{GW}_p(\theta_l \sharp \mu,\theta_l \sharp  \nu))$
  \ENDFOR
  \STATE Compute $\widehat{\text{IWRASGW}}_p(\mu,\nu;\sigma_\kappa,L,f) = \left(\sum_{l=1}^L v_l \frac{w_l}{\sum_{i=1}^L w_i}\right)^{\frac{1}{p}}$
  
 \STATE \textbf{return} $\widehat{\text{IWRASGW}}_p(\mu,\nu;\sigma_\kappa,L,f) $
\end{algorithmic}
\end{algorithm}

\textbf{Computation of GW in 1D.} Let us consider input distributions $\mu, \nu \in \mathcal{P}_{4}(\mathbb{R}^{d})$ such that the projections $\mu_{\theta} \coloneqq \theta_{\#}\mu$ (similarly define $\nu_{\theta}$), given that $\theta \sim \sigma_{\textrm{RA}}$ preserve finiteness in moments. The equality is understood setwise. Observe that, calculating $\textrm{RASGW}_{2}^{2}(\mu, \nu)$ essentially underlies computing $\textrm{GW}_{2}^{2}(\mu_{\theta}, \nu_{\theta})$, given replicates of $\theta$. Now, if both $\mu$ and $\nu$ are centered (without loss of generality), the following decomposition holds due to \citet{zhang2024gromov}
\begin{align*}
    \textrm{GW}_{2}^{2}(\mu_{\theta}, \nu_{\theta}) = S_{1}(\mu_{\theta}, \nu_{\theta}) + S_{2}(\mu_{\theta}, \nu_{\theta}),
\end{align*}
such that 
\begin{align}
    &S_{1}(\mu_{\theta}, \nu_{\theta}) = \int\big(\theta(x)-\theta(x')\big)^{4} d\mu^{\otimes 2} + \int\big(\theta(y)-\theta(y')\big)^{4} d\nu^{\otimes 2} - 4 \int {\theta(x)}^{2}{\theta(y)}^{2} d\mu \otimes \nu \label{S1}\\ & S_{2}(\mu_{\theta}, \nu_{\theta}) = \inf_{a \in [0.5W_{-}, 0.5W_{+}]} 32a^{2} + \inf_{\pi \in \Pi(\mu_{\theta}, \nu_{\theta})} \int (-4x^{2}y^{2} - 32axy)d\pi, \label{S2}
\end{align}
where $W_{-} = \inf_{\pi \in \Pi(\mu_{\theta}, \nu_{\theta})} \int xy \:d\pi(x,y)$, $W_{+} = \sup_{\pi \in \Pi(\mu_{\theta}, \nu_{\theta})} \int xy \:d\pi(x,y)$, and we write $\theta(\cdot) \coloneqq \langle \theta, \cdot \rangle$. Since $S_{1}$ is independent of the coupling $\pi$, solving the $\textrm{RASGW}$ problem boils down to solving a constrained OT (as given in (\ref{S2})) once we have identified an optimal $a^{*}$. As such, there may indeed be linear solvers for sliced GW variants. However, the question arises whether they exist uniformly corresponding to all optimal $a^{*}$'s in the feasible set, as \citet{beinert2023assignment} construct cases where the \textit{cyclic} permutation attains a lower GW cost compared to \textit{identity} and \textit{anti-identity} permulations, rendering the latter two sub-optimal in 1D. Here, note that the boundaries (involving $W_{-}$ and $W_{+}$) rather depend on sliced OT costs. In this line, \citet{zhang2024gromov} shows that in an empirical setup, solving the $\textrm{GW}_{2}^{2}$ distance, i.e.,
\begin{align*}
    \frac{1}{n^{2}} \min_{\sigma} \sum_{i=1}^{n} \sum_{j=1}^{n} \abs{\:\abs{{\theta(x)}_{i} - {\theta(x)}_{j}}^{2} - \abs{{\theta(y)}_{\sigma(i)} - {\theta(y)}_{\sigma(j)}}^{2}}^{2},
\end{align*}
optimally using $\sigma$ = identity or anti-identity permutation is possible if and only if $a^{*} \in \{0.5W_{-}, 0.5W_{+}\}$. Here, ${\theta(x)}_{i}$ and ${\theta(y)}_{i}$ correspond to the $i$th order statistic in a pool of $n$ projected samples from $\mu$ and $\nu$, respectively. As such, the counterexample \citet{beinert2023assignment} provides belong to the cases other than the boundaries of $a$. While given random samples out of input distributions, it is quite challenging to identify the situation every time one calculates RASGW (or, Sliced GW), the point we try to make in the paper is that the best attainable rate \textit{can} be made $\mathcal{O}(n \log n)$. In this context, we also mention \citet{vayer2019sliced}'s later remark that ``often", optimal plans are sufficiently achieved using the identity or anti-identity permutation in numerical simulations. However, we stress that the same may not hold for arbitrary $p$ (instead of $2$) and supports other than Euclidean, since the decomposition and the following result need to be proved in general. 

In light of the discussion, Algorithm \ref{alg:RASGW} can be understood as Algorithm \ref{algo:new}. 

\begin{algorithm}[t]
\caption{Relation-Aware Sliced Gromov-Wasserstein between discrete measures in an Euclidean setup}
\begin{algorithmic}[1]
\label{algo:new}
\REQUIRE $\mu = \frac{1}{n}\sum_{i=1}^{n} \delta_{x_i} \in \mathcal{P}(\mathbb{R}^d)$ and $\nu = \frac{1}{n}\sum_{i=1}^{n} \delta_{y_i} \in \mathcal{P}(\mathbb{R}^{d'})$, given $d' < d$; number of Monte Carlo samples $M$; location-scale distribution $\sigma_\kappa$ on $\mathbb{S}^{d-1}$, $0 < \kappa < \infty$
\STATE $y_i \leftarrow \Delta(y_i)$, $\forall i$
\FOR{$l = 1, \ldots, M$}
    \STATE Sample $\theta_l \sim \sigma_{\textrm{RA}}\left(\theta; \mu, \nu, \sigma_\kappa\right)$ \COMMENT{Definition 4.4}
    \STATE Sort $(\langle x_i, \theta_l \rangle)_i$ and $(\langle y_j, \theta_l \rangle)_j$ in increasing order
    \STATE Solve $\min_{\sigma \in S_n} \sum_{i,j} -\big({\theta_l(x)}_i - {\theta_l(x)}_j\big)^2\big({\theta_l(y)}_{\sigma(i)} - {\theta_l(y)}_{\sigma(j)}\big)^2$, where $S_n$ is the set of all permutations of $\{1,\ldots,n\}$ \COMMENT{solution: $\sigma_{\theta_l} \in$ Id or Anti-Id, due to \citet{vayer2019sliced}}
\ENDFOR
\STATE \textbf{return} $\widehat{\text{RASGW}^{2}_{2}}(\mu,\nu;\sigma_\kappa,M)=\frac{1}{n^2 M} \sum_{l=1}^{M} \sum_{i,j=1}^{n} \left( (\langle x_i - x_j, \theta_l \rangle)^2 - (\langle y_{\sigma_{\theta_l}(i)} - y_{\sigma_{\theta_l}(j)}, \theta_l \rangle)^2 \right)^2$
\end{algorithmic}
\end{algorithm}

\FloatBarrier
\section{Additional Experimental Details}
\label{sec:add_exps}

The gradient calculation of other sliced GW variants follows similarly to RASGW. For example, given samples $\theta_{11,\phi},\ldots,\theta_{HL,\phi} \sim \sigma_{\textrm{RA}}(\theta;\mu_\phi,\nu,\kappa)$, we have
\begin{align*}
    \nabla_\phi\text{IWRASGW}_p^p(\mu_\phi,\nu;\sigma_\kappa,L,H) =\frac{1}{H}\sum_{h=1}^H\bigg[\nabla_\phi\sum_{l=1}^L \text{GW}_p^p(\theta_{hl,\phi} \sharp \mu_\phi, \theta_{hl,\phi} \sharp \nu) \frac{w_{hl,\phi}(f;\mu_\phi,\nu)}{\sum_{j=1}^L w_{hj,\phi}(f;\mu_\phi,\nu)} \bigg].
\end{align*}

\subsection{Ablation Study}
We observe that for a given value of $\kappa$, increasing the number of projection replicates $\theta_{1},\ldots,\theta_{M} \sim \sigma_{\textrm{RA}}$ (in approximating RASGW using MC estimates) indeed results in better alignment (i.e., declining overall $\textrm{GW}_{2}$ values). This is expected due to Proposition \ref{proposition:MCerror}, as the realized bias shrinks gradually. However, that comes at the cost of increasing time required. The following table (Table \ref{tab:sample2}) justifies our choice of $M=500$ as the corresponding alignment is satisfactory without incurring elevated time (e.g., at $M=10000$). The ablation of $L$ in an empirical IWRASGW setup follows a similar protocol. For $\kappa$, heuristically, the value should not be considered too large since it signifies the power of the density $\textrm{PS}(\theta; \epsilon,\kappa) \propto (1+\epsilon^{T}\theta)^{\kappa}$. Our ablations (Table \ref{tab:sample1}) suggest that at $\kappa=50$, the corresponding alignment improves significantly over smaller values and does not improve that much upon further increment. Overall, the simulations reflect that the proposed semi-metric is not too sensitive to the choice of $M$, but needs careful tuning when it comes to $\kappa$. The inference is anticipated as
is instrumental in choosing an appropriate relation-aware projection.

\begin{table}[htbp]
  \centering
  \caption{Ablation study of Hyperparameter $\kappa$ (when $M$ = 500) for RASGW in 3D to 2D experiment for 10000 steps}
  \label{tab:sample1}
  \begin{tabular}{c|c|c}
    \toprule
    $\kappa$ &  $\text{GW}_2$($\downarrow$) & $\text{Time (s)}$($\downarrow$) \\
    \midrule
    1 & 113.43$\pm$2.46 & \textbf{48.156$\pm$2.84} \\
    5 & 100.10$\pm$4.02 & 49.98$\pm$2.45 \\
    10 & 67.08$\pm$3.91 & 50.81$\pm$2.76 \\
    50&\underline{18.80}$\pm$4.64&51.89$\pm$4.09\\
    100&20.41$\pm$4.17&50.09$\pm$2.23\\
    500&19.16$\pm$5.50&49.75$\pm$2.98\\
    1000&20.65$\pm$5.21&49.34$\pm$2.43\\
    5000& 21.04$\pm$4.188  & \underline{48.90$\pm$3.01}       \\
    10000& \textbf{18.10$\pm$2.46}         & 51.21$\pm$3.19          \\
    50000&20.51$\pm$2.13&50.21$\pm$2.89\\
    \bottomrule
  \end{tabular}
\end{table}
\begin{table}[htbp]
  \centering
  \caption{Ablation study of Hyperparameter $M$ (when $\kappa$ = 50) for RASGW in 3D to 2D experiment for 10000 steps}
  \label{tab:sample2}
  \begin{tabular}{c|c|c}
    \toprule
    $M$ &  $\text{GW}_2$($\downarrow$) & $\text{Time (s)}$($\downarrow$) \\
    \midrule
    1 & 90.42$\pm$16.11 & \textbf{49.33$\pm$2.84} \\
    5 & 40.23$\pm$5.50 & 49.90$\pm$2.45 \\
    10 & 36.77$\pm$5.30 & \underline{49.81$\pm$3.21} \\
    50&29.05$\pm$5.98&50.12$\pm$3.09\\
    100&23.83$\pm$5.95&50.54$\pm$2.23\\
    500&18.80$\pm$4.64&51.89$\pm$4.09\\
    1000&\underline{18.32$\pm$5.21}&57.49$\pm$2.43\\
    5000& 19.67$\pm$4.188  & 115.66$\pm$3.01      \\
    10000& \textbf{17.38$\pm$2.46}         &185.78$\pm$11.09          \\
   
    \bottomrule
  \end{tabular}
\end{table}

\subsection{GWGAN}
\subsubsection{Architecture}
\label{sec:architecture_gwgan}
\citet{bunne2019learning} proposed an architecture consisting of two primary components: the Generator (G) and the Adversary (A), which operate within an adversarial learning framework to generate and evaluate data distributions. The Generator is designed as a four-layer \textit{Multilayer Perceptron (MLP)} that processes a 256-dimensional input and produces a 2-dimensional output. Each hidden layer incorporates ReLU activation functions to introduce non-linearity. The Adversary is implemented as a five-layer \textit{MLP} that processes inputs based on their dimensionality. It accepts either a 3-dimensional or 2-dimensional input, applies ReLU activations at each layer, and outputs either a 3-dimensional or 2-dimensional result, depending on the processing path. To enhance training stability, they utilized Xavier Normal Initialization for the generator to mitigate vanishing gradients, while the adversary employed Orthogonal Initialization to maintain a well-conditioned weight space.
\begin{figure}[ht]
    \centering
    \scalebox{0.65}{ 
    \begin{tikzpicture}[node distance=2.2cm] 

        \tikzstyle{startstop} = [rectangle, rounded corners, minimum width=3cm, minimum height=1cm, text centered, draw=black, fill=red!20]
        \tikzstyle{process} = [rectangle, minimum width=3cm, minimum height=1cm, text centered, draw=black, fill=blue!15]
        \tikzstyle{decision} = [diamond, minimum width=3cm, minimum height=1cm, text centered, draw=black, fill=orange!25]
        \tikzstyle{arrow} = [thick,->,>=stealth]

        \node (gstart) [startstop] {Latent Input (256-D)};
        \node (g1) [process, below of=gstart] {Linear (256 $\to$ 128) + ReLU};
        \node (g2) [process, below of=g1] {Linear (128 $\to$ 128) + ReLU};
        \node (g3) [process, below of=g2] {Linear (128 $\to$ 128) + ReLU};
        \node (gend) [process, below of=g3] {Linear (128 $\to$ 2)};

        \draw [arrow] (gstart) -- (g1);
        \draw [arrow] (g1) -- (g2);
        \draw [arrow] (g2) -- (g3);
        \draw [arrow] (g3) -- (gend);

        \node at (gstart) [yshift=2cm] {\textbf{Generator Network}}; 

        \node (ainput) [startstop, right of=gstart, xshift=10cm, yshift=0.5cm] {Input (2-D / 3-D)}; 
        \node (adecision) [decision, below of=ainput] {Data Type};
        \node (adata) [process, left of=adecision, xshift=-3cm] {Linear (3 $\to$ 32) + ReLU};
        \node (agen) [process, right of=adecision, xshift=3cm] {Linear (2 $\to$ 32) + ReLU};
        \node (ashared) [process, below of=adecision] {Linear (32 $\to$ 16) + ReLU};
        \node (ashared2) [process, below of=ashared] {Linear (16 $\to$ 8) + ReLU};
        \node (adataout) [process, left of=ashared2, xshift=-3cm] {Linear (8 $\to$ 3)};
        \node (agenout) [process, right of=ashared2, xshift=3cm] {Linear (8 $\to$ 2)};

        \draw [arrow] (ainput) -- (adecision);
        \draw [arrow] (adecision) -- (adata);
        \draw [arrow] (adecision) -- (agen);
        \draw [arrow] (adata) -- (ashared);
        \draw [arrow] (agen) -- (ashared);
        \draw [arrow] (ashared) -- (ashared2);
        \draw [arrow] (ashared2) -- (adataout);
        \draw [arrow] (ashared2) -- (agenout);

        \node at (ainput) [yshift=2cm] {\textbf{Adversary Network}}; 

    \end{tikzpicture}
    }
    \caption{Flowchart of the Generator and Adversary networks.}
    \label{fig:architecture_flowchart}
\end{figure}
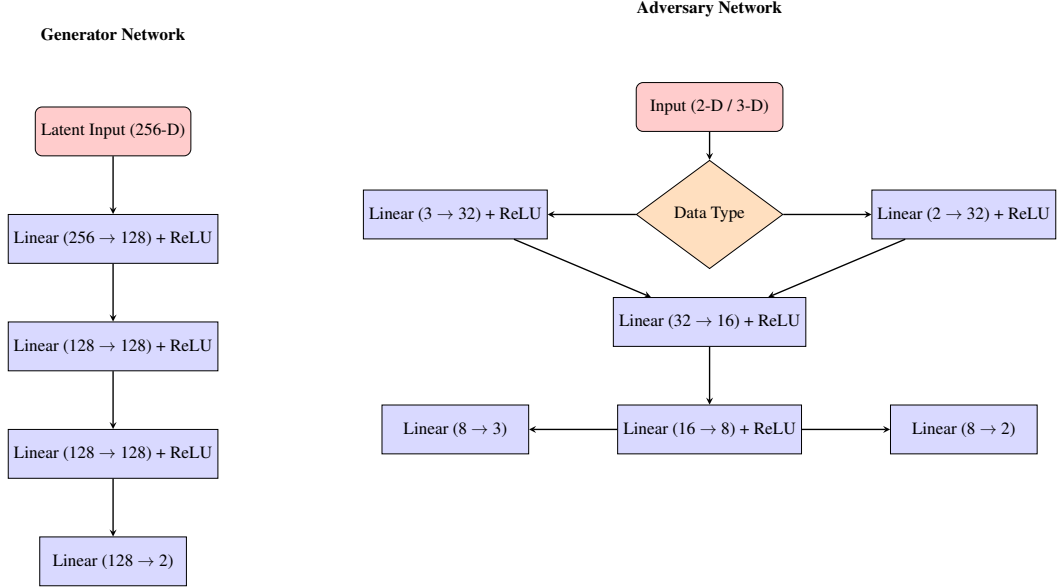

\FloatBarrier

 \subsubsection{For Identical Spaces:}\label{2d_2d} For the experiment of generation of 2D gaussian distribution from a 2D distribution, the Adversary neural network was used along with $l_1$-Regularization to ensure fast convergence. This experiment was performed for 10,000 iterations with RASGW as the loss function during training.
\begin{figure}[ht]
    \centering
    \begin{minipage}[b]{0.6\textwidth}
        \centering
        \includegraphics[width=\linewidth]{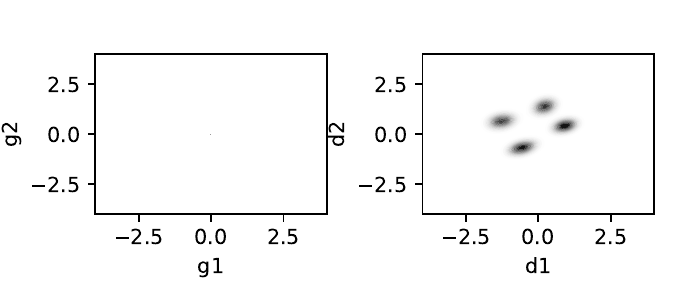}
    \end{minipage}
    \begin{minipage}[b]{0.32\textwidth}
        \centering
        \includegraphics[width=\linewidth]{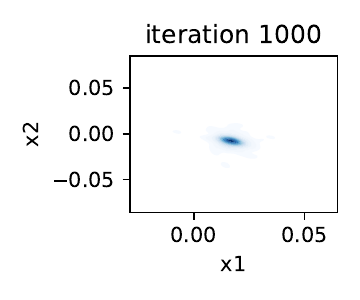}
    \end{minipage}
    \caption{At iteration 1000, using RASGW as the loss function in training: 
    \textbf{Left:} Feature space of the Generator, 
    \textbf{Middle:} Feature space of the Adversary, 
    \textbf{Right:} Generated image. 
    All plots are done using 1000 samples.}
    \label{fig:both_images}
\end{figure}
\begin{figure}[ht]
    \centering
    \begin{minipage}[b]{0.6\textwidth}
        \centering
        \includegraphics[width=\linewidth]{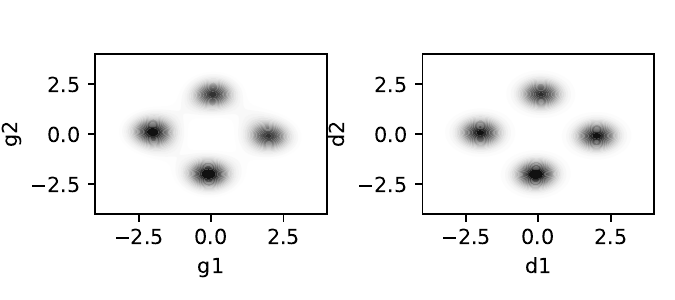}
    \end{minipage}
    \begin{minipage}[b]{0.32\textwidth}
        \centering
        \includegraphics[width=\linewidth]{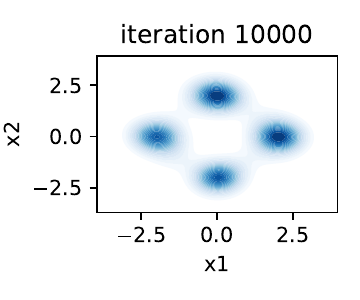}
    \end{minipage}
    \caption{At iteration 10000, using RASGW as the loss function in training: 
    \textbf{Left:} Feature space of the Generator, 
    \textbf{Middle:} Feature space of the Adversary, 
    \textbf{Right:} Generated image. 
    All plots are done using 1000 samples.}
    \label{fig:both_images_}
\end{figure}

\FloatBarrier
\subsubsection{For Incomparable Spaces} 
\label{sec:incomparable}
For the experiment on generating a 2D Gaussian distribution from a 3D Gaussian distribution or the other way around (a 3D Gaussian distribution from a 2D Gaussian distribution), the adversary neural network was not used following \citet{bunne2019learning}. This experiment was also performed for 10,000 iterations with different sliced variants as the loss function during training.

\subsubsection{The 4-point Experiment} \label{4pt}
The Gaussian 4 experiment generates a dataset with 4 clusters. In the 3D case, the clusters are centered at the points (1, 0, 0), (-1, 0, 0), (0, 1, 0), and (0, -1, 0). Each point is sampled from a Gaussian distribution with small noise around these centers, scaled by a factor of 2. This experiment produces 4 distinct clusters in 3D space. It is also possible to use this experiment in 2D by setting the Z-axis coordinate to 0 for each cluster center, resulting in clusters lying in the XY-plane. The points are randomly assigned to one of these clusters, with the corresponding label indicating which center the point was sampled from. Thus, both 3D and 2D versions of the experiment are possible.  It was performed for 10,000 iterations.

\begin{figure}[ht]
\centering
\begin{minipage}{.5\textwidth}
  \centering
  \includegraphics[width=\textwidth]{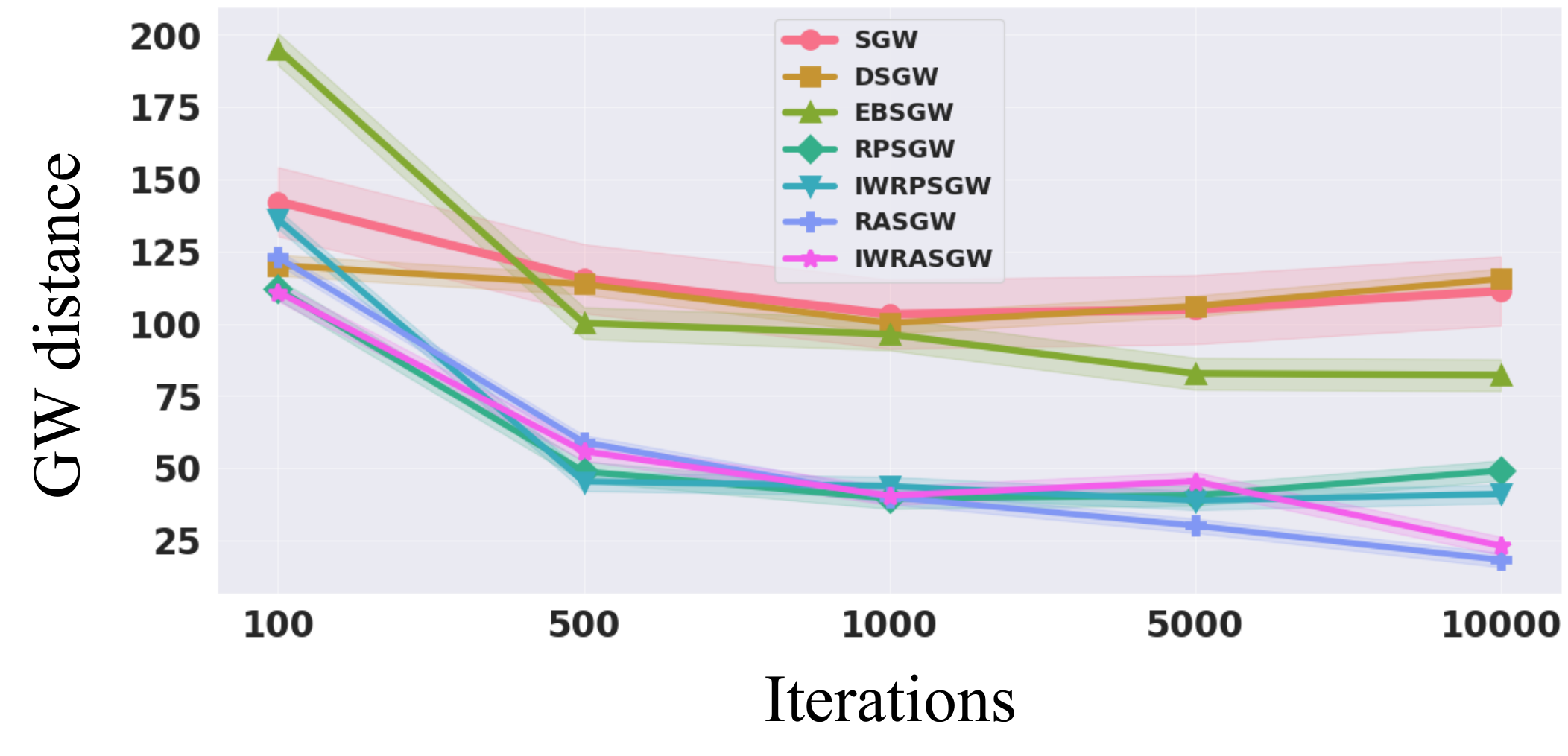}
  \label{fig:gw_iter}
\end{minipage}%
\begin{minipage}{.5\textwidth}
  \centering
  \includegraphics[width=\textwidth]{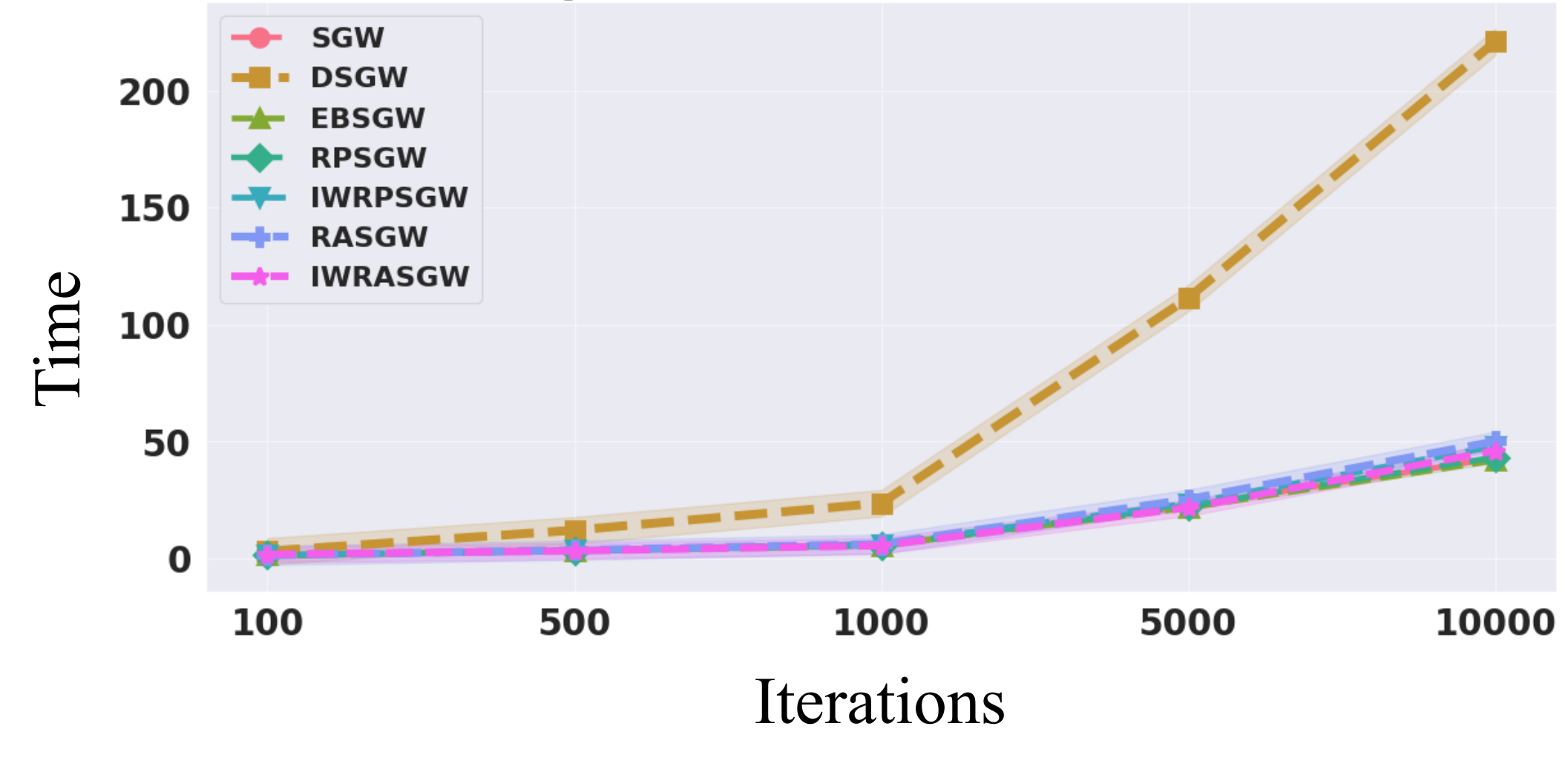}
  \label{fig:time_iter}
\end{minipage}
\vspace{-10pt}
\caption{(\textit{left}) GW distances and (\textit{right}) computation time for 4-point 3D$\rightarrow$2D generations.}
\label{fig:2_3}
\end{figure}

\begin{table*}[!t]
    \centering
    \scriptsize
    \caption{\footnotesize{Gromov Wasserstein-2 distance and computational times across iterations in generation of 2D distribution from 3D distribution}}
    \scalebox{0.85}{
    \begin{tabular}{l|cc|cc|cc|cc|cc|}
    \toprule
     Distance&\multicolumn{2}{c|}{Step 100}&\multicolumn{2}{c|}{Step 500}&\multicolumn{2}{c|}{Step  1000}&\multicolumn{2}{c|}{Step  5000 }&\multicolumn{2}{c|}{Step  10000 }\\
     \cmidrule{2-11}
     & $\text{GW}_2$($\downarrow$) &$\text{Time (s)}$($\downarrow$) & $\text{GW}_2$($\downarrow$) &$\text{Time (s)}$($\downarrow$)& $\text{GW}_2$($\downarrow$) &$\text{Time (s)}$($\downarrow$) & $\text{GW}_2$($\downarrow$) &$\text{Time (s)}$($\downarrow$) &$\text{GW}_2$($\downarrow$) &$\text{Time (s)}$($\downarrow$)\\
     \midrule
    SGW & 142.30 & \textbf{1.26} & 115.64 & \underline{3.22} & 103.29 & \underline{5.53} & 104.91 & \underline{22.34} & 111.32 & \underline{42.82} \\
    Max-SGW & 399.92 & 2.88 & 390.43 & 10.61 & 390.09 & 20.29 & 382.56 & 95.47 & 379.07 & 180.87 \\
    DSGW & 120.25 & 3.07 & 113.71 & 12.04 & 100.23 & 23.62 & 106.12 & 111.33 & 115.54 & 220.90 \\
    EBSGW & 195.09 & 1.30 & 100.23 & \textbf{3.21} & 96.36 & \textbf{5.46} & 82.74 & \textbf{21.65} & 82.20 & \textbf{41.84} \\
    RPSGW & \underline{111.93} & \underline{1.29} & \underline{48.85} & 3.25 & \textbf{39.47} & 5.63 & \underline{40.58} & 22.36 & 49.16 & 43.20 \\
    IWRPSGW & 136.22 & 1.34 & \textbf{45.34} & 3.55 & 43.79 & 6.15 & 38.78 & 23.47 & 41.08 & 47.97 \\
    RASGW & 122.98 & 1.32 & 58.96 & 3.53 & \underline{39.97} & 6.17 & \textbf{30.08} & 25.30 & \textbf{18.21} & 50.08 \\
    IWRASGW & \textbf{111.03} & 1.76 & 55.82 & \textbf{3.21} & 40.46 & \textbf{5.46} & 45.43 & \textbf{21.65} & \underline{23.10} & 46.04 \\
    \bottomrule
    \end{tabular}
    }
    \label{tab:3dto2d}
\end{table*}

\begin{table*}[!t]
    \centering
    \scriptsize
    \caption{\footnotesize{Gromov Wasserstein-2 distance and computational times across iterations in generation of 3D distribution from 2D distribution}}
    \scalebox{0.85}{
    \begin{tabular}{l|cc|cc|cc|cc|cc|}
    \toprule
     Distance&\multicolumn{2}{c|}{Step 100}&\multicolumn{2}{c|}{Step 500}&\multicolumn{2}{c|}{Step  1000}&\multicolumn{2}{c|}{Step  5000 }&\multicolumn{2}{c|}{Step  10000 }\\
     \cmidrule{2-11}
     & $\text{GW}_2$($\downarrow$) &$\text{Time (s)}$($\downarrow$) & $\text{GW}_2$($\downarrow$) &$\text{Time (s)}$($\downarrow$)& $\text{GW}_2$($\downarrow$) &$\text{Time (s)}$($\downarrow$) & $\text{GW}_2$($\downarrow$) &$\text{Time (s)}$($\downarrow$) &$\text{GW}_2$($\downarrow$) &$\text{Time (s)}$($\downarrow$)\\
     \midrule
    SGW & 20.35 & \underline{1.35} & 3.65 & \textbf{3.07} & \underline{2.10} & \textbf{5.23} & 1.67 & \textbf{21.25} & 2.59 & \textbf{41.98} \\
    Max-SGW & 14.80 & 2.77 & 4.39 & 10.38 & 3.60 & 19.71 & 2.09 & 94.08 & 2.50 & 186.66 \\
    DSGW & \underline{13.01} & 3.33 & \underline{3.34} & 12.34 & 3.14 & 23.52 & 3.23 & 111.70 & 1.56 & 222.39 \\
    EBSGW & 23.10 & \textbf{1.33} & 4.66 & \underline{3.27} & 3.46 & \underline{5.38} & 3.72 & \underline{21.86} & 1.82 & \underline{42.33} \\
    RPSGW & 23.04 & 1.39 & \textbf{3.25} & 3.62 & 3.29 & 6.26 & 2.66 & 26.62 & 1.60 & 52.51 \\
    IWRPSGW & 23.42 & 1.37 & 5.47 & 3.50 & \textbf{2.00} & 6.13 & \underline{1.35} & 26.69 & 1.47 & 52.04 \\
    RASGW & \textbf{12.31} & 1.75 & 4.67 & 4.64 & 2.71 & 8.67 & 1.63 & 39.38 & \underline {1.26} & 79.92 \\
    IWRASGW & 29.12 & 1.80 & 5.86 & 4.96 & 2.99 & 9.04 & \textbf{0.73} & 41.00 & \textbf{1.03} & 78.31 \\
    \bottomrule
    \end{tabular}
    }
    \label{tab:2dto3d}
\end{table*}

\begin{figure*}[!ht]
    \centering
    \includegraphics[width=\textwidth]{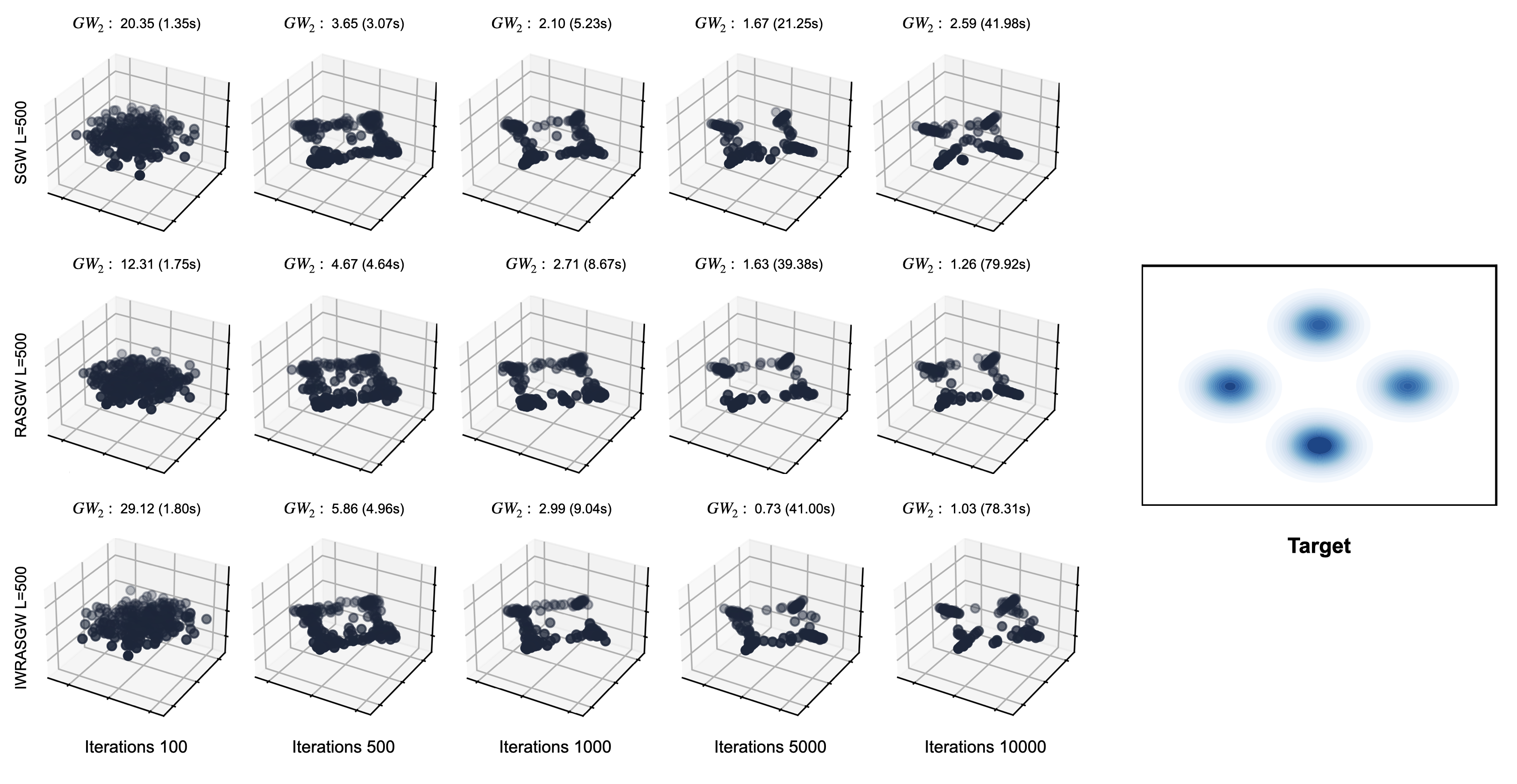}
    \caption{Generation of 3D distribution from 2D distribution for the 4-point experiment}
    \label{fig:2dto3d}
    \end{figure*}
\begin{figure*}[!ht]
    \includegraphics[width=\textwidth]{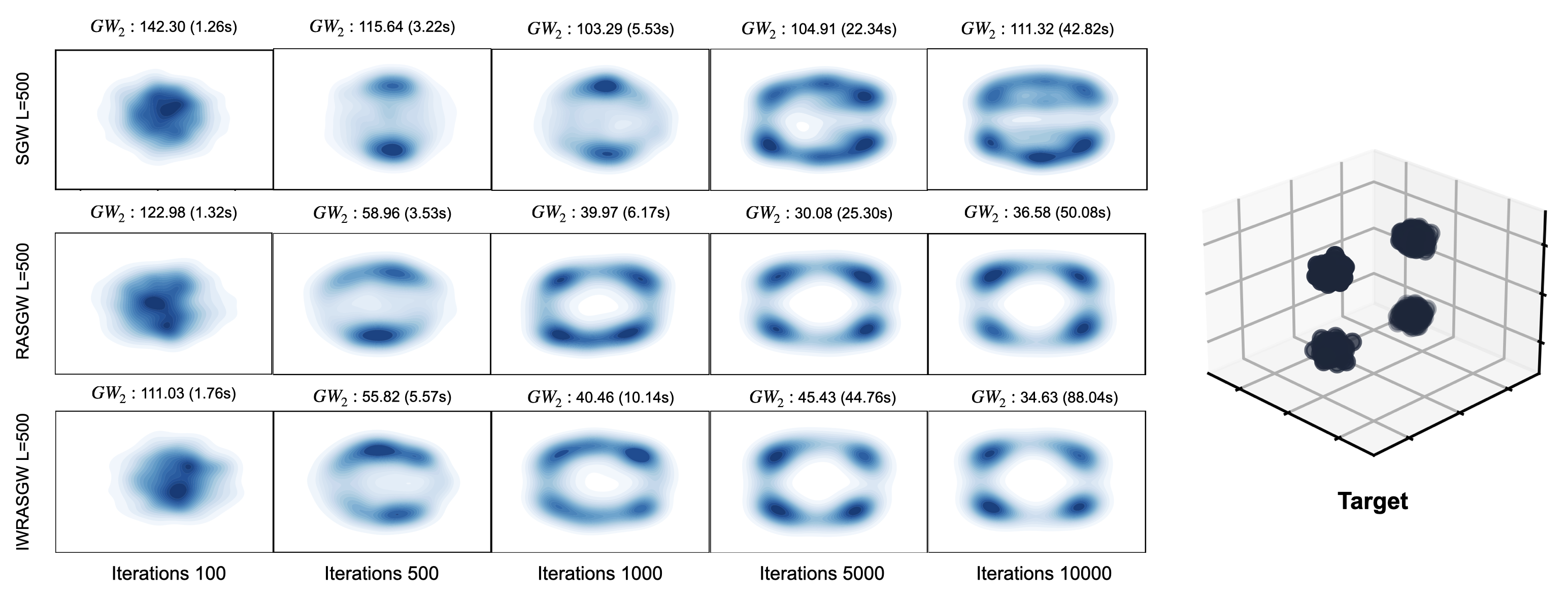}
    \caption{Generation of 2D distribution from 3D distribution for 4-point experiment}
    \label{fig:3dto2d}
\end{figure*}
\FloatBarrier
\subsubsection{The 8-point Experiment}
The Gaussian 8 experiment generates a dataset with 8 clusters arranged symmetrically around the origin. The centers of the clusters are at the points: (1, 0), (-1, 0), (0, 1), (0, -1), and four diagonal points: $( \frac{1}{\sqrt{2}}, \frac{1}{\sqrt{2}} , 
( \frac{1}{\sqrt{2}}, \frac{-1}{\sqrt{2}} ), 
( \frac{-1}{\sqrt{2}}, \frac{1}{\sqrt{2}} ), 
( \frac{-1}{\sqrt{2}}, \frac{-1}{\sqrt{2}} )$
. Each point is sampled from a Gaussian distribution with small noise around these centers, scaled by a factor of 2. This experiment generates 8 distinct clusters, and the points are randomly assigned to these clusters, with each cluster having its own corresponding label. Now, we aim to generate a 3D distribution for this target 2D distribution. It was performed for 10,000 iterations.
\begin{figure}[ht]
    \centering
    \begin{minipage}[b]{0.3\textwidth}
        \centering
        \includegraphics[width=\linewidth]{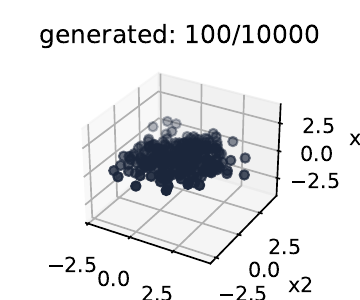}
        $\text{GW}_2:$ 19.75, time: 1.31s
    \end{minipage}
    \hfill
    \begin{minipage}[b]{0.3\textwidth}
        \centering
        \includegraphics[width=\linewidth]{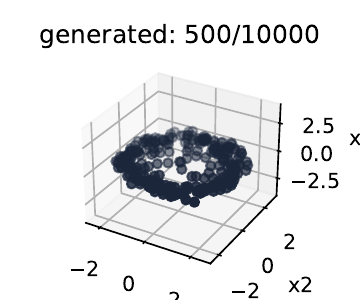}
        $\text{GW}_2:$ 4.27, time: 3.11s
    \end{minipage}
    \hfill
    \begin{minipage}[b]{0.3\textwidth}
        \centering
        \includegraphics[width=\linewidth]{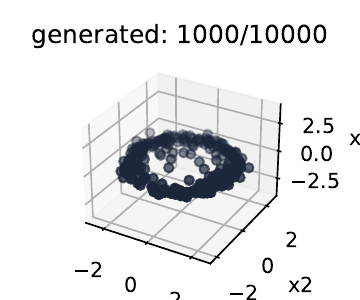}
        $\text{GW}_2:$ 5.96, time: 5.31s
    \end{minipage}
    
    \vspace{0.5cm} 
    \begin{minipage}[b]{0.3\textwidth}
        \centering
        \includegraphics[width=\linewidth]{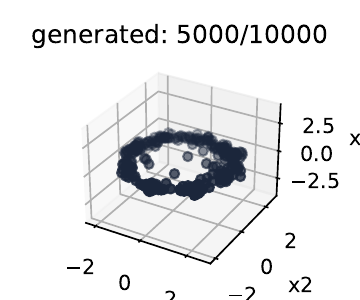}
        $\text{GW}_2:$ 7.55, time: 22.15s
    \end{minipage}
    \hfill
    \begin{minipage}[b]{0.3\textwidth}
        \centering
        \includegraphics[width=\linewidth]{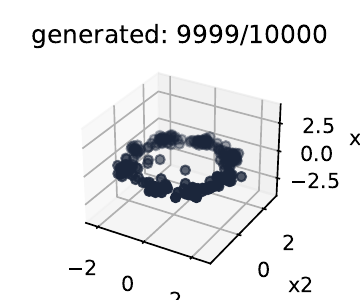}
        $\text{GW}_2:$ 8.70, time: 43.28s
    \end{minipage}
    \hfill
    \begin{minipage}[b]{0.3\textwidth}
        \centering
        \includegraphics[width=\linewidth]{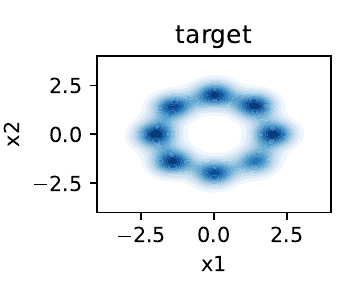}
        2D target distribution 
    \end{minipage}

    \caption{Generation of 3D distribution from 2D distribution for the 8-point experiment using SGW}
    \label{fig:6pics_SGW}
\end{figure}

\begin{figure}ht
    \centering
    \begin{minipage}[b]{0.3\textwidth}
        \centering
        \includegraphics[width=\linewidth]{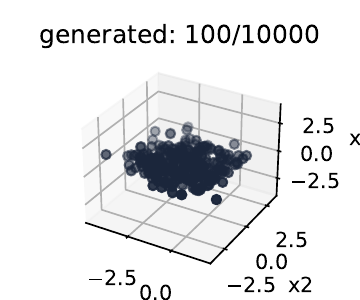}
        $\text{GW}_2:$ 24.22, time: 1.39s
    \end{minipage}
    \hfill
    \begin{minipage}[b]{0.3\textwidth}
        \centering
        \includegraphics[width=\linewidth]{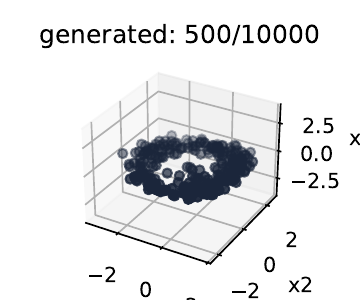}
        $\text{GW}_2:$ 3.84, time: 3.52s
    \end{minipage}
    \hfill
    \begin{minipage}[b]{0.3\textwidth}
        \centering
        \includegraphics[width=\linewidth]{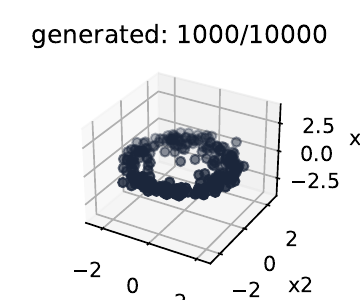}
        $\text{GW}_2:$ 6.02, time: 6.14s
    \end{minipage}
    
    \vspace{0.5cm} 
    \begin{minipage}[b]{0.3\textwidth}
        \centering
        \includegraphics[width=\linewidth]{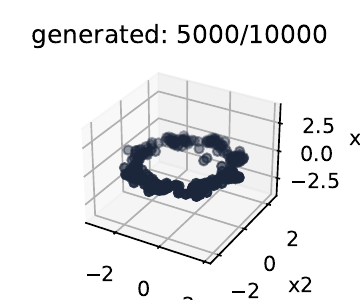}
        $\text{GW}_2:$ 3.26, time: 26.27s
    \end{minipage}
    \hfill
    \begin{minipage}[b]{0.3\textwidth}
        \centering
        \includegraphics[width=\linewidth]{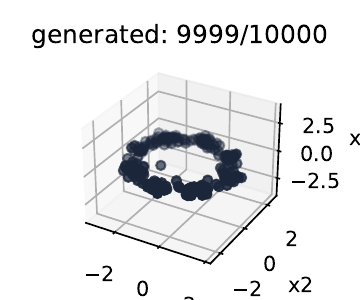}
        $\text{GW}_2:$ 4.98, time: 51.50s
    \end{minipage}
    \hfill
    \begin{minipage}[b]{0.3\textwidth}
        \centering
        \includegraphics[width=\linewidth]{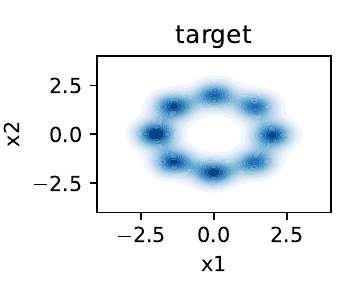}
        2D target distribution
    \end{minipage}

    \caption{Generation of 3D distribution from 2D distribution for the 8-point experiment using RASGW}
    \label{fig:6pics}
\end{figure}

\begin{figure}[ht]
    \centering
    \begin{minipage}[b]{0.3\textwidth}
        \centering
        \includegraphics[width=\linewidth]{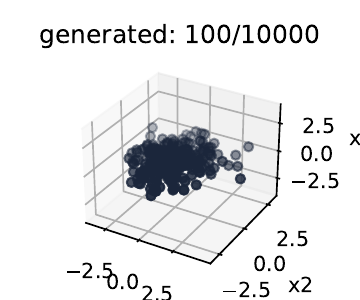}
        $\text{GW}_2:$ 21.12, time: 1.39s
    \end{minipage}
    \hfill
    \begin{minipage}[b]{0.3\textwidth}
        \centering
        \includegraphics[width=\linewidth]{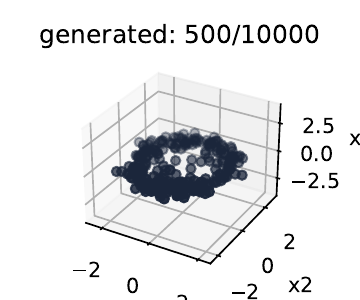}
        $\text{GW}_2:$ 4.15, time: 3.49s
    \end{minipage}
    \hfill
    \begin{minipage}[b]{0.3\textwidth}
        \centering
        \includegraphics[width=\linewidth]{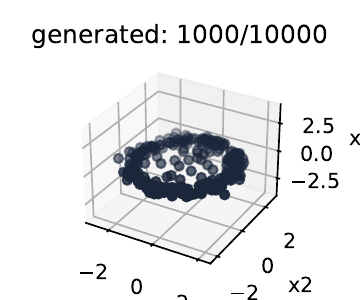}
        $\text{GW}_2:$ 3.90, time: 6.10s
    \end{minipage}
    
    \vspace{0.5cm} 
    \begin{minipage}[b]{0.3\textwidth}
        \centering
        \includegraphics[width=\linewidth]{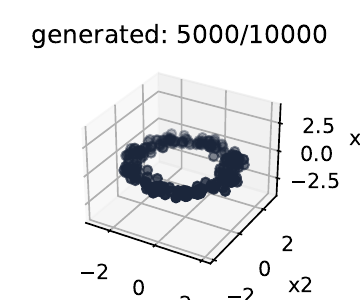}
        $\text{GW}_2:$ 4.62, time: 26.33s
    \end{minipage}
    \hfill
    \begin{minipage}[b]{0.3\textwidth}
        \centering
        \includegraphics[width=\linewidth]{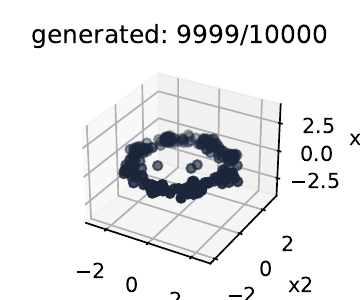}
        $\text{GW}_2:$ 3.78, time: 51.64s
    \end{minipage}
    \hfill
    \begin{minipage}[b]{0.3\textwidth}
        \centering
        \includegraphics[width=\linewidth]{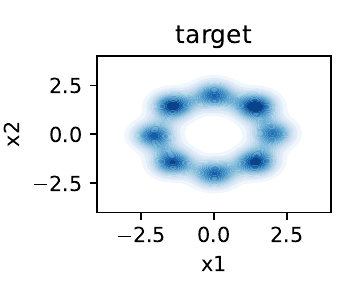}
        2D target distribution
    \end{minipage}

    \caption{Generation of 3D distribution from 2D distribution for the 8-point experiment using IWRASGW}
    \label{fig:6pics_IWRASGW}
\end{figure}
\FloatBarrier
\subsection{GWAE}
\subsubsection{Architecture}
\label{sec:architecture_gwae}
The encoder and decoder networks in the GWAE models, as proposed by \citet{nakagawa2023gromovwasserstein}, follow a convolutional and fully connected architecture. For the encoder network, the input is a $64 \times 64$ RGB image with 3 channels. The encoder consists of several convolutional layers with increasing feature map sizes and decreasing spatial dimensions, followed by fully connected layers that output the latent variables $\mu$ and $\sigma^2$. The decoder network takes these latent variables as input, first expanding the latent space through fully connected layers and then using deconvolutional layers to progressively upsample the feature maps until the image is reconstructed. The encoder and decoder both use SiLU \cite{hendrycks2023gaussianerrorlinearunits} activations, except for the final layer of the decoder, which uses a sigmoid activation function to ensure that the output is in the range $[0, 1]$.

\begin{figure}[ht]
    \centering
    \scalebox{0.65}
    { 
    \begin{tikzpicture}[node distance=1.8cm] 
        \tikzstyle{startstop} = [rectangle, rounded corners, minimum width=3cm, minimum height=1cm, text centered, draw=black, fill=red!20]
        \tikzstyle{process} = [rectangle, minimum width=3cm, minimum height=1cm, text centered, draw=black, fill=blue!15]
        \tikzstyle{arrow} = [thick,->,>=stealth]
        \node (estart) [startstop] {Input Image $(3 \times 64 \times 64)$};
        \node (e1) [process, below of=estart] {Conv (3 $\to$ 32) + SiLU $(3 \times 64 \times 64) \to (32 \times 32 \times 32)$};
        \node (e2) [process, below of=e1] {Conv (32 $\to$ 64) + SiLU $(32 \times 32 \times 32) \to (64 \times 16 \times 16)$};
        \node (e3) [process, below of=e2] {Conv (64 $\to$ 128) + SiLU $(64 \times 16 \times 16) \to (128 \times 8 \times 8)$};
        \node (e4) [process, below of=e3] {Conv (128 $\to$ 256) + SiLU $(128 \times 8 \times 8) \to (256 \times 4 \times 4)$};
        \node (fc1) [process, below of=e4] {FC (256 $\to$ 256) + SiLU $(256 \times 4 \times 4) \to 256$};
        \node (mu) [process, below of=fc1] {FC (256 $\to$ L) for $\mu$ $256 \to L$};
        \node (sigma) [process, below of=mu] {FC (256 $\to$ L) for $\sigma^2$ $256 \to L$};
        \draw [arrow] (estart) -- (e1);
        \draw [arrow] (e1) -- (e2);
        \draw [arrow] (e2) -- (e3);
        \draw [arrow] (e3) -- (e4);
        \draw [arrow] (e4) -- (fc1);
        \draw [arrow] (fc1) -- (mu);
        \draw [arrow] (fc1) -- (sigma);
        \node at (estart) [yshift=2cm] {\textbf{Encoder Network}}; 
        \node (dstart) [startstop, right of=estart, xshift=9cm] {Latent Input $L$};
        \node (d1) [process, below of=dstart] {FC (L $\to$ 256) + SiLU $L \to 256$};
        \node (d2) [process, below of=d1] {FC (256 $\to$ 256) + SiLU $256 \to 256$};
        \node (d3) [process, below of=d2] {DeConv (256 $\to$ 128) + SiLU $(256 \times 4 \times 4) \to (128 \times 8 \times 8)$};
        \node (d4) [process, below of=d3] {DeConv (128 $\to$ 64) + SiLU $(128 \times 8 \times 8) \to (64 \times 16 \times 16)$};
        \node (d5) [process, below of=d4] {DeConv (64 $\to$ 32) + SiLU $(64 \times 16 \times 16) \to (32 \times 32 \times 32)$};
        \node (dend) [process, below of=d5] {DeConv (32 $\to$ 3) + Sigmoid $(32 \times 32 \times 32) \to (3 \times 64 \times 64)$};
        \draw [arrow] (dstart) -- (d1);
        \draw [arrow] (d1) -- (d2);
        \draw [arrow] (d2) -- (d3);
        \draw [arrow] (d3) -- (d4);
        \draw [arrow] (d4) -- (d5);
        \draw [arrow] (d5) -- (dend);
        \node at (dstart) [yshift=2cm] {\textbf{Decoder Network}}; 
    \end{tikzpicture}
    }
    \caption{The architecture used in the GWAE models has been designed for $64 \times 64$ RGB images. In the experiments, the input size is set to $(\text{Channels}, \text{Height}, \text{Width}) = (3, 64, 64)$. The terms ``FC" and ``Conv" refer to fully-connected (linear) layers and convolutional layers, respectively.}
    \label{fig:encoder_decoder_flowchart_shapes}
\end{figure}
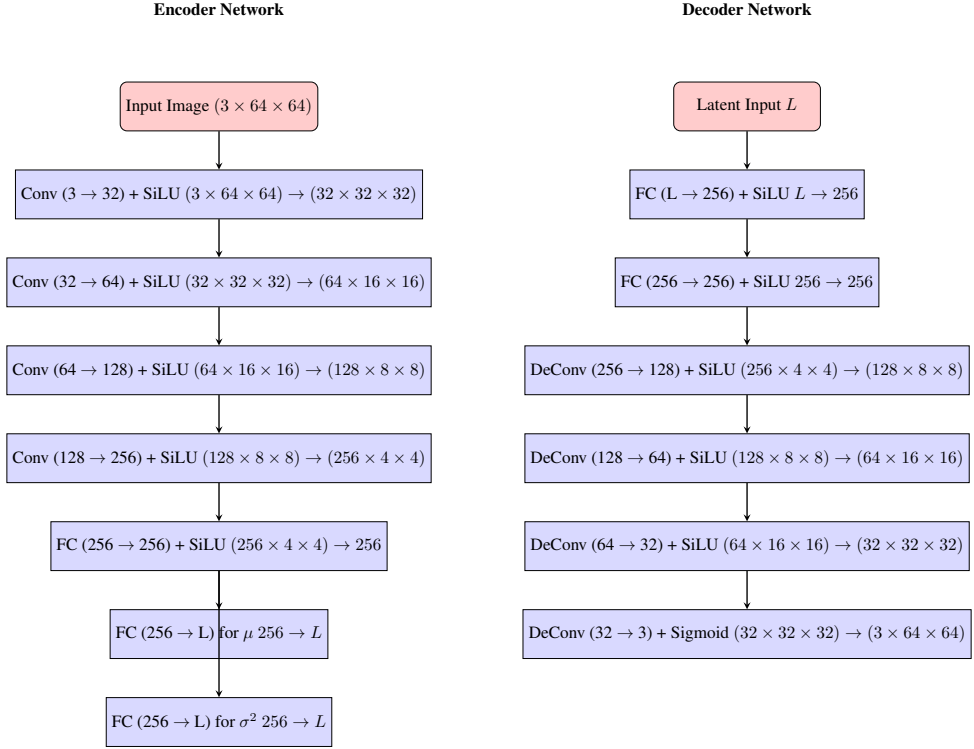

\FloatBarrier
\subsubsection{Datasets}
\textbf{Omniglot.}
The Omniglot \citep{lake2015human} dataset contains 1,623 handwritten characters from 50 different alphabets, with a total of 80,000 binary-valued images. Each image is a 105×105 pixel representation of a character, with 20 samples per character. The dataset is divided into a training set of 1,100 characters and a test set of 523 characters, making it a challenging benchmark for few-shot learning models.\newline
\textbf{CIFAR-10.}
The CIFAR \citep{krizhevsky2009learning} dataset contains 60,000 32x32 color images across 10 classes in the CIFAR-10 version and 100 classes in the CIFAR-100 version. In CIFAR-10, the images are labeled into categories such as airplane, automobile, bird, cat, dog, and other objects, with each class containing 6,000 images. The dataset is split into 50,000 training images and 10,000 test images. In CIFAR-100, the images are divided into 100 classes, with 600 images per class, grouped into 20 superclasses. It has the same training-test split as CIFAR-10, with 50,000 training and 10,000 test images. The images are 3-channel color images, with each image measuring 32x32 pixels.
\subsubsection{Evaluation metrics}
\label{sec:evaluation_metrics}
\textbf{FID score.}
Fréchet Inception Distance (FID) \citep{heusel2017gans} is commonly used as a metric to evaluate the quality of images generated by generative models. The FID score is defined as the squared 2-Wasserstein distance between the feature distributions of real and generated images. It is assumed that the features of both real and generated images follow multivariate Gaussian distributions, with the real images having mean $\mu_r$ and covariance matrix $\Sigma_r$, and the generated images having mean $\mu_g$ and covariance matrix $\Sigma_g$.

The FID score is expressed as:

\[
\text{FID} = W_2^2\left(N(\mu_r, \Sigma_r), N(\mu_g, \Sigma_g)\right) = \|\mu_r - \mu_g\|_2^2 + \text{tr}\left(\Sigma_r + \Sigma_g - 2(\Sigma_r \Sigma_g)^{\frac{1}{2}}\right),
\]

where $W_2^2$ denotes the squared Wasserstein-2 distance between the Gaussian distributions of real and generated features. The FID score quantifies the discrepancy between these distributions, and lower values are associated with better generation performance, indicating that the generated images are closer to the real images in terms of feature distribution.

As stated in \citet{heusel2017gans}, we follow \citet{nakagawa2023gromovwasserstein} in using the features for the computation of FID, which are extracted from the final pooling layer of the Inception-v3 model that has been pre-trained on the ImageNet dataset \citep{deng2009imagenet}.

\textbf{Peak Signal-to-Noise Ratio.} The Peak Signal-to-Noise Ratio (PSNR) is used for evaluating the quality of image reconstruction. The PSNR is defined as

\[
\text{PSNR} = 20 \log_{10}(\text{MAX}) - 10 \log_{10}(\text{MSE}),
\]

where \text{MAX} denotes the maximum possible pixel value, and \text{MSE} represents the Mean Squared Error (MSE), which is calculated as

\[
\text{MSE} = \frac{1}{N} \sum_{i=1}^{N} (I_i - \hat{I}_i)^2,
\]

where \(I_i\) and \(\hat{I}_i\) represent the pixel values of the original and reconstructed images, respectively, and \(N\) is the total number of pixels. In all the experiments, the value of \text{MAX} is set to 1, as the images in the dataset are scaled within the range [0, 1].

\clearpage
\subsection{Results for CIFAR-10}
\label{sec:results_cifar}
\FloatBarrier
\subsubsection{SGW}
\begin{figure}[htp!]
    \centering
    \begin{minipage}[b]{0.4\textwidth}
        \centering
        \includegraphics[width=\textwidth]{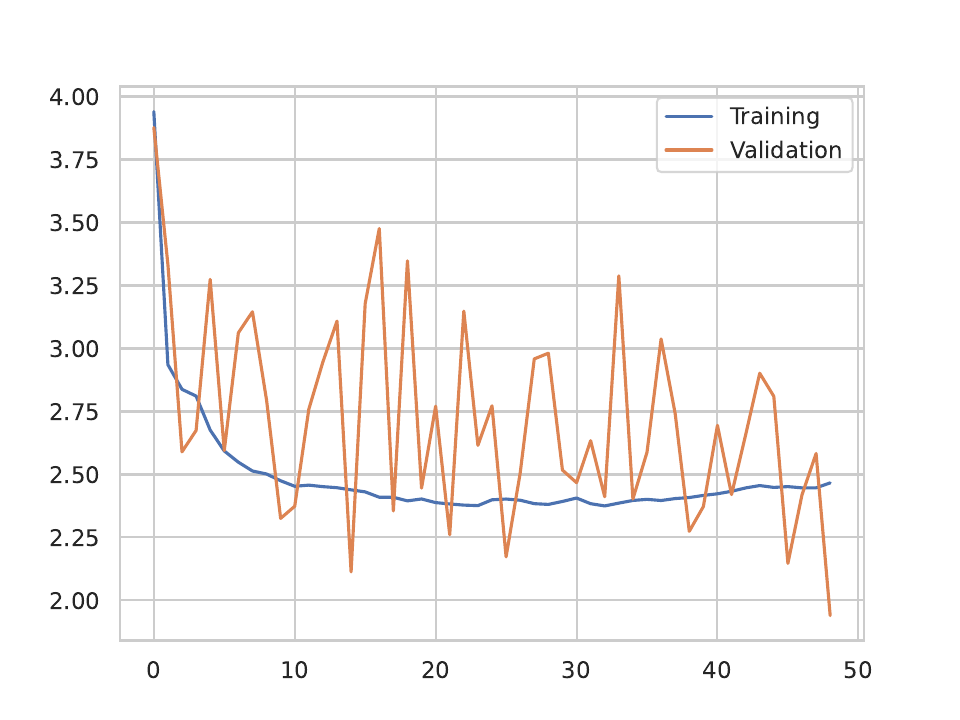} 
        a. SGW Loss
    \end{minipage}
    \hspace{0.05\textwidth} 
    \begin{minipage}[b]{0.4\textwidth}
        \centering
        \includegraphics[width=\textwidth]{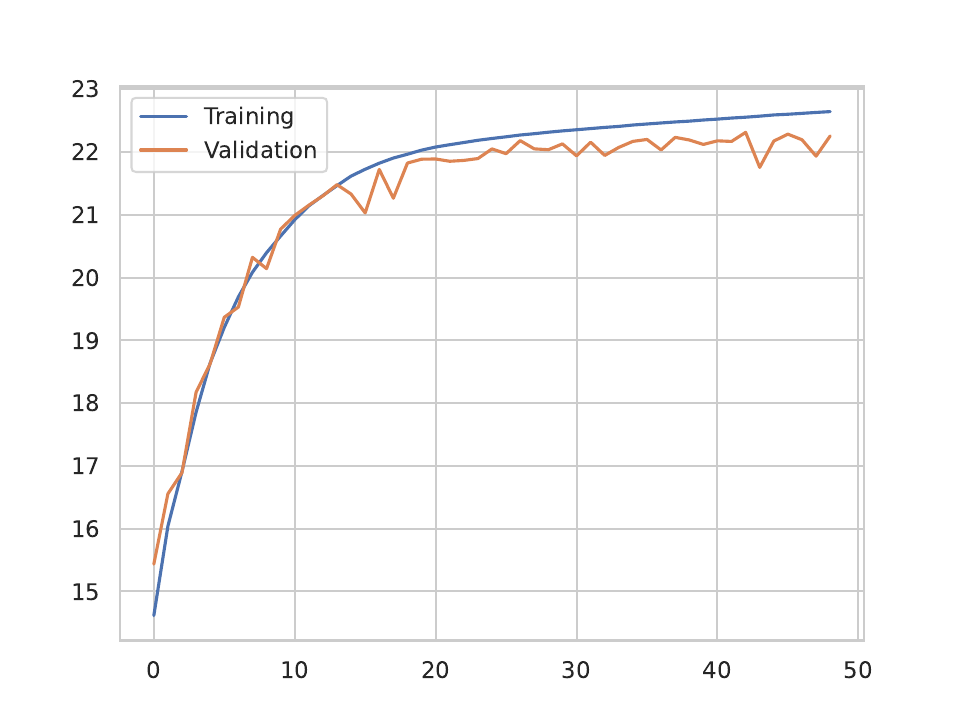} 
        b. PSNR (in dB)
    \end{minipage}

    \vspace{0.1in} 

    \begin{minipage}[b]{0.8\textwidth}
        \centering
        \includegraphics[width=\textwidth]{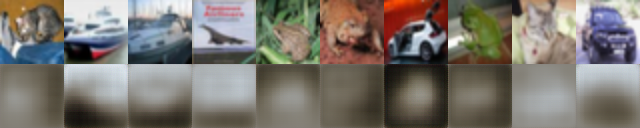} 
        c. Epoch 1
    \end{minipage}

    \vspace{0.1in} 

    \begin{minipage}[b]{0.8\textwidth}
        \centering
        \includegraphics[width=\textwidth]{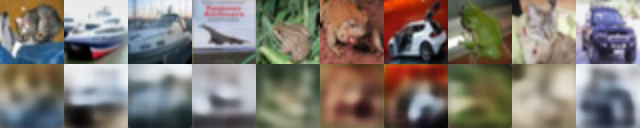} 
        d. Epoch 10
    \end{minipage}

    \vspace{0.1in} 

    \begin{minipage}[b]{0.8\textwidth}
        \centering
        \includegraphics[width=\textwidth]{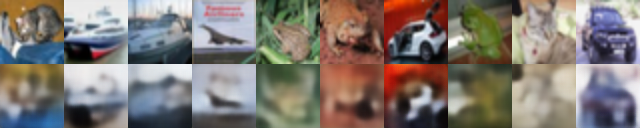} 
        e. Last(48-th) iteration
    \end{minipage}
    
    \caption{a. Variation of SGW Loss with epochs, b. Variation of PSNR with epochs, c. Reconstruction in Epoch 1, d. Reconstruction in Epoch 10, e. Reconstruction in Last Epoch, \textbf{FID :}72.87, \textbf{PSNR :}22.65, \textbf{Total time (in sec):} 22459}
\end{figure}

\FloatBarrier
\clearpage
\subsubsection{RASGW}
\begin{figure}[htp!]
    \centering
    \begin{minipage}[b]{0.4\textwidth}
        \centering
        \includegraphics[width=\textwidth]{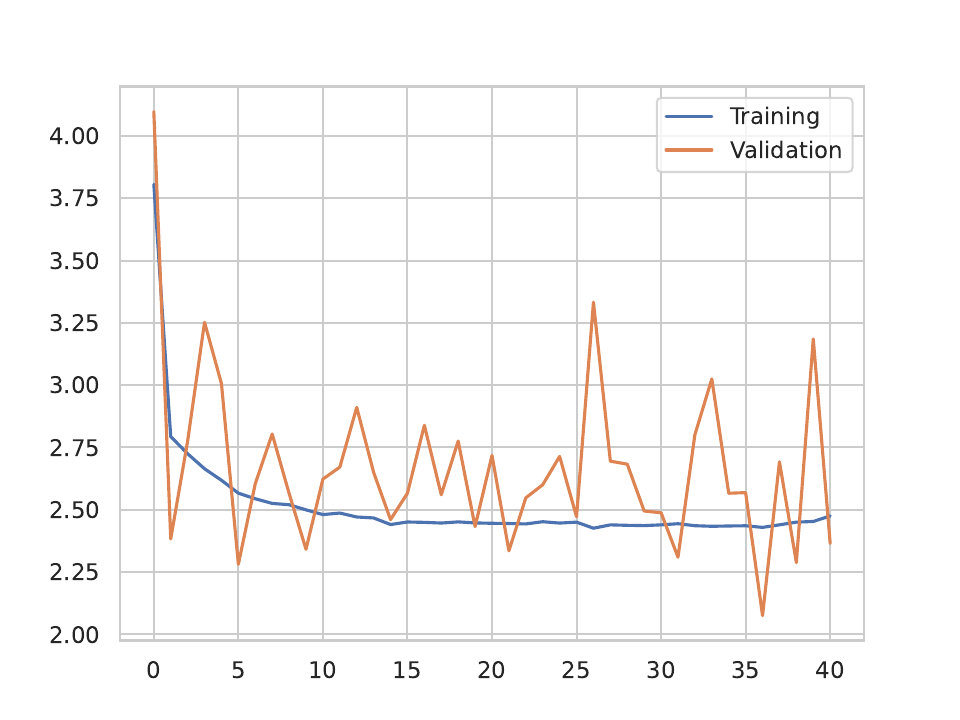} 
        a. RASGW Loss
    \end{minipage}
    \hspace{0.05\textwidth} 
    \begin{minipage}[b]{0.4\textwidth}
        \centering
        \includegraphics[width=\textwidth]{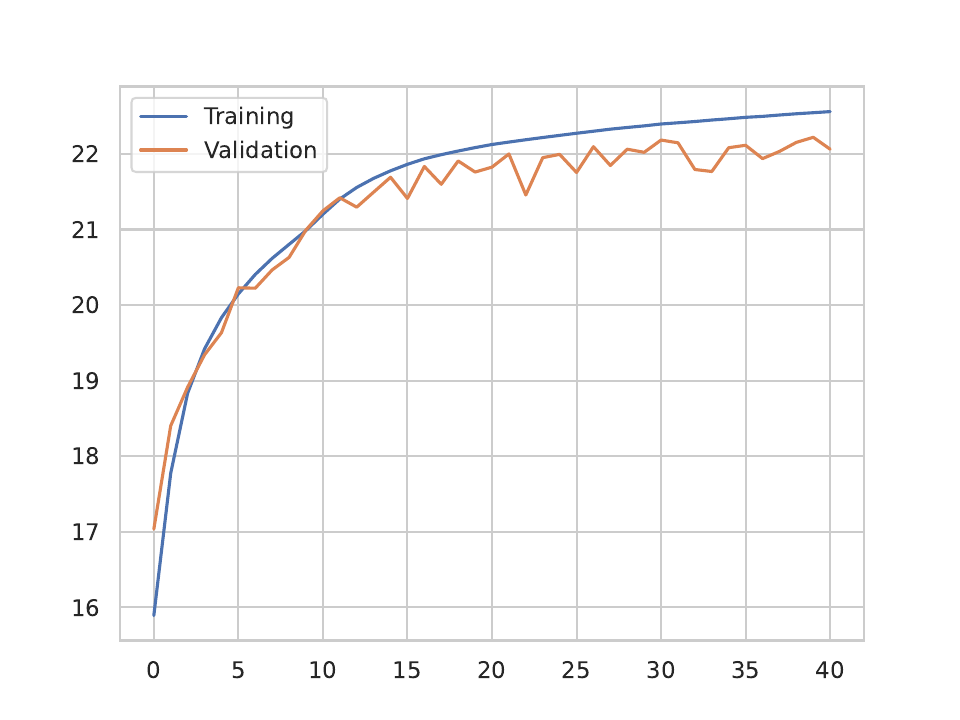} 
        b. PSNR (in dB)
    \end{minipage}

    \vspace{0.1in} 

    \begin{minipage}[b]{0.8\textwidth}
        \centering
        \includegraphics[width=\textwidth]{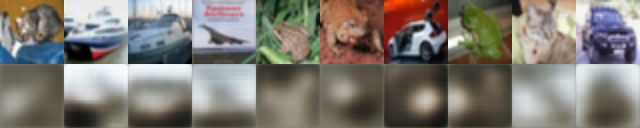} 
        c. Epoch 1
    \end{minipage}

    \vspace{0.1in} 

    \begin{minipage}[b]{0.8\textwidth}
        \centering
        \includegraphics[width=\textwidth]{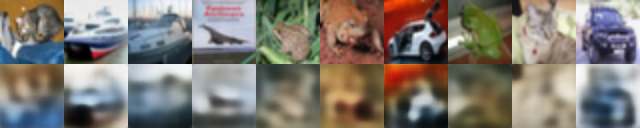} 
        d. Epoch 10
    \end{minipage}

    \vspace{0.1in} 

    \begin{minipage}[b]{0.8\textwidth}
        \centering
        \includegraphics[width=\textwidth]{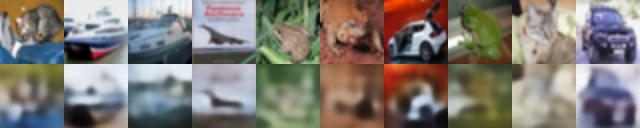} 
        e. Last(40-th) iteration
    \end{minipage}
    
    \caption{a. Variation of RASGW Loss with epochs, b. Variation of PSNR with epochs, c. Reconstruction in Epoch 1, d. Reconstruction in Epoch 10, e. Reconstruction in Last Epoch, \textbf{FID :}70.46, \textbf{PSNR :}22.70, \textbf{Total time (in sec):} 23613}
\end{figure}
\FloatBarrier
\clearpage
\subsubsection{IWRASGW}
\begin{figure}[htp!]
    \centering
    \begin{minipage}[b]{0.4\textwidth}
        \centering
        \includegraphics[width=\textwidth]{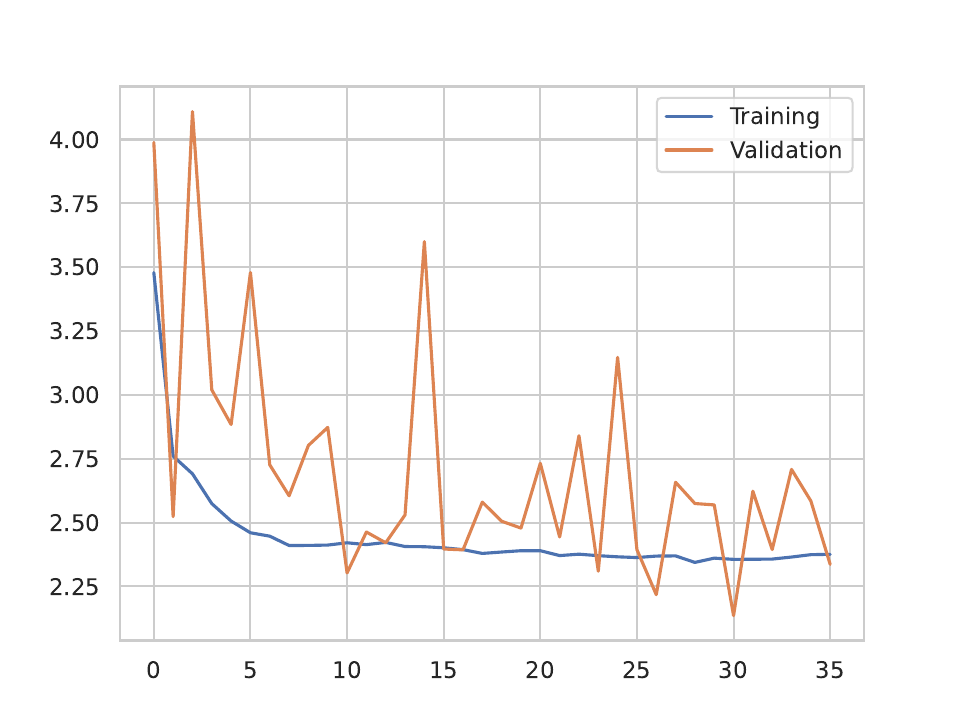} 
        a. IWRASGW Loss
    \end{minipage}
    \hspace{0.05\textwidth} 
    \begin{minipage}[b]{0.4\textwidth}
        \centering
        \includegraphics[width=\textwidth]{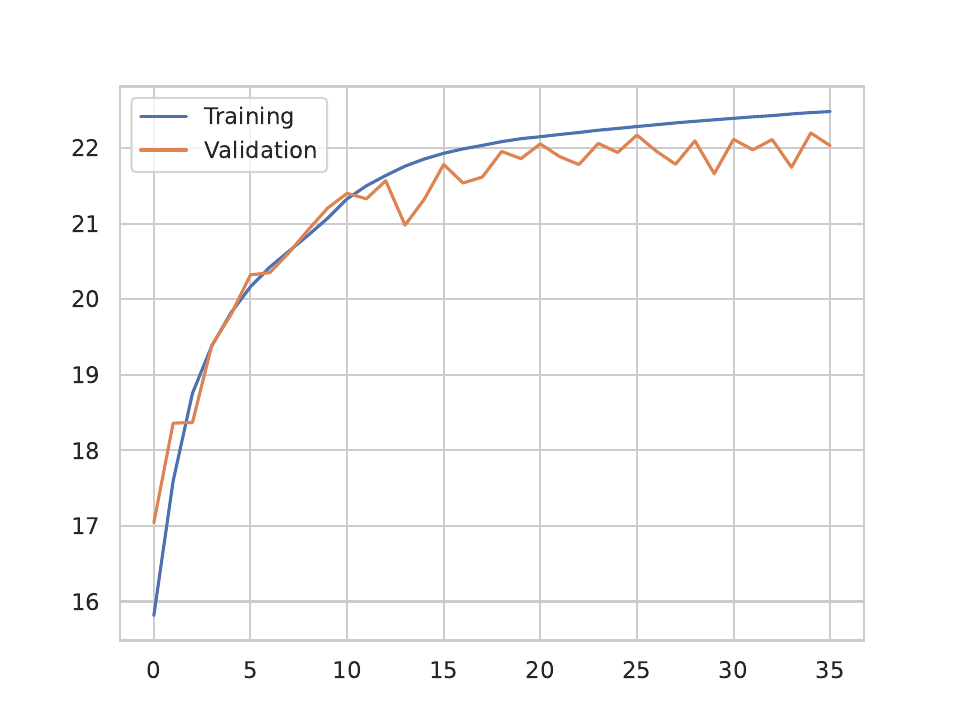} 
        b. PSNR (in dB)
    \end{minipage}

    \vspace{0.1in} 

    \begin{minipage}[b]{0.8\textwidth}
        \centering
        \includegraphics[width=\textwidth]{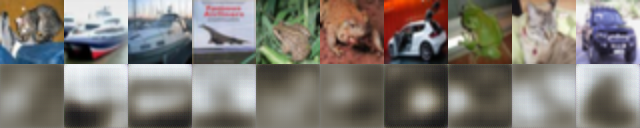} 
        c. Epoch 1
    \end{minipage}

    \vspace{0.1in} 

    \begin{minipage}[b]{0.8\textwidth}
        \centering
        \includegraphics[width=\textwidth]{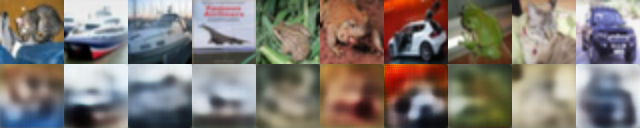} 
        d. Epoch 10
    \end{minipage}

    \vspace{0.1in} 

    \begin{minipage}[b]{0.8\textwidth}
        \centering
        \includegraphics[width=\textwidth]{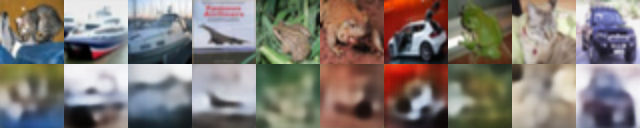} 
        e. Last(35-th) iteration
    \end{minipage}
    
    \caption{a. Variation of RASGW Loss with epochs, b. Variation of PSNR with epochs, c. Reconstruction in Epoch 1, d. Reconstruction in Epoch 10, e. Reconstruction in Last Epoch, \textbf{FID :}71.99, \textbf{PSNR :}22.73, \textbf{Total time (in sec):} 19879}
\end{figure}
\FloatBarrier
\clearpage
\subsubsection{RPSGW}
\begin{figure}[htp!]
    \centering
    \begin{minipage}[b]{0.4\textwidth}
        \centering
        \includegraphics[width=\textwidth]{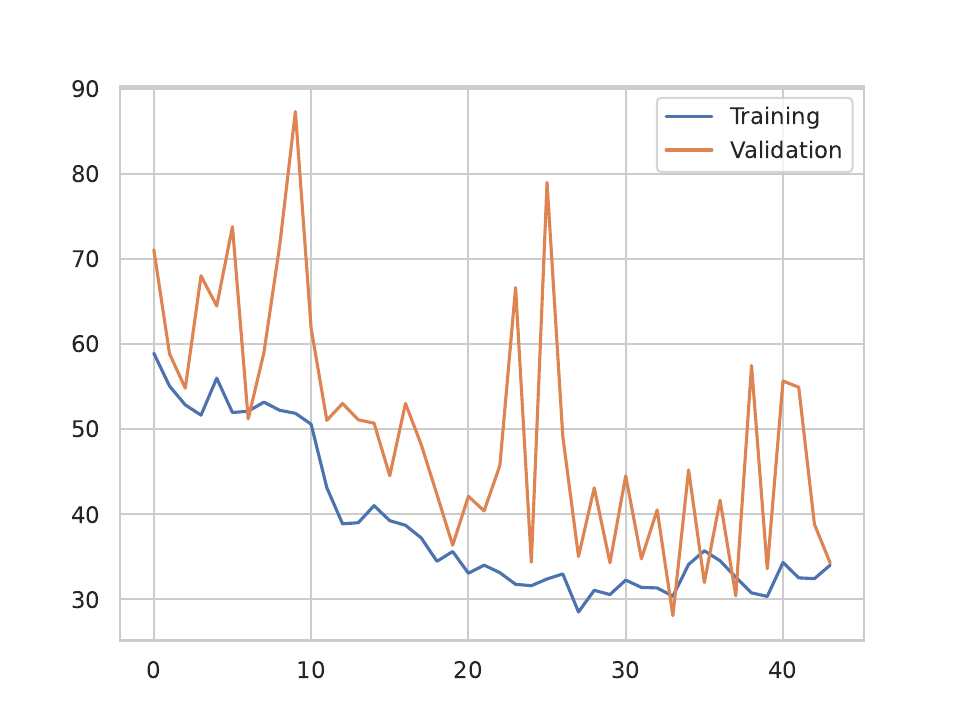} 
        a. RPSGW Loss
    \end{minipage}
    \hspace{0.05\textwidth} 
    \begin{minipage}[b]{0.4\textwidth}
        \centering
        \includegraphics[width=\textwidth]{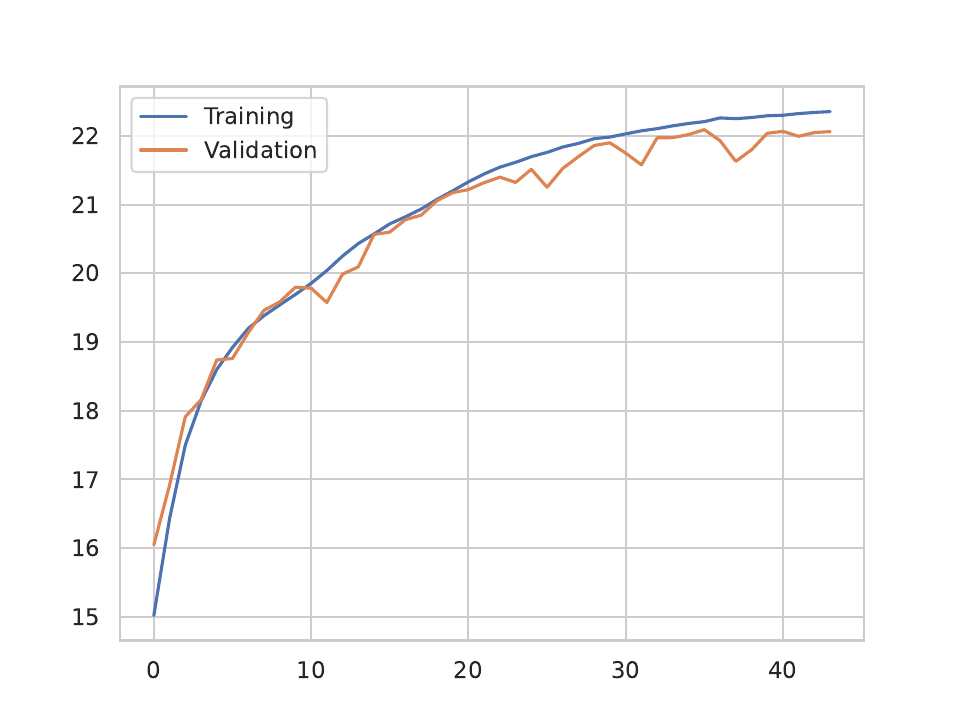} 
        b. PSNR (in dB)
    \end{minipage}

    \vspace{0.1in} 

    \begin{minipage}[b]{0.8\textwidth}
        \centering
        \includegraphics[width=\textwidth]{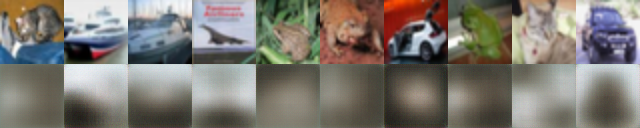} 
        c. Epoch 1
    \end{minipage}

    \vspace{0.1in} 

    \begin{minipage}[b]{0.8\textwidth}
        \centering
        \includegraphics[width=\textwidth]{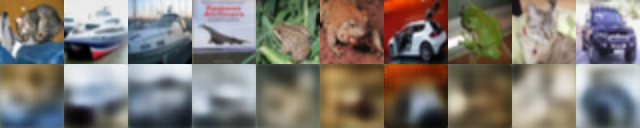} 
        d. Epoch 10
    \end{minipage}

    \vspace{0.1in} 

    \begin{minipage}[b]{0.8\textwidth}
        \centering
        \includegraphics[width=\textwidth]{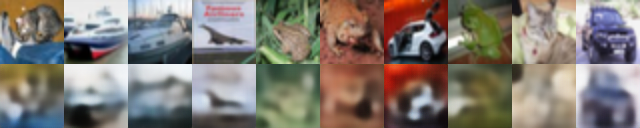} 
        e. Last(43-th) iteration
    \end{minipage}
    
    \caption{a. Variation of RPSGW Loss with epochs, b. Variation of PSNR with epochs, c. Reconstruction in Epoch 1, d. Reconstruction in Epoch 10, e. Reconstruction in Last Epoch, \textbf{FID :}80.80, \textbf{PSNR :}22.55, \textbf{Total time (in sec):} 24356}
\end{figure}
\FloatBarrier
\clearpage
\subsubsection{EBSGW}
\begin{figure}[htp!]
    \centering
    \begin{minipage}[b]{0.4\textwidth}
        \centering
        \includegraphics[width=\textwidth]{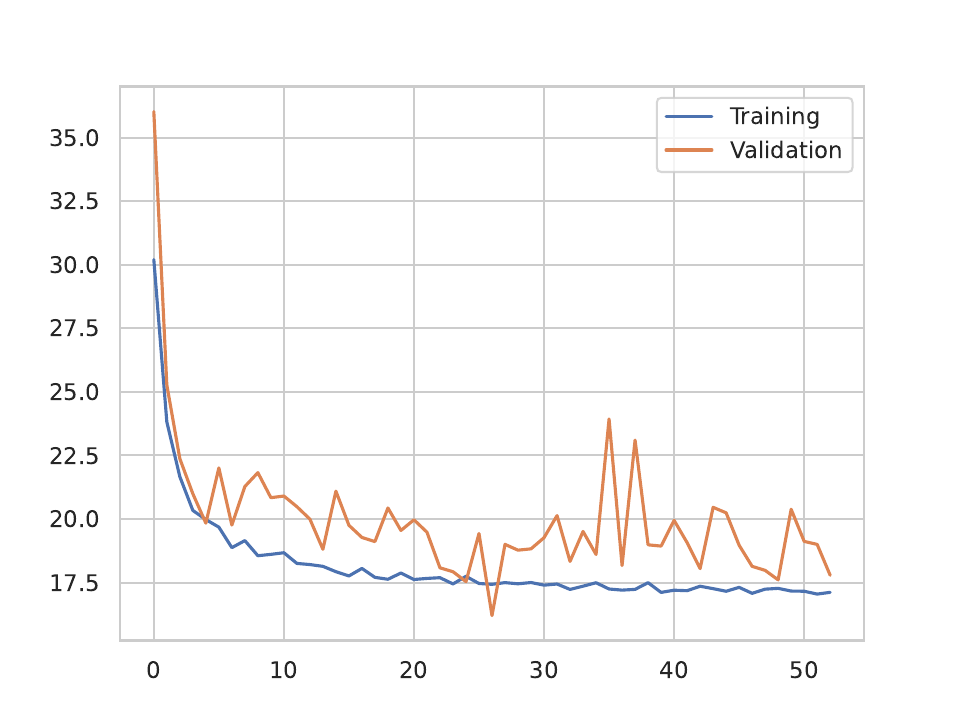} 
        a. EBSGW Loss
    \end{minipage}
    \hspace{0.05\textwidth} 
    \begin{minipage}[b]{0.4\textwidth}
        \centering
        \includegraphics[width=\textwidth]{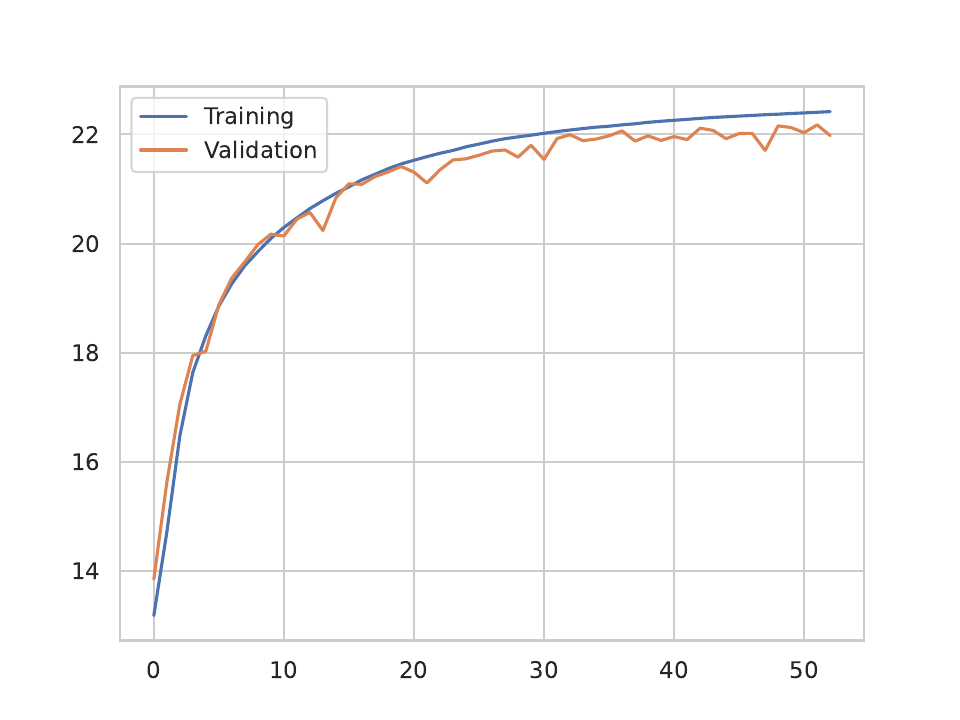} 
        b. PSNR (in dB)
    \end{minipage}

    \vspace{0.1in} 

    \begin{minipage}[b]{0.8\textwidth}
        \centering
        \includegraphics[width=\textwidth]{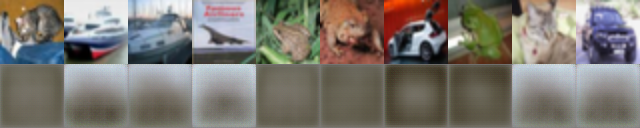} 
        c. Epoch 1
    \end{minipage}

    \vspace{0.1in} 

    \begin{minipage}[b]{0.8\textwidth}
        \centering
        \includegraphics[width=\textwidth]{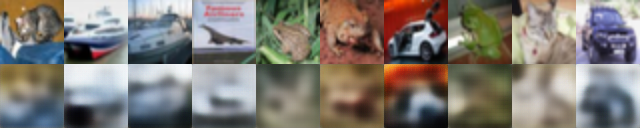} 
        d. Epoch 10
    \end{minipage}

    \vspace{0.1in} 

    \begin{minipage}[b]{0.8\textwidth}
        \centering
        \includegraphics[width=\textwidth]{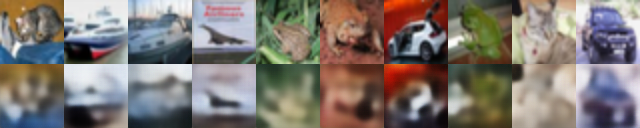} 
        e. Last(52-th) iteration
    \end{minipage}
    
    \caption{a. Variation of EBSGW Loss with epochs, b. Variation of PSNR with epochs, c. Reconstruction in Epoch 1, d. Reconstruction in Epoch 10, e. Reconstruction in Last Epoch, \textbf{FID :}78.50, \textbf{PSNR :}22.48, \textbf{Total time (in sec):} 28765}
\end{figure}
\FloatBarrier
\clearpage
\subsection{Results on Omniglot}
\label{sec:results_omniglot}
\FloatBarrier
\subsubsection{SGW}
\begin{figure}[htp!]
    \centering
    \begin{minipage}[b]{0.4\textwidth}
        \centering
        \includegraphics[width=\textwidth]{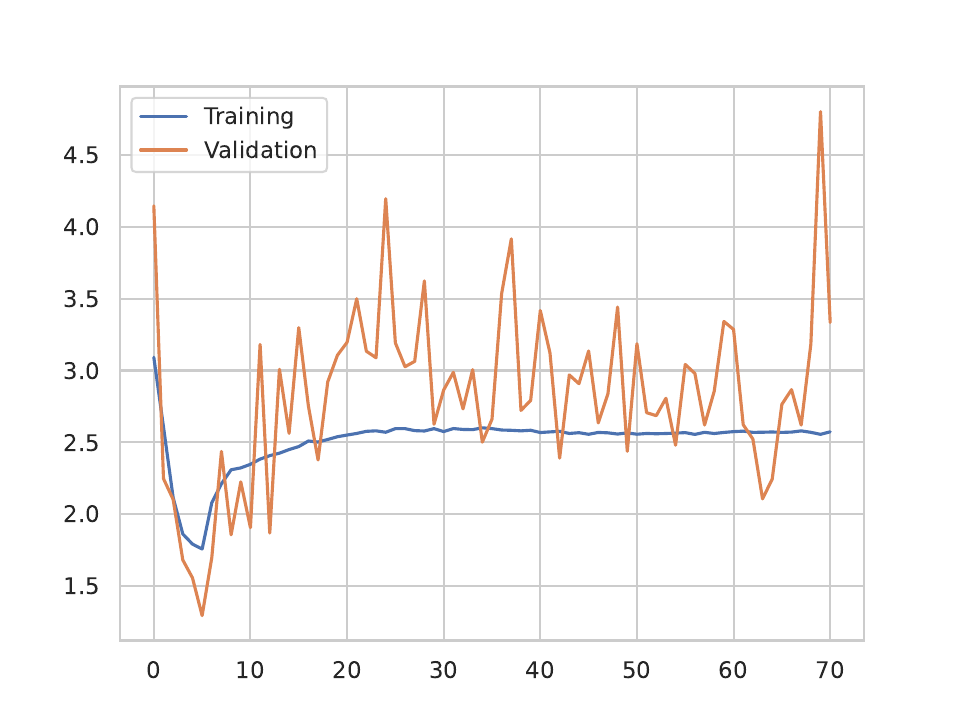} 
        a. SGW Loss
    \end{minipage}
    \hspace{0.05\textwidth} 
    \begin{minipage}[b]{0.4\textwidth}
        \centering
        \includegraphics[width=\textwidth]{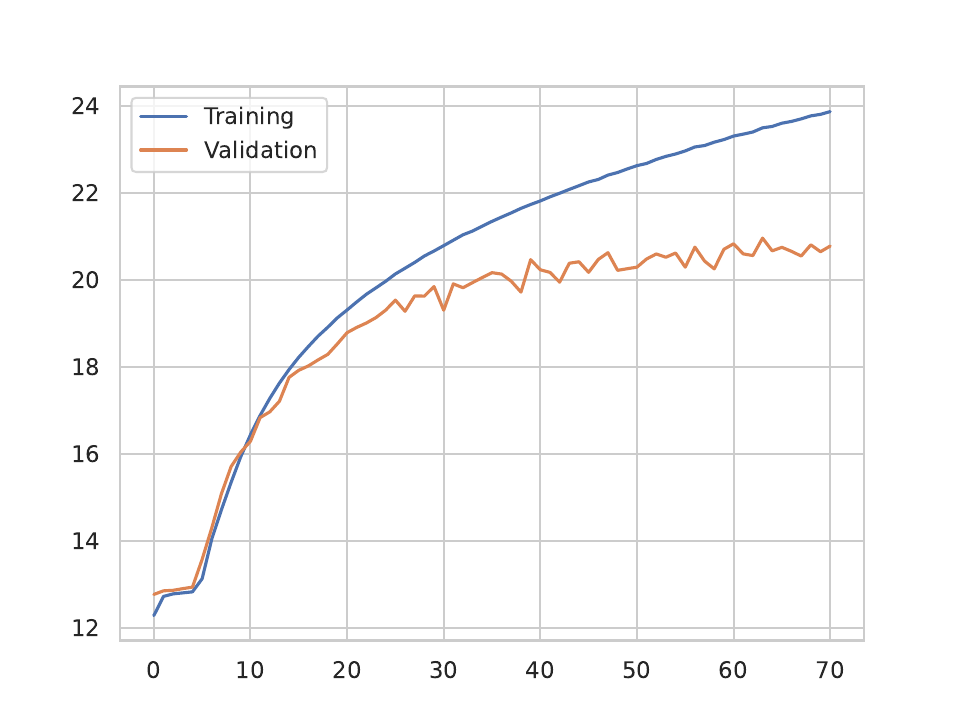} 
        b. PSNR (in dB)
    \end{minipage}

    \vspace{0.1in} 

    \begin{minipage}[b]{0.8\textwidth}
        \centering
        \includegraphics[width=\textwidth]{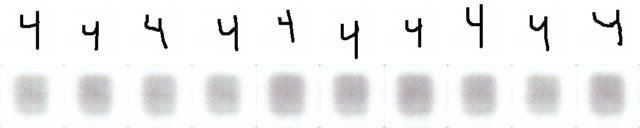} 
        c. Epoch 1
    \end{minipage}

    \vspace{0.1in} 

    \begin{minipage}[b]{0.8\textwidth}
        \centering
        \includegraphics[width=\textwidth]{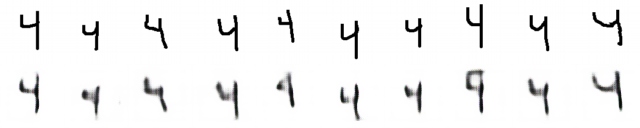} 
        d. Epoch 10
    \end{minipage}

    \vspace{0.1in} 

    \begin{minipage}[b]{0.8\textwidth}
        \centering
        \includegraphics[width=\textwidth]{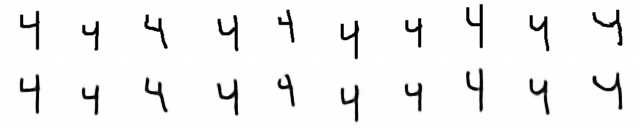} 
        e. Last(70-th) iteration
    \end{minipage}
    
    \caption{a. Variation of SGW Loss with epochs, b. Variation of PSNR with epochs, c. Reconstruction in Epoch 1, d. Reconstruction in Epoch 10, e. Reconstruction in Last Epoch, \textbf{FID :}20.95, \textbf{PSNR :}22.13, \textbf{Total time (in sec):} 16932}
\end{figure}
\FloatBarrier
\clearpage
\subsubsection{RASGW}
\begin{figure}[htp!]
    \centering
    \begin{minipage}[b]{0.4\textwidth}
        \centering
        \includegraphics[width=\textwidth]{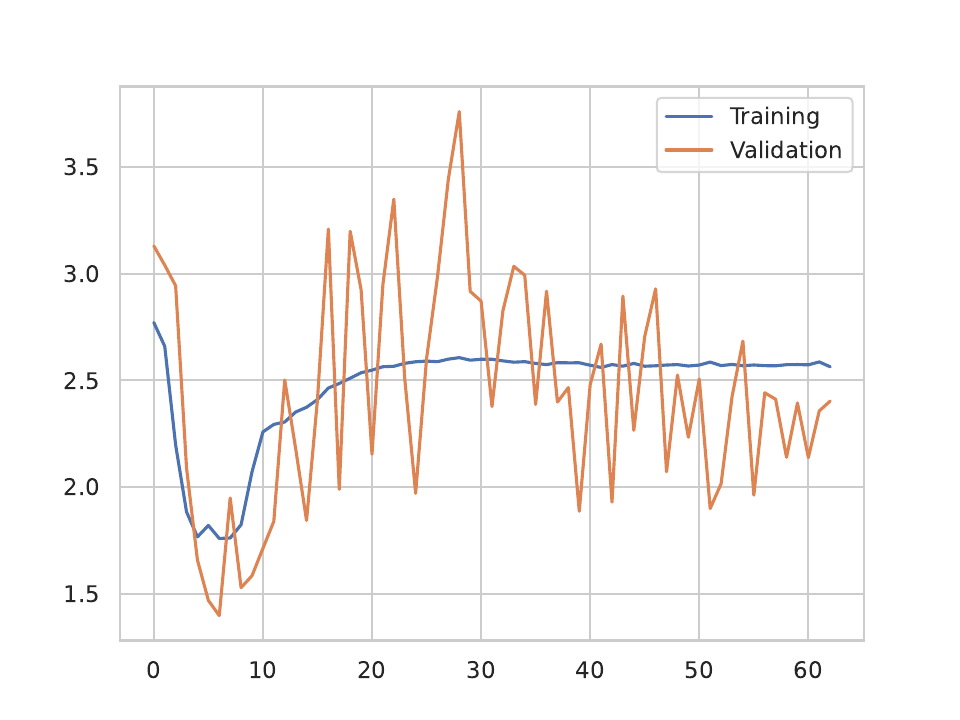} 
        a. RASGW Loss
    \end{minipage}
    \hspace{0.05\textwidth} 
    \begin{minipage}[b]{0.4\textwidth}
        \centering
        \includegraphics[width=\textwidth]{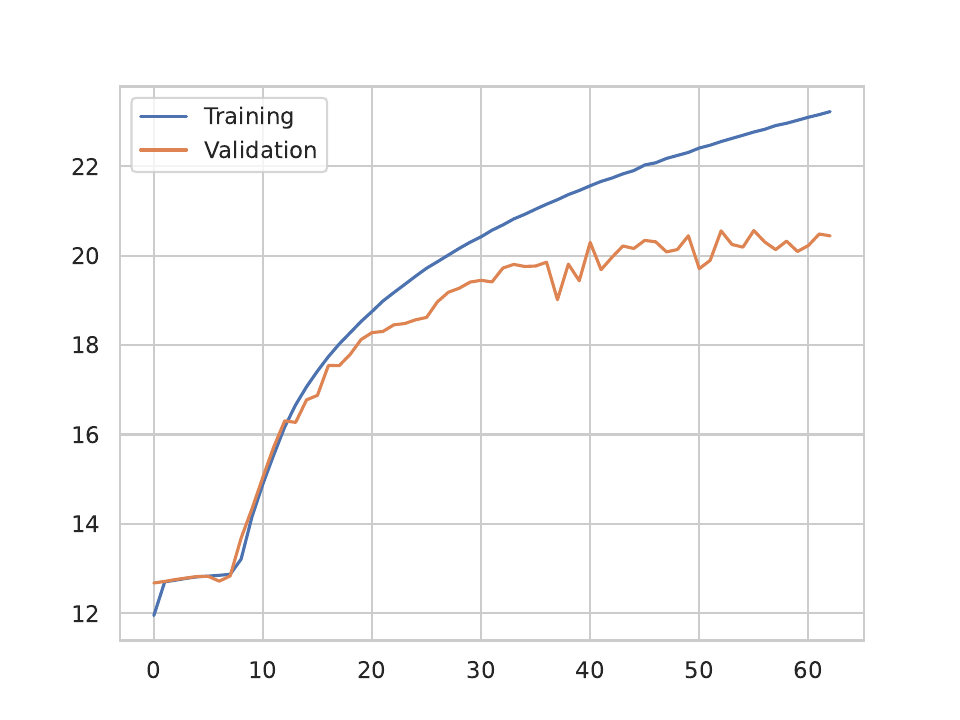} 
        b. PSNR (in dB)
    \end{minipage}

    \vspace{0.1in} 

    \begin{minipage}[b]{0.8\textwidth}
        \centering
        \includegraphics[width=\textwidth]{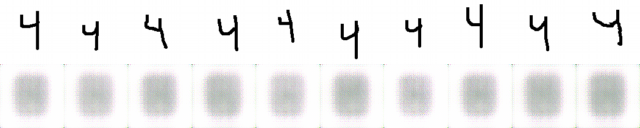} 
        c. Epoch 1
    \end{minipage}

    \vspace{0.1in} 

    \begin{minipage}[b]{0.8\textwidth}
        \centering
        \includegraphics[width=\textwidth]{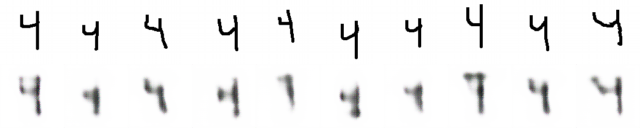} 
        d. Epoch 10
    \end{minipage}

    \vspace{0.1in} 

    \begin{minipage}[b]{0.8\textwidth}
        \centering
        \includegraphics[width=\textwidth]{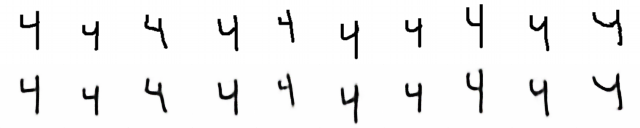} 
        e. Last(62-nd) iteration
    \end{minipage}
    
    \caption{a. Variation of RASGW Loss with epochs, b. Variation of PSNR with epochs, c. Reconstruction in Epoch 1, d. Reconstruction in Epoch 10, e. Reconstruction in Last Epoch, \textbf{FID :}21.48, \textbf{PSNR :}20.56, \textbf{Total time (in sec):} 16731}
\end{figure}
\FloatBarrier
\clearpage
\subsubsection{IWRASGW}
\begin{figure}[htp!]
    \centering
    \begin{minipage}[b]{0.4\textwidth}
        \centering
        \includegraphics[width=\textwidth]{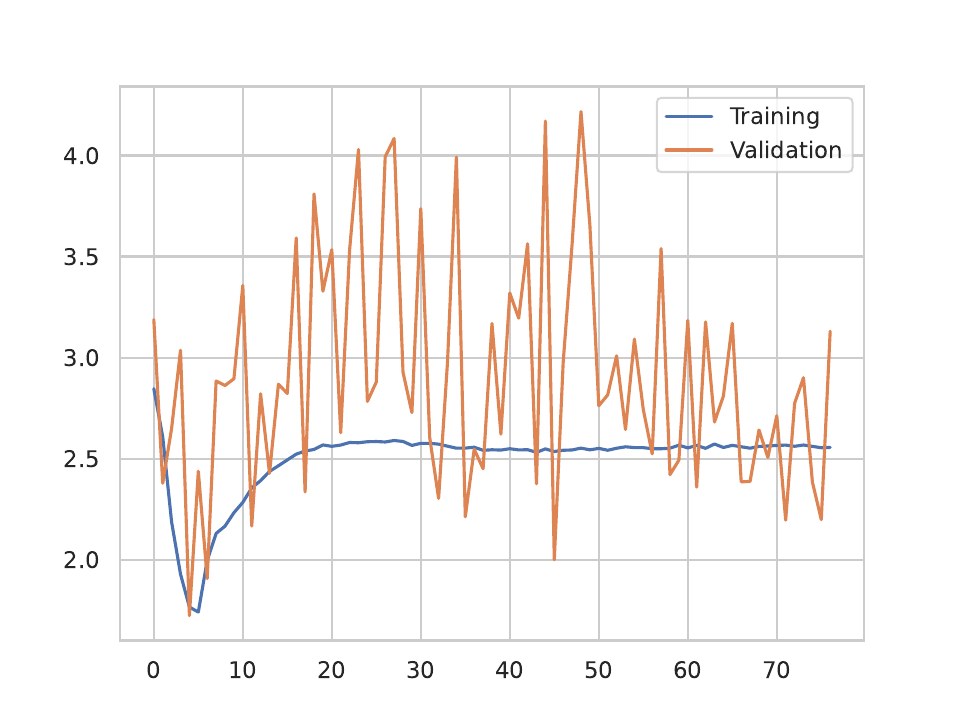} 
        a. RASGW Loss
    \end{minipage}
    \hspace{0.05\textwidth} 
    \begin{minipage}[b]{0.4\textwidth}
        \centering
        \includegraphics[width=\textwidth]{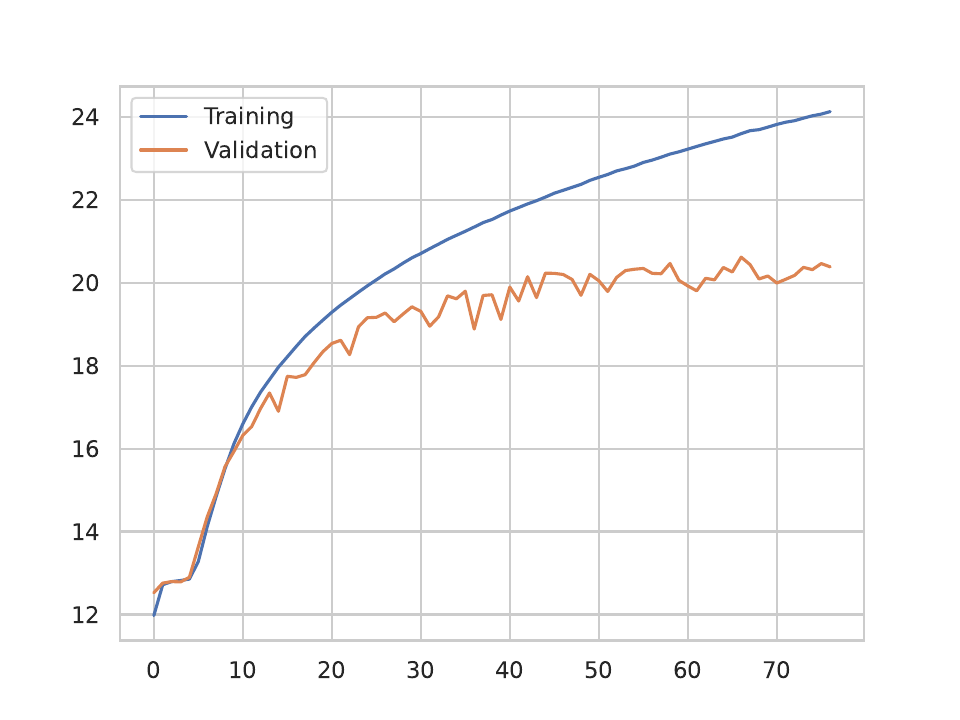} 
        b. PSNR (in dB)
    \end{minipage}

    \vspace{0.1in} 

    \begin{minipage}[b]{0.8\textwidth}
        \centering
        \includegraphics[width=\textwidth]{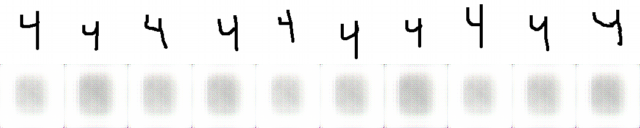} 
        c. Epoch 1
    \end{minipage}

    \vspace{0.1in} 

    \begin{minipage}[b]{0.8\textwidth}
        \centering
        \includegraphics[width=\textwidth]{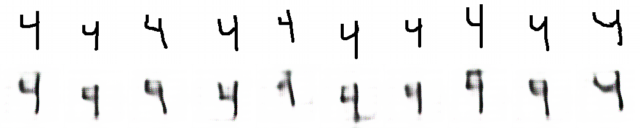} 
        d. Epoch 10
    \end{minipage}

    \vspace{0.1in} 

    \begin{minipage}[b]{0.8\textwidth}
        \centering
        \includegraphics[width=\textwidth]{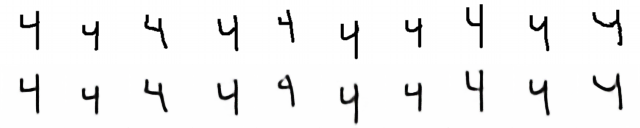} 
        e. Last(76-th) iteration
    \end{minipage}
    
    \caption{a. Variation of IWRASGW Loss with epochs, b. Variation of PSNR with epochs, c. Reconstruction in Epoch 1, d. Reconstruction in Epoch 10, e. Reconstruction in Last Epoch, \textbf{FID :}20.25, \textbf{PSNR :}21.87, \textbf{Total time (in sec):} 18972}
\end{figure}

\FloatBarrier
\section{Computational Infrastructure}
All the experiments of GWGAN (Gromov-Wasserstein Generative Adversarial Network) were performed on a single NVIDIA RTX 3090 GPU (24 GB), and all the experiments of GWAE (Gromov-Wasserstein Auto-Encoder) were performed on a single NVIDIA GeForce GTX Titan Xp (12 GB).

\label{sec:comp_infra}

\end{document}